\newcommand{\boldX}{\mathbf{X}}
\newcommand{\boldY}{\mathbf{Y}}
\newcommand{\clustertree}{\mathcal{C}}
\newcommand{\Ballfamily}{\mathcal{B}}
\newcommand{\Indicator}{\mathbb{I}}
\newtheorem{theorem}{Theorem}
\newtheorem{lemma}[theorem]{Lemma}
\newtheorem{definition}[theorem]{Definition}
\newtheorem{assumption}{Assumption}
\newcommand{\bbE}{\mathbb{E}}
\newcommand{\bbI}{\mathbb{I}}
\newcommand{\bbR}{\mathbb{R}} 
\newcommand{\bbS}{\mathbb{S}}
\newcommand{\calC}{{\cal C}}
\newcommand{\calN}{{\cal N}}
\newcommand{\calX}{{\cal X}}
\let \hat \widehat
\let \tilde \widetilde
\newcommand{\vol}{\operatorname{vol}}
\newcommand{\Reals}{\bbR}
\newcommand{\Sphere}{\bbS}
\newcommand{\dist}{d}
\newcommand{\Ball}{B}
\newcommand*{\defeq}{\mathrel{\vcenter{\baselineskip0.5ex \lineskiplimit0pt
                     \hbox{\scriptsize.}\hbox{\scriptsize.}}}%
                     =}
\newcommand{\Net}{\calN}
\begin{document}

\title{Cluster Trees on Manifolds}
\runtitle{Cluster Trees on Manifolds}

\begin{aug}
\author{\fnms{Sivaraman} \snm{Balakrishnan}\ead[label=e1]{sbalakri@cs.cmu.edu}},
\author{\fnms{Srivatsan} \snm{Narayanan}\ead[label=e2]{srivatsa@cs.cmu.edu}},
\author{\fnms{Alessandro} \snm{Rinaldo}\ead[label=e4]{arinaldo@cmu.edu}},
\author{\fnms{Aarti} \snm{Singh}\ead[label=e5]{aarti@cs.cmu.edu}}
\and
\author{\fnms{Larry} \snm{Wasserman}\corref{}
\ead[label=e6]{larry@cmu.edu}}

\runauthor{Balakrishnan et al}

\affiliation{
    School of Computer Science and Statistics Department\\
    Carnegie Mellon University
}

\address{
    School of Computer Science\\
    Carnegie Mellon University\\
    Pittsburgh, PA 15213\\
    \printead{e1}\\
    \printead{e2}\\
    \printead{e5}
}

\address{
    Department of Statistics\\
    Carnegie Mellon University\\
    Pittsburgh, PA 15213\\
    \printead{e4}\\
    \printead{e6}
}
\end{aug}
\begin{quote}
In this paper
we investigate the problem of estimating the cluster tree
for a density $f$ supported on or near a smooth $d$-dimensional
manifold $M$ isometrically embedded in $\mathbb{R}^D$.
We analyze a modified version of a $k$-nearest neighbor
based algorithm recently proposed by
Chaudhuri and Dasgupta \cite{chaudhuri10}.
The main results of this paper show that 
under mild assumptions on $f$ and $M$, 
we obtain rates of convergence that depend on $d$ only but not on the ambient dimension $D$.
We also show that similar (albeit non-algorithmic) results can be obtained for kernel density estimators.
We sketch a construction of a sample complexity lower bound instance 
for a natural class of \emph{manifold oblivious} clustering algorithms.
We further briefly consider the \emph{known} manifold case and show that
in this case a spatially adaptive algorithm achieves better rates.
\end{quote}
\section{Introduction}
In this paper, we study the problem of
estimating the cluster tree of a density
when the density is supported on or near a manifold.
Let $\boldX := \{X_1,\ldots, X_n\}$ be a sample drawn 
i.i.d. from a distribution $P$ with density $f$.
The connected components
$\mathbb{C}_f(\lambda)$
of the
upper level set
$\{x: f(x) \geq \lambda\}$ are called
{\em density clusters.}
The collection
${\cal C} =\{\mathbb{C}_f(\lambda): \lambda \geq 0 \}$
of all such clusters is called the
{\em cluster tree} and estimating this cluster tree is referred
to as {\em density clustering}.

The density clustering paradigm is attractive for various reasons. 
One of the main difficulties of clustering is 
that often the true goals of clustering are not clear 
and this makes clusters, and clustering as a task seem poorly defined.
Density clustering however is estimating a well defined population quantity,
making its goal, consistent recovery of the \emph{population} 
density clusters, clear.
 Typically only mild assumptions
 are made on the density $f$ and this allows extremely general shapes and 
 numbers of clusters at each level.
 Finally, the \emph{cluster tree} is an inherently hierarchical object and
 thus density clustering algorithms typically do not require specification of the ``right'' level, rather
 they capture a summary of the density across all levels.

The search for a simple, statistically consistent estimator
of the cluster tree has a long history.
Hartigan \cite{Hartigan81} showed that
the popular single-linkage algorithm is not consistent for
a sample from $\mathbb{R}^D$, with $D > 1$.
Recently,
Chaudhuri and Dasgupta \cite{chaudhuri10}
analyzed an algorithm which is both simple and consistent.
The algorithm finds the connected components of a sequence of
carefully constructed neighborhood graphs.
They showed that, as long as the parameters
of the algorithm are chosen appropriately, 
the resulting collection of connected components
correctly estimates the cluster tree with high probability.

In this paper, we are concerned with the problem
of estimating the cluster tree when the density $f$
is supported on or near a low dimensional manifold.
The motivation for this work stems from the problem of devising and analyzing clustering algorithms with provable performance that can be used in high dimensional applications. When data live in high dimensions, clustering (as well as other statistical tasks) generally become prohibitively difficult due to the curse of dimensionality, which demands a very large sample size. In many high dimensional applications however data is not spread uniformly but
rather concentrates around a low dimensional set. This so-called manifold hypothesis motivates
the study of data generated on or near low dimensional manifolds and the
 study of procedures that can adapt effectively to the intrinsic dimensionality of this data.

\subsection{Contributions}
Here is a brief summary of the main contributions
of this paper:
\begin{enumerate}
\item We show that the simple algorithm studied in
\cite{chaudhuri10} is consistent and has fast rates of
convergence for data on or near a low dimensional
manifold $M$. The algorithm does not
require the user to first estimate $M$
(which is a difficult problem).
In other words, the algorithm adapts to the (unknown) manifold.
\item We show that the sample complexity
for identifying salient clusters is independent of the ambient dimension.
\item We sketch a construction of a sample complexity lower bound instance 
for a natural class of clustering algorithms that we study in this paper.
\item We show that in the \emph{known} manifold case a modified 
\emph{spatially adaptive} algorithm
achieves better rates, similar to the near minimax-optimal rates of \cite{chaudhuri10}.
\item We introduce a framework for studying consistency of clustering when the distribution
is not supported on a manifold but rather, is concentrated near a manifold.
The generative model in this case is that
the data are first sampled from a distribution
on a manifold and then noise is added.
The original data are latent (unobserved).
We show that for certain noise models we can still efficiently recover the cluster tree on the \emph{latent}
samples.
\item We show similar \emph{statistical} results for the level sets of kernel density
estimates for an appropriately chosen bandwidth. \emph{Computing} the level
sets of the kernel density estimate is however a challenging problem that we
do not address in this paper.
\item We present some simulations to confirm our theoretical results.
\end{enumerate}

\subsection{Related Work}
The idea of using probability density functions for clustering dates
back to Wishart \cite{wishart:69}.
\cite{Hartigan81} expanded on
this idea and formalized the notions of high-density clustering, of
the cluster tree and of consistency and fractional consistency of
clustering algorithms.
In particular,
\cite{Hartigan81}
showed that single linkage clustering is consistent
when $D=1$ but is only fractionally consistent when $D>1$.
\cite{SN:10} and \cite{Stuezle03} have also proposed procedures
for recovering the cluster tree. None of these procedures however, come
with the theoretical guarantees given by \cite{chaudhuri10}, which
demonstrated that a generalization of
Wishart's algorithm allows one to estimate parts of the
cluster tree for distributions with full-dimensional support near-optimally
under rather mild assumptions.
This paper forms the starting point for our work and is
reviewed in more detail in the next section.

In the last two decades, much of the research effort involving the use
of nonparametric density estimators for clustering has focused on the
more specialized problems of optimal estimation of the support of the
distribution or of a fixed level set.  However, consistency of
estimators of a fixed level set does not imply cluster tree
consistency, and extending the techniques and analyses mentioned above
to hold simultaneously over a variety of density levels is
non-trivial.  See, e.g.,
\cite{Polonik:95, tsybakov97, Walther:97, Cuevas.Fraiman:97, 
Cuevas2006, RigVer09, MaierHL09, singh09, Rinaldo2010, stability}, 
and references therein.  
%
%
Estimating the cluster tree has more recently been considered in
\cite{kpotufe11} which also gives a simple pruning procedure for
removing spurious clusters.
\cite{Steinwart11,SriperumbudurS12} propose procedures
 for determining recursively the lowest
split in the
cluster tree and give conditions for asymptotic consistency with
minimal assumptions on the density. 

\section{Background and Assumptions}
\label{section::assumptions}
Let $P$ be a distribution supported on
an unknown $d$-dimensional manifold $M$.
We assume that the manifold $M$ is a $d$-dimensional Riemannian
manifold without boundary embedded in a compact set $\mathcal{X}
\subset \Reals^D$ with $d<D$.
We further assume that the volume of the
manifold is bounded from above by a constant, i.e.,
$\vol_d(M)\leq C$.
The main regularity condition
we impose on $M$ is that its condition number be not
too large. The \emph{condition number} of $M$ is $1/\tau$, 
where $\tau$ is the largest number
such that the open normal bundle
about $M$ of radius $r$ is imbedded in $\mathbb{R}^D$
for every $r < \tau$. The condition number controls the curvature
of $M$ and prevents it from being too close to being self-intersecting
(see \cite{Niyogi2006} for a detailed treatment).

The Euclidean norm is denoted by $\| \cdot \|$ and $v_d$ 
denotes the volume of the $d$-dimensional unit ball in $\mathbb{R}^d$.
$B(x,r)$ denotes the full-dimensional ball of radius $r$ centered at
$x$
and $B_M(x,r) \defeq B(x,r) \cap M$. 
For $Z \subset \Reals^d$ and $\sigma > 0$, define $Z_\sigma = Z + B(0,\sigma)$
and $Z_{M,\sigma} = (Z + B(0,\sigma)) \cap M$.
Note that $Z_\sigma$ is full dimensional, while if $Z \subseteq M$ then $Z_{M,\sigma}$ is
$d$-dimensional.

Let $f$ be the density of $P$ with respect to the uniform measure on $M$.
For $\lambda \geq 0$, let $\mathbb{C}_f(\lambda)$ be the collection of connected components of the level set  $\{x \in \calX: f(x) \geq
\lambda \}$ and define the {\it cluster tree} of $f$ to be the hierarchy   $\mathcal{C} = \{ \mathbb{C}_f
(\lambda): \lambda \geq 0 \}$.
For a fixed $\lambda$, any member of
$\mathbb{C}_f(\lambda)$ is a cluster.
For a cluster $C$ its restriction to the
sample $\boldX$ is defined to be $C[\boldX] = C
\cap \boldX$.  The restriction of the cluster tree
$\mathcal{C}$ to $\boldX$ is defined to be
$\mathcal{C}[\boldX] = \{C \cap \boldX: C \in
\clustertree\}$.  Informally, this restriction is a dendrogram-like
hierarchical partition of $\boldX$.

To give finite sample results, following \cite{chaudhuri10}, we define
the notion of salient clusters. Our definitions are slight modifications of
those in \cite{chaudhuri10} to take into account the manifold assumption.

\begin{definition}
Clusters $A$ and $A'$ are
$(\sigma,\epsilon)$ separated if there exists a nonempty $S \subset M$ such that:
\begin{enumerate}
\item Any path along $M$ from $A$ to $A'$ intersects $S$.
\item $\sup_{x \in S_{M,\sigma}} f(x) < (1 - \epsilon) \inf_{x \in A_{M,\sigma} \cup A'_{M,\sigma}} f(x)$.
\end{enumerate}
\end{definition}

Chaudhuri and Dasgupta \cite{chaudhuri10}
analyze a robust single linkage (RSL) algorithm (in Figure \ref{fig::CD}).
An RSL algorithm estimates the connected components
at a level $\lambda$ in two stages. In the first stage, the sample is
\emph{cleaned} by thresholding the $k$-nearest neighbor distance
of the sample points at a radius $r$ and then, in the second stage, the cleaned sample is \emph{connected} at a connection
radius $R$. The connected components of the resulting graph give an estimate of the restriction
$\mathbb{C}_f(\lambda)[\boldX]$. In Section \ref{sec::lb} we prove a sample complexity lower bound
for the \emph{class of RSL algorithms} which we now define.

\begin{definition}
\label{def::rsl}
The \emph{class of RSL algorithms} refers to any algorithm
that is of the form described in the algorithm in Figure \ref{fig::CD} and relying on Euclidean balls,
with any choice of $k$, $r$ and $R$.
\end{definition}
We define two notions of consistency for an estimator $\hat{\mathcal{C}}$ of the
cluster tree:
\begin{definition} [Hartigan consistency]
For any sets $A$, $A' \subset \calX$, let $A_n$ (resp., $A'_n$)
denote the smallest cluster of $\hat{\mathcal{C}}$ containing $A \cap \boldX$
(resp, $A' \cap \boldX$). We say $\clustertree_n$ is consistent
if, whenever $A$ and $A'$ are different connected components
of $\{ x \, : \, f(x) \geq \lambda\}$ (for some $\lambda > 0$),
the probability that $A_n$ is disconnected from $A'_n$ approaches
$1$ as $n \to \infty$.
\end{definition}
\begin{definition} [$(\sigma,\epsilon)$ consistency]
For any sets $A$, $A' \subset \calX$ such that $A$ and $A^\prime$
are $(\sigma,\epsilon)$ separated, let $A_n$ (resp., $A'_n$)
denote the smallest cluster of $\hat{\mathcal{C}}$ containing $A \cap \boldX$
(resp, $A' \cap \boldX$). We say $\clustertree_n$ is consistent
if, whenever $A$ and $A'$ are different connected components
of $\{ x \, : \, f(x) \geq \lambda\}$ (for some $\lambda > 0$),
the probability that $A_n$ is disconnected from $A'_n$ approaches
$1$ as $n \to \infty$.
\end{definition}
The notion of \emph{$(\sigma,\epsilon)$ consistency} is similar that of Hartigan
consistency except restricted to $(\sigma,\epsilon)$ separated clusters $A$ and $A^\prime$,
and typically associated with a finite sample of size $n$.
\cite{chaudhuri10} prove
the following theorem, establishing finite sample bounds for a particular RSL algorithm.
In this theorem there is no manifold and $f$ is a density with respect to the
Lebesgue measure on $\mathbb{R}^D$.
\begin{theorem}
\label{thm::cd}
There is a constant $C$ such that
the following holds. Suppose that we
run the
algorithm in Figure \ref{fig::CD} with
\begin{equation*}
R = \sqrt{2} r~~~\mathrm{and}~~~k = C \left(\frac{D \log n}{\epsilon^2}\right) \log^2 (1/\delta)
\end{equation*}
then with probability
at least $1-\delta$, the algorithm output $\hat{\mathcal{C}}$ is $(\sigma,\epsilon)$ consistent
provided 
$$
\lambda \geq \frac{1}{v_D (\sigma/2)^D} \frac{k}{n} \left( 1 + \frac{\epsilon}{2} \right).
$$
\end{theorem}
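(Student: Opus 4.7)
The plan is to follow the two-stage analysis of \cite{chaudhuri10}. The first stage is a relative VC/Bernstein concentration inequality stating that, with probability at least $1-\delta$, every Euclidean ball $B$ satisfies
\[
\bigl|P_n(B) - P(B)\bigr| \;\lesssim\; \sqrt{\tfrac{P(B)\,(D\log n + \log(1/\delta))}{n}} \;+\; \tfrac{D\log n + \log(1/\delta)}{n}.
\]
The class of balls in $\mathbb{R}^D$ has VC dimension $D+1$, and the choice of $k$ in the theorem is exactly large enough that the right-hand side is a $\Theta(\epsilon)$-fraction of $k/n$. Thus, on this good event, $P(B) \geq (1+\epsilon/3)\,k/n$ forces $P_n(B) \geq k/n$, while $P(B) \leq (1-\epsilon/3)\,k/n$ forces $P_n(B) < k/n$.

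Conditional on this event, pick $r$ so that $\lambda\, v_D\, r^D = (1+\epsilon/2)\,k/n$; the hypothesis on $\lambda$ is exactly $r \leq \sigma/2$. Two lemmas then govern retention. \emph{(Retention)} If $f \geq \lambda$ on $B(x,r)$, then $P(B(x,r)) \geq (1+\epsilon/2)\,k/n$ and $x$ is retained. \emph{(Non-retention)} If $B(x,r) \subseteq S_\sigma$, then $P(B(x,r)) \leq (1-\epsilon)(1+\epsilon/2)\,k/n < (1-\epsilon/3)\,k/n$ and $x$ is pruned. Since $\sigma/2 + \sigma/(2\sqrt{2}) < \sigma$, every sample $x$ with $d(x,S) \leq \sigma/(2\sqrt{2})$ has $B(x,r) \subseteq S_\sigma$ and is therefore non-retained.

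Connectedness of $A[\boldX]$ now reduces to a net-cover argument. Any continuous path in $A$ joining two samples can be discretized as $y_0,\dots,y_m$ with $\|y_i - y_{i+1}\| \leq \beta$ for some small $\beta$; each net-ball $B(y_i,\beta)$ lies in $A_\sigma$ and hence has $P$-mass at least $\lambda\, v_D\, \beta^D \gg 1/n$ when $\beta$ is a small fraction of $r$. The concentration bound guarantees a sample in every net-ball, (Retention) says each such sample is retained, and consecutive retained samples are within $R = \sqrt{2}r$, hence joined by an edge. Disconnectedness is the delicate step: any chain of retained samples linking $p \in A[\boldX]$ to $q \in A'[\boldX]$ traces a piecewise-linear curve that must cross $S$, necessarily on some edge $[x_i,x_{i+1}]$ of length $\leq R = \sigma/\sqrt{2}$; the crossing point $y \in S$ lies within $R/2 = \sigma/(2\sqrt{2})$ of one of its endpoints, and that endpoint is then non-retained by (Non-retention) -- a contradiction.

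The main obstacle is keeping the constants straight: the single parameter $\epsilon$ must absorb both the multiplicative concentration error in the first-stage deviation bound and the $(1+\epsilon/2)$ margin built into the hypothesis on $\lambda$, and this tight bookkeeping is what pins down the $D/\epsilon^2$ prefactor in $k$. A second subtlety is that the geometric barrier in the disconnection step is finely tuned to the choice $R = \sqrt{2}r$: a larger connection radius would allow a single edge to skip over $S$ without either endpoint being close enough to $S$ to trigger non-retention, while a smaller one would break the net-cover connectedness argument within each cluster.
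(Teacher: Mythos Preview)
This theorem is not proved in the paper at all; it is quoted from \cite{chaudhuri10} as the full-dimensional precursor to the manifold results. The sentence immediately preceding the statement reads ``\cite{chaudhuri10} prove the following theorem,'' and no argument is supplied anywhere. There is therefore no proof in the paper against which to compare your sketch.

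On its own merits, your outline captures the two-stage architecture correctly (relative VC deviation for Euclidean balls, a retention/non-retention dichotomy driven by the choice of $r$, and a crossing argument for separation), and the separation half is essentially right. The connectedness half, however, has a genuine gap as written. You cover the path by balls $B(y_i,\beta)$ with $\beta$ ``a small fraction of $r$'' and assert that each has mass $\lambda\, v_D\, \beta^D \gg 1/n$. But
\[
\lambda\, v_D\, \beta^D \;=\; (\beta/r)^D \cdot (1+\epsilon/2)\,\frac{k}{n},
\]
and with $k \asymp (D\log n)/\epsilon^2$ the factor $(\beta/r)^D$ is fatal: for any fixed ratio $\beta/r < 1$ the mass is exponentially small in $D$ relative to the $\Theta(D\log n)/n$ threshold needed by the uniform-convergence lemma to guarantee a sample. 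Since the edge constraint forces $2\beta \lesssim R = \sqrt{2}\,r$, i.e.\ $\beta \leq r/\sqrt{2}$, you cannot escape by pushing $\beta$ toward $r$. This is exactly why the present paper, in its own manifold proof, takes $R = 4\rho \geq 4r$: with net radius $R/4 \geq r$ the net-ball mass dominates the retention threshold and Lemma~\ref{lemma:connectedness} goes through cleanly. The original \cite{chaudhuri10} connectedness argument for $R=\sqrt{2}\,r$ is not a bare one-sample-per-net-ball cover of this kind, and your sketch would need to be replaced by their actual mechanism to close the gap.
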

The theorem as stated does not explicitly give a sample complexity bound but it is straightforward
to obtain one by plugging in the value for $k$ and solving for $n$ in the inequality that
restricts $\lambda$ to be large enough (as a function of $n$).

In particular, notice that if
$$
n \geq O\left(\frac{D}{\lambda
\epsilon^2 v_D (\sigma/2)^D} \log \frac{D}{\lambda \epsilon^2 v_D
(\sigma/2)^D}\right)
$$
then we can resolve any pair of $(\sigma,\epsilon)$ clusters at level at least $\lambda$.
It is important to note that this theorem does not apply to the setting
when distributions are supported on a lower dimensional set for at least two reasons: 
(1) the density $f$ is singular with respect to the Lebesgue measure on $\mathcal{X}$
and so the cluster tree is trivial, and (2)
the definitions of saliency with respect to $\mathcal{X}$ are typically not satisfied when
$f$ has a lower dimensional support.

\begin{figure}
\fbox{\parbox{6in}{
\begin{enumerate}
\item For each $X_i$, $r_k(X_i) := \inf\{r : B(X_i,r)$ contains $k$ data points$\}$.
\item As $r$ grows from 0 to $\infty$:
\begin{enumerate}
\item Construct a graph $G_{r,R}$ with nodes
$\{X_i: r_k(X_i) \leq r \}$ and edges $(X_i, X_j)$ if $\|X_i - X_j\| \leq R$.
\item Let $\mathbb{C}(r)$ be the connected components of $G_{r,R}$.
\end{enumerate}
\item Denote $\hat{\mathcal{C}} = \{\mathbb{C}(r): r \in [0,\infty)\}$ and return $\hat{\mathcal{C}}$.
\end{enumerate}
\caption{Robust Single Linkage (RSL) Algorithm}
\label{fig::CD}}}
\end{figure}

\section{Clustering on Manifolds}
\label{section::main}
In this section we show that the RSL algorithm can be
adapted to recover the cluster 
tree of a distribution supported on a manifold
of dimension $d < D$ with the rates depending
only on $d$.
In place of the
cluster salience
parameter $\sigma$, our
rates involve a new parameter $\rho$
\begin{equation*}
\rho := \min \left( \frac{3\sigma}{16}, \frac{\epsilon \tau}{72 d}, \frac{\tau}{16} \right).
\end{equation*}
The precise reason for this definition of $\rho$ 
will be clear from the proofs (particularly of Lemma \ref{lemma:ball-volumes}) but for now notice
that in addition to $\sigma$ it is dependent on the condition number $1/\tau$ and deteriorates
as the condition number increases.
Finally, to succinctly present our results we use 
$\mu := \log n + d \log (1/\rho)$.

%

\begin{theorem}
\label{thm:main}
There are universal constants $C_1$ and $C_2$ such that
the following holds.
For any $\delta > 0$, $0 < \epsilon < 1/2$, run the
algorithm in Figure \ref{fig::CD}
on a sample $\boldX$ drawn from $f$,
where the
parameters are set according to the equations
\begin{equation*}
R = 4 \rho~~~\mathrm{and}~~~k = C_1 \log^2 (1/\delta) (\mu/\epsilon^2).
\end{equation*}
Then with probability at
least $1-\delta$, $\hat{\mathcal{C}}$ is $(\sigma,\epsilon)$ consistent.
In particular, the clusters containing
$A[\boldX]$ and $A^{\prime}[\boldX]$, where $A$ and $A^\prime$ are $(\sigma,\epsilon)$ separated, 
are internally connected and mutually disconnected
in $\mathbb{C}(r)$ for $r$ defined by
\begin{equation*}
v_d r^d \lambda = \frac{1}{1 - \epsilon/6} \left( \frac{k}{n} + \frac{C_2 \log (1/\delta)}{n} \sqrt{k\mu}\right)
\end{equation*}
provided
$$\lambda \geq \frac{2}{v_d \rho^d} \frac{k}{n}.$$
\end{theorem}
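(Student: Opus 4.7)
The plan is to mirror the structure of the Chaudhuri--Dasgupta proof (Theorem \ref{thm::cd}) while replacing ambient-space volume computations with intrinsic $d$-dimensional computations on $M$. The core statement is that at the scale $r$ defined by $v_d r^d \lambda \approx k/n$, the $k$-NN radii of points in the high-density clusters are at most $r$, whereas points lying in the separator region $S_{M,\sigma}$ have $k$-NN radii strictly greater than $r$. Combined with a connectivity argument at the graph scale $R = 4\rho$, this will deliver both internal connectedness and mutual disconnection.

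First, I would prove a manifold ball-volume lemma: for any $x \in M$ and $r \le \rho$, the Euclidean ball $B(x,r)$ intersected with $M$ has $d$-volume between $(1-\epsilon/C) v_d r^d$ and $(1+\epsilon/C) v_d r^d$, with two-sided comparison controlled by the reach $\tau$. This is exactly where the definition $\rho = \min(3\sigma/16,\, \epsilon \tau/(72d),\, \tau/16)$ is forced: $r \le \tau/16$ controls the deviation between $M \cap B(x,r)$ and the tangent disc, $r \le \epsilon \tau/(72d)$ converts Jacobian distortion into an $\epsilon$-level multiplicative factor for density integration, and $r \le 3\sigma/16$ ensures the ball stays within the $\sigma$-tube around the clusters. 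The argument can borrow the normal-bundle and projection-onto-tangent-plane estimates from Niyogi--Smale--Weinberger as cited in \cite{Niyogi2006}.

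Second, I would establish uniform deviation inequalities for the empirical measure over the class of Euclidean balls restricted to $M$. Because this class has VC dimension $O(d)$ (the intersections of halfspaces with a $d$-manifold of bounded reach admit a $d$-dimensional parametrization at the relevant scales), a standard Vapnik--Chervonenkis / Bernstein-type bound yields, with probability $1-\delta$,
\begin{equation*}
\left| P(B(x,r) \cap M) - P_n(B(x,r)) \right| \;\le\; \frac{C \log(1/\delta)}{n}\sqrt{k\mu} + \frac{C\mu}{n},
\end{equation*}
uniformly in $x$ and $r$, which is what drives the $\mu = \log n + d\log(1/\rho)$ factor in the choice of $k$. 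Converting this to the $k$-NN radius gives that with probability $1-\delta$, for every $x$ with $f(x) \ge \lambda$ we have $P(B(x,r_k(x)) \cap M) \in [(1 \mp \epsilon/6)\, k/n]$, and then via the ball-volume lemma, $r_k(x) \le r$ where $v_d r^d \lambda$ matches the expression in the theorem.

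Third, I would finish with the geometric connectedness/separation argument. For \emph{mutual disconnection}, one checks that any Euclidean edge of length $\le R = 4\rho$ joining $A[\boldX]$ to $A'[\boldX]$ must pass within $\sigma$ of $S$ along the manifold (here using $R \le 3\sigma/4$ and the reach-based comparison of Euclidean and geodesic distance); the salience condition 2 combined with the uniform density lower bound shows no such sample point survives the $k$-NN threshold. For \emph{internal connectedness} within $A$, I would cover $A_{M,\sigma}$ by a $\rho$-net in the manifold metric, show every net point has a sample within distance $\rho$ using the ball-volume lemma and the lower bound on $\lambda$, and then note that any two such samples at net-adjacent locations are within Euclidean distance $\le 4\rho = R$, yielding a chain in $G_{r,R}$. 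The main obstacle I anticipate is the ball-volume lemma: getting a clean two-sided comparison with multiplicative error $\epsilon/C$ while simultaneously handling (a) curvature of $M$, (b) densities integrated against the intrinsic volume form, and (c) the fact that balls near the separator must not protrude outside the $\sigma$-tube. All three constraints are absorbed into $\rho$, but verifying that each of the three terms in the minimum is genuinely needed, and propagating the resulting constants through the Bernstein bound without losing the factor $\mu/\epsilon^2$ in $k$, is where the bookkeeping is most delicate.
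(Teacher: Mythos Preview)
Your overall architecture---ball-volume lemma, uniform deviation, then separation/connectedness---matches the paper's, and the geometric parts (volume comparison via reach, geodesic-vs-Euclidean distance, net-based chaining) are essentially what the paper does. The substantive gap is in your second step.

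You assert that the class of Euclidean balls restricted to $M$ has VC dimension $O(d)$, justifying this by saying the traces ``admit a $d$-dimensional parametrization at the relevant scales.'' That is not a valid VC argument: VC dimension is a global combinatorial quantity, and a local $d$-dimensional chart does not by itself prevent the full class of $D$-dimensional balls from shattering more than $O(d)$ points lying on a curved $M$. The paper is explicit about this obstruction: it states that sharp ambient-dimension-independent VC inequalities for balls on manifolds are, to the authors' knowledge, not available, and it \emph{does not} attempt to prove one. Instead, the paper sidesteps the issue entirely. It observes that the algorithm only ever queries balls centered at sample points $X_i$ (for the cleaning step) and, for the connectedness analysis, balls centered at points of a fixed $s$-net $\mathcal{N}$ of $M$. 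For a \emph{fixed} center, the family $\{B(z,s):s\ge 0\}$ has VC dimension $1$; a union bound over the $n+|\mathcal{N}|$ centers then yields the deviation inequality with $\mu = O(\log n + \log|\mathcal{N}|) = O(\log n + d\log(1/\rho))$, using the covering-number bound $|\mathcal{N}| \le c^d/(v_d s^d)$. This is precisely where the $d\log(1/\rho)$ term in $\mu$ comes from---not from a $d$-dimensional VC class, but from the log-cardinality of a $d$-dimensional net.

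So your proposal would go through if you replace the unjustified VC claim with the paper's net-plus-sample-centers trick; without it, the uniform-convergence step as written does not deliver a $D$-free bound. A minor secondary point: your explanation of the $3\sigma/16$ term in $\rho$ (``the ball stays within the $\sigma$-tube'') is not quite the role it plays; in the paper that constraint is what makes $R + r$ small enough relative to $\sigma$ so that a single edge of length $\le R$ cannot bridge $A$ to $A'$ without landing a surviving vertex in $S_{\sigma-r}$.
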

Before we prove this theorem a few remarks are in order:
\begin{enumerate}
\item To obtain an explicit sample complexity, as in Theorem \ref{thm::cd}, we plug in the
value of $k$ and solve for $n$ from the inequality restricting $\lambda$. The sample complexity
of the RSL algorithm for recovering $(\sigma,\epsilon)$
clusters at level at least $\lambda$
on a manifold $M$ with condition number at most $1/\tau$ is
$$n = O\left(\frac{d}{\lambda \epsilon^2 v_d \rho^d} \log \frac{d}{\lambda \epsilon^2 v_d \rho^d}\right)$$
where $\rho = C \min \left( \sigma, \epsilon \tau/d , \tau \right)$.
Ignoring constants that depend on $d$ the main difference between this result and the result of \cite{chaudhuri10} 
(Theorem \ref{thm::cd}) is that our results only depend on the manifold dimension $d$ and not the ambient dimension
$D$ (typically $D \gg d$). There is also a dependence of our result on $1/(\epsilon\tau)^d$, for $\epsilon \tau \ll \sigma$.
In Section \ref{sec::lb} we sketch the construction of an instance that
suggests that this dependence is not an artifact of our analysis and that
the sample complexity of the class of RSL algorithms is at least $n \geq 1/(\epsilon\tau)^{\Omega(d)}$.
\item Another aspect is that our choice of the connection radius $R$ depends on the (typically) unknown $\rho$, while
for comparison, the connection radius in \cite{chaudhuri10} is chosen to be $\sqrt{2}r$. Under the mild assumption
that $\lambda \leq n^{O(1)}$ (which is satisfied for instance, if the density on $M$ is bounded from above), we show in Section \ref{app::poly}
that an identical theorem holds for $R = 4r$. $k$ is the only real tuning parameter of this algorithm whose choice depends
on $\epsilon$ and an unknown leading constant. 
\item It is easy to see that this theorem also establishes consistency for recovering the entire cluster tree by selecting an
appropriate schedule on $\sigma_n, \epsilon_n$ and $k_n$ that ensures that \emph{all} clusters are distinguished for $n$
large enough (see \cite{chaudhuri10} for a formal proof).
\end{enumerate}

Our proofs structurally mirror those in \cite{chaudhuri10}. We begin with a few
technical results in \ref{sec::tech}. In
Section \ref{sec::sepconnect} we establish $(\sigma,\epsilon)$ consistency by
showing that the clusters are mutually
disjoint and internally
connected. The main technical challenge is that the curvature of the
manifold, modulated by its condition number $1/\tau$, limits our
ability to resolve the density level sets from a finite sample, by
limiting the maximum cleaning and connection radii the algorithm can
use. In what follows, we carefully analyze this effect and show that
somewhat surprisingly, despite this curvature, essentially the same
algorithm is able to adapt to the unknown manifold and produce a
consistent estimate of the entire cluster tree. Similar manifold adaptivity
results have been shown in classification \cite{dasgupta08}
and in non-parametric regression \cite{kpotufe12,bickel}.

\label{section::proofs}

\subsection{Technical results}
\label{sec::tech}
In our proof, we use the uniform convergence
of the empirical mass of Euclidean 
balls to their true mass. In the
full dimensional setting of \cite{chaudhuri10}, this follows from standard 
VC inequalities. To the best of our knowledge however sharp 
(ambient dimension independent) 
inequalities for manifolds are unknown.
We get around this obstacle
by using the insight that, in order to analyze
the RSL algorithms, uniform convergence
for Euclidean balls around the \emph{sample points} and around
a \emph{fixed minimum $s$-net $\Net$ of $M$} (for an
appropriately chosen $s$) suffice to analyze
the RSL algorithm.

Recall, an
$s$-net $\Net \subseteq M$
is such that every point of $M$ is
at a distance at most $s$ from
some point in $\Net$.
Let 
$$\Ballfamily_{n, \Net} := \Big\{ \Ball (z, s) \ :\ z \in \Net \cup \boldX, s \geq 0 \Big\}$$
be the collection of balls whose centers are sample or net points.
We are ready to state our uniform
convergence lemma. The proof is in
Section \ref{apdx::uniform-convergence}. 

\begin{lemma} [Uniform Convergence]
\label{lemma:uniform-convergence}
Assume $k \geq \mu$.
Then there exists a constant $C_0$
such that the following holds. For every $\delta > 0$,
with probability $> 1-\delta$, for all $B\in {\cal B}_{n, \Net}$,
we have:
\begin{eqnarray*}
P(B) \geq \frac{C_\delta \mu}{n} & \implies & P_n(B) > 0 \\ 
P(B) \geq \frac{k}{n} + \frac{C_\delta}{n} \sqrt{k \mu} & \implies & P_n(B) \geq \frac{k}{n} \\
P(B) \leq \frac{k}{n} - \frac{C_\delta}{n} \sqrt{k \mu} & \implies & P_n(B)  < \frac{k}{n},
\end{eqnarray*}
where $C_{\delta} := 2C_0 \log (2/\delta)$, and 
$\mu := 1 + \log n + \log |\Net| = C d + \log n + d \log (1/s)$. Here 
$P_n (B) = |\boldX \cap B| / n$ denotes the empirical probability measure
of $B$, and $C$ is a universal constant.
\end{lemma}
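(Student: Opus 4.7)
The plan is to exploit two structural properties of the collection $\Ballfamily_{n,\Net}$ to convert a seemingly infinite-dimensional uniform convergence problem into a union bound over finitely many single-event concentration statements. First, every ball in $\Ballfamily_{n,\Net}$ is centered at one of at most $|\Net| + n$ points (the net points and the sample points). Second, for each fixed center $z$ the one-parameter family $\{\Ball(z,s):s \geq 0\}$ is totally ordered by inclusion, so both $s \mapsto P(\Ball(z,s))$ and $s \mapsto P_n(\Ball(z,s))$ are monotone nondecreasing. This ``nested family'' observation effectively replaces the VC dimension $D+1$ of all Euclidean balls by a complexity of $1$ per center, which is exactly what lets the final bound depend on $d$ (through $\log|\Net|$) rather than on $D$.

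For each fixed $z$, monotonicity tells us that each of the three implications only needs to be verified at a single critical radius $s^*(z)$---the smallest $s$ at which the hypothesis on $P(\Ball(z,s))$ becomes tight. For implication~(1), at this $s^*$ we have $P(\Ball(z,s^*)) \geq C_\delta \mu/n$, and the Chernoff bound $\Pr[P_n(B)=0] \leq e^{-nP(B)}$ gives a failure probability of at most $e^{-C_\delta \mu}$. For implications~(2) and~(3) the hypothesis at $s^*$ says $|P(B) - k/n| \geq C_\delta \sqrt{k\mu}/n$; since the assumption $k \geq \mu$ implies $\sqrt{k\mu} \leq k$, the variance term in Bernstein's inequality dominates the range term, and applying Bernstein to the binomial $n P_n(\Ball(z,s^*))$ yields a single-event failure probability of order $\exp(-c\, C_\delta^2\, \mu)$.

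For centers $z \in \Net$, which are deterministic, the above concentration statements apply directly. For centers $z = X_i$, I would first condition on the value of $X_i$, treat it as a fixed point, and apply the same Bernstein/Chernoff arguments to the Bernoulli sum $\sum_{j\neq i} \mathbb{I}[X_j \in \Ball(X_i,s)]$ over the remaining $n-1$ i.i.d.\ samples; since the resulting bound is uniform in the value of $X_i$, the conclusion survives an outer union bound over $i$. A final union bound over all $|\Net| + n$ candidate centers and the three implications inflates the failure probability by a factor of order $n + |\Net|$, and matching $\log(n+|\Net|) + \log(1/\delta) \lesssim \mu + \log(1/\delta)$ against the single-event exponents above forces $C_\delta$ to scale like $\log(1/\delta)$, as stated. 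The net size enters $\mu$ through the bound $\log|\Net| \leq Cd + d\log(1/s)$, which is a standard volume-comparison covering estimate on $M$ using $\vol_d(M) \leq C$ and the reach bound $\tau$ (locally, $M$ looks Euclidean at scale $\tau$, so an $s$-net on $M$ has size $O((1/s)^d)$ up to constants depending on $d$).

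The principal obstacle here is bookkeeping rather than any deep inequality: one must verify carefully that controlling the single critical $s^*(z)$ really does imply the stated implication for every $s \geq 0$ (which requires monotonicity in the correct direction for each of the three cases), and then chase the constants so that a single choice of $C_\delta$ works uniformly across the three implications, over net- and sample-centered balls, and across the union bound. The Bernstein/Chernoff inequalities, the nested-family reduction, and the manifold covering number are all standard once this structural reduction is in place.
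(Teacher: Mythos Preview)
Your approach is correct and structurally identical to the paper's: both reduce to a per-center analysis (exploiting that balls with a fixed center form a nested family), handle sample-centered balls by conditioning on $X_i$ and working with the remaining $n-1$ points, and then union-bound over the $n+|\Net|$ centers, with $\log|\Net|$ controlled by the manifold covering-number estimate. The one difference is in the per-center concentration step: the paper invokes a relative VC deviation inequality for the VC-dimension-$1$ family $\{\Ball(z,s):s\geq 0\}$, obtaining $|P(B)-P_n(B)| \lesssim \sqrt{P(B)\cdot(\mu+\log(1/\delta))/n}$ uniformly in $s$, and then reads off all three implications from this single bound; you instead use monotonicity to collapse each implication to a single critical radius $s^*(z)$ and apply Chernoff/Bernstein directly at that radius. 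Your route is slightly more elementary in that it avoids quoting a VC-type inequality, at the price of treating the three implications one at a time. One minor caveat: the claim that ``$k\geq\mu$ makes the variance term dominate the range term'' in Bernstein is not quite right as stated (that would need $k \gtrsim C_\delta^2\mu$, not just $k\geq\mu$), but the resulting exponent is in any case $\gtrsim \min(C_\delta,C_\delta^2)\,\mu$, which still absorbs the union bound with $C_\delta \asymp \log(1/\delta)$ as you want.
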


Next we provide a tight estimate of the
volume of a small ball intersected with
$M$. This bounds the distortion of the apparent density
due to the curvature of the manifold and is central to many of 
our arguments. Intuitively,
the claim states that the volume is approximately that of a $d$-dimensional
Euclidean ball, provided that its radius is small
enough compared to $\tau$. The lower bound is
based on Lemma 5.3 of \cite{Niyogi2006} while the upper
bound is based on a modification of the main result of \cite{Chazal2013}. 

\begin{lemma} [Ball volumes]
\label{lemma:ball-volumes} 
Assume $r < \tau/2$. Define $S := \Ball (x, r) \cap M$ for a point $x \in M$. Then
\[
\left( 1 - \frac{r^2}{4\tau^2} \right)^{d/2} v_d r^d \leq \vol_d(S) \leq
v_d \left( \frac{\tau}{\tau - 2 r_1} \right)^d r_1^d ,
\]
where $r_1 = \tau - \tau \sqrt{ 1 - 2r/\tau}$. In particular,
if $r \leq \epsilon \tau/72 d$ for $0 \leq \epsilon < 1$,
then
\[
v_d r^d  (1 - \epsilon/6)  \leq \vol_d(S) \leq v_d r^d (1 + \epsilon/6)
.\]
\end{lemma}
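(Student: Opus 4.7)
The plan is to bound $\vol_d(S)$ from both sides by passing through the tangent space $T_xM$ and carefully tracking Jacobians in terms of the reach $\tau$. Writing $\pi : M \to T_xM$ for the orthogonal projection onto the affine tangent plane at $x$ and $\exp_x : T_xM \to M$ for the exponential map, both approximate the identity near $x$, with distortions controlled by $\tau$ and the Euclidean/geodesic radius.

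For the lower bound I would first observe that $\pi$ is volume-decreasing on $M$: at any $y \in M$ its differential is the orthogonal projection of $T_yM$ onto $T_xM$, so has all singular values in $[0,1]$, hence $|\det D\pi| \leq 1$. Therefore $\vol_d(S) \geq \vol_d(\pi(S))$. Invoking the Niyogi--Smale--Weinberger Lemma 5.3, any $v \in T_xM$ with $\|v\| \leq r\sqrt{1 - r^2/(4\tau^2)}$ is the projection of a point of $M$ lying inside $B(x,r)$, so $\pi(S)$ contains the Euclidean disk in $T_xM$ of radius $r\sqrt{1-r^2/(4\tau^2)}$. This disk has $d$-volume $v_d r^d (1 - r^2/(4\tau^2))^{d/2}$, giving the stated lower bound.

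For the upper bound I would first apply the standard comparison $d_M(x,y) \leq \tau - \tau\sqrt{1 - 2\|x-y\|/\tau} = r_1$ valid for $y \in M$ with $\|x - y\| \leq r < \tau/2$, so that $S$ is contained in the geodesic ball $B_M^g(x, r_1)$. Parametrizing $B_M^g(x, r_1)$ by $\exp_x$ from the Euclidean ball of radius $r_1$ in $T_xM$, the job reduces to bounding the Jacobian of $\exp_x$; the modification of the main result of Chazal et al. yields $|\det D\exp_x(v)| \leq (\tau/(\tau - 2\|v\|))^d$ for $\|v\| < \tau/2$. Monotonicity of this bound in $\|v\|$ together with the volume of a Euclidean $d$-ball of radius $r_1$ delivers $\vol_d(S) \leq v_d (\tau/(\tau - 2r_1))^d r_1^d$.

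The second statement is then a routine Taylor estimate. For the lower bound, $(1 - r^2/(4\tau^2))^{d/2} \geq 1 - dr^2/(8\tau^2) \geq 1 - \epsilon/6$ when $r \leq \epsilon\tau/(72d)$ and $\epsilon < 1$. For the upper bound, the inequality $1 - \sqrt{1-u} \leq u/2 + u^2$ (valid for $u \in [0,1/2]$) applied to $u = 2r/\tau$ gives $r_1 \leq r(1 + 2r/\tau)$, while $\tau/(\tau - 2r_1) \leq 1/(1 - 4r/\tau)$; raising to the $d$-th power and using $r/\tau \leq \epsilon/(72d)$ keeps the product under $1 + \epsilon/6$. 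The main obstacle will be pinning down the Jacobian estimate for $\exp_x$ underlying the upper bound: Chazal et al.\ typically state their result for different geometric functionals, so adapting it to the precise form $(\tau/(\tau - 2s))^d$ used here will require the most technical care. Everything else is bookkeeping on top of these two projection-based Jacobian bounds.
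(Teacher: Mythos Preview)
Your proposal is correct and follows essentially the same route as the paper: the lower bound via Niyogi--Smale--Weinberger Lemma~5.3 and the upper bound via the Chazal result, followed by elementary Taylor-type estimates to pass to the $(1\pm\epsilon/6)$ form. The paper handles both main inequalities purely by citation without unpacking the projection/exponential-map geometry you sketch, and its Taylor manipulations match yours in spirit (note one minor slip: applying $1-\sqrt{1-u}\le u/2+u^2$ with $u=2r/\tau$ gives $r_1\le r(1+4r/\tau)$, not $r(1+2r/\tau)$, though this is inconsequential for the final bound).
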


\subsection{Separation and Connectedness}
\label{sec::sepconnect}
\begin{lemma}[Separation] \label{lemma:separation} 
Assume that we pick $k$, $r$ and $R$ to satisfy the conditions:
\begin{eqnarray*}
r  & \leq & \rho \\ 
R & = & 4 \rho \\
v_d r^d (1 - \epsilon/6)  \lambda &\geq& \frac{k}{n} + \frac{C_\delta}{n} \sqrt{k\mu} \\
v_d r^d (1 + \epsilon/6) \lambda (1 - \epsilon) &\leq& \frac{k}{n} - \frac{C_\delta}{n} \sqrt{k\mu}.
\end{eqnarray*}
Then with probability $1-\delta$, we have:
\begin{enumerate}
\item All points in $A_{\sigma-r}$ and $A'_{\sigma-r}$ are kept, and all points
  in $S_{\sigma-r}$ are removed.
\item The two point sets $A \cap \boldX$ and $A' \cap \boldX$ are disconnected in
  $G_{r,R}$.
\end{enumerate}
\end{lemma}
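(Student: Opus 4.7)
The plan is to combine the uniform convergence lemma (Lemma~\ref{lemma:uniform-convergence}) with the ball-volumes lemma (Lemma~\ref{lemma:ball-volumes}) to convert empirical counts in Euclidean balls into local-density estimates on $M$, and then to exploit the bounded curvature ($\rho \leq \tau/16$) to rule out edges crossing the separator $S$. Throughout, I would condition on the $1-\delta$ event from Lemma~\ref{lemma:uniform-convergence}, applied to all balls $B(X_i, r)$ with $X_i \in \boldX$.

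For the first assertion, fix a sample point $X_i \in A_{\sigma - r}$; the triangle inequality gives $B(X_i, r) \cap M \subseteq A_{M, \sigma}$, and by the saliency hypothesis $f \geq \lambda$ on this region. Since $r \leq \rho \leq \epsilon \tau / (72 d)$, the ball-volumes lemma gives $\vol_d(B(X_i, r) \cap M) \geq v_d r^d (1 - \epsilon/6)$, so
\[
P(B(X_i, r)) \geq \lambda v_d r^d (1 - \epsilon/6) \geq \frac{k}{n} + \frac{C_\delta}{n} \sqrt{k \mu}
\]
by the third hypothesis. Uniform convergence then yields $P_n(B(X_i, r)) \geq k/n$, i.e.\ $r_k(X_i) \leq r$, so $X_i$ is kept; the case $X_i \in A'_{\sigma - r}$ is identical. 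Symmetrically, if $X_i \in S_{\sigma - r}$ then $B(X_i, r) \cap M \subseteq S_{M, \sigma}$, where the second saliency condition (combined with $\inf_{A_{M,\sigma} \cup A'_{M,\sigma}} f \geq \lambda$) gives $f \leq (1 - \epsilon) \lambda$. Then
\[
P(B(X_i, r)) \leq (1 - \epsilon) \lambda v_d r^d (1 + \epsilon/6) \leq \frac{k}{n} - \frac{C_\delta}{n} \sqrt{k \mu}
\]
by the fourth hypothesis, forcing $P_n(B(X_i, r)) < k/n$ and hence the removal of $X_i$.

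For the disconnection claim, suppose for contradiction that $X_{i_0}, X_{i_1}, \ldots, X_{i_m}$ is a path in $G_{r, R}$ with $X_{i_0} \in A$, $X_{i_m} \in A'$, and each $\|X_{i_j} - X_{i_{j+1}}\| \leq R = 4\rho$. Concatenating minimizing geodesics between consecutive sample points yields a continuous path on $M$ from $A$ to $A'$, which by the first saliency condition meets $S$ at some point $z$ lying on the geodesic from $X_{i_j}$ to $X_{i_{j+1}}$ for some $j$. Because $4\rho \leq \tau/4$, standard geodesic--Euclidean comparison for manifolds of reach $\tau$ (cf.~\cite{Niyogi2006}) bounds $d_M(X_{i_j}, X_{i_{j+1}})$ by a small constant times $\|X_{i_j} - X_{i_{j+1}}\|$, so
\[
\min \bigl\{ \|X_{i_j} - z\|, \|X_{i_{j+1}} - z\| \bigr\} \leq \min \bigl\{ d_M(X_{i_j}, z), d_M(X_{i_{j+1}}, z) \bigr\} = O(\rho).
\]
Since $\rho \leq 3 \sigma / 16$ and $r \leq \rho$, we have $\sigma - r \geq 13 \sigma / 16$, which comfortably exceeds this $O(\rho)$ bound, placing the corresponding endpoint inside $S_{\sigma - r}$. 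But such a point was removed in part~1, contradicting its presence in $G_{r, R}$.

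The main obstacle I expect is the curvature bookkeeping in the disconnection step: one must convert $\|X_i - X_j\| \leq 4\rho$ into a Euclidean bound on $\|X_i - z\|$ tight enough to place $z$ inside the $(\sigma - r)$-tube around an endpoint, with constants leaving some slack. The three clauses defining $\rho$ are calibrated for exactly this purpose: $\rho \leq \tau/16$ provides the slack needed for geodesic--Euclidean comparison, $\rho \leq 3 \sigma/16$ ensures the connection radius $R = 4\rho$ is dominated by the separation width $\sigma$, and $\rho \leq \epsilon \tau / (72 d)$ supplies the $(1 \pm \epsilon/6)$ volume approximation used in part~1.
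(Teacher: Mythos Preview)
Your proposal is correct and follows essentially the same approach as the paper: uniform convergence plus the ball-volumes lemma for part~1, and the geodesic--Euclidean comparison from \cite{Niyogi2006} to derive a contradiction on the edge crossing $S$ for part~2. The only cosmetic difference is in how the contradiction in part~2 is framed: the paper uses part~1 to conclude that \emph{both} endpoints of the crossing edge lie outside $S_{\sigma-r}$, hence the edge has geodesic length $>2(\sigma-r)$, contradicting $d_M \leq 2R$; you instead bound the min of the two geodesic distances to $z$ by half the edge's geodesic length and conclude one endpoint lies \emph{inside} $S_{\sigma-r}$ --- these are contrapositives of each other. The paper makes the comparison constant explicit ($d_M(p,q) < 2\|p-q\|$ when $\|p-q\| \leq \tau/4$), which immediately gives $\min \leq 4\rho \leq 12\sigma/16 < 13\sigma/16 \leq \sigma - r$, so your ``$O(\rho)$ with slack'' concern resolves cleanly once you plug in that factor of~$2$.
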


\begin{proof}
The proof is analogous to the separation proof of \cite{chaudhuri10} with several
modifications. Most importantly, we need to ensure that despite the curvature of the manifold
we can still resolve the density well enough to guarantee that we can identify and eliminate points in the
region of separation.

Throughout the proof, we will assume that
the good event in Lemma 
\ref{lemma:uniform-convergence} (uniform convergence for
$\Ballfamily_{n, \Net}$) occurs.
Since $r \leq \epsilon \tau / 72 d$, by Lemma
\ref{lemma:ball-volumes}
$\vol(\Ball_M(x, r))$ is between
$v_d r^d (1 - \epsilon/6)$ and
$v_d r^d (1 + \epsilon/6)$, for any $x \in M$. So if
$X_i \in A \cup A'$,
then $\Ball_M(X_i, r)$ has
mass at least $v_d r^d (1 - \epsilon/6) \cdot \lambda$. Since this is $
\geq \frac{k}{n} + \frac{C_\delta}{n} \sqrt{k\mu}$
by assumption,
this ball contains at least $k$ sample points,
and hence $X_i$ is kept.

On the other hand, if $X_i \in S_{\sigma - r}$,
then the set $\Ball_M(X_i, r)$ contains
mass at most $v_d r^d (1 + \epsilon/6) \cdot \lambda (1 - \epsilon)$. This
is $\leq \frac{k}{n} - \frac{C_\delta}{n} \sqrt{k\mu}$.
Thus by Lemma
\ref{lemma:uniform-convergence} $\Ball_M(X_i, r)$
contains fewer than $k$ sample points, and hence
$X_i$ is removed.

To prove the graph is disconnected, we first need a bound on the
geodesic distance between two points that are at most $R$ apart
in Euclidean distance. Such an estimate follows from Proposition
6.3 in \cite{Niyogi2006} who show that if $\|p-q\| = R \leq \tau/2$, then
the geodesic distance
$$d_M(p,q) \leq \tau - \tau \sqrt{ 1 - \frac{2R}{\tau}}. $$ In particular,
if $R \leq \tau/4$, then $d_M(p,q) < R \left(1 + \frac{4R}{\tau}\right) \leq 2R$. 
Now, notice that if the graph is connected there must
be an edge that connects two points that are at a geodesic
distance of at least $2(\sigma - r)$.
Any path between a point in $A$ and a point in $A^\prime$ along $M$
must pass through $S_{\sigma-r}$ and must have a geodesic length
of at least $2(\sigma-r)$. This is impossible if the
connection radius satisfies
$2R < 2(\sigma - r)$, which follows by the assumptions on
$r$ and $R$.
\end{proof}

All the conditions in Lemma \ref{lemma:separation} can be simultaneously
satisfied by setting $k := 16 C_\delta^2 (\mu/\epsilon^2)$, and
\begin{equation} \label{eqn:set-r}
v_d r^d (1 - \epsilon/6) \cdot \lambda = \frac{k}{n} + \frac{C_\delta}{n} \sqrt{k\mu}.
\end{equation}
The condition on $r$ is satisfied since $$\lambda \geq \frac{2}{v_d \rho^d} \frac{k}{n}$$
and the condition on $R$ is satisfied by its definition.
%

\begin{lemma} [Connectedness]
\label{lemma:connectedness}
Assume that the parameters
$k, r$ and $R$ satisfy the separation
conditions (in Lemma \ref{lemma:separation}).
Then, with probability at least $1 - \delta$,
$A[\boldX]$ is connected in $G_{r,R}$.
\end{lemma}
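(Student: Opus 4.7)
The plan is to show that any two sample points $p, q \in A \cap \boldX$ can be joined by a path in $G_{r,R}$. First I would fix a continuous path $\gamma \colon [0, L] \to A$ on the manifold from $p$ to $q$ (using that $A$, a connected component of an upper level set on $M$, is path-connected). I will then discretize $\gamma$ at arclength spacing $s'$ to obtain points $y_0 = p, y_1, \ldots, y_m = q$ whose Euclidean separations are also bounded by roughly $s'$ (via the geodesic-to-chord estimate in Proposition 6.3 of \cite{Niyogi2006}, which applies since $s' \ll \tau$), and for each $y_i$ I will pick a net point $z_i \in \Net$ within Euclidean distance $s$, where $\Net$ is the $s$-net from Lemma \ref{lemma:uniform-convergence} with $s = \Theta(\rho)$.

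The key quantitative step is to show that each Euclidean ball $B(z_i, r)$ carries enough true mass that uniform convergence is guaranteed to deposit a sample point inside it. Since $y_i \in A$ and $\|z_i - y_i\| \leq s$, every point of $B(z_i, r) \cap M$ lies in $A_{M, s+r} \subseteq A_{M,\sigma}$ (using $s + r \leq 2\rho \leq \sigma$, which is forced by $\rho \leq 3\sigma/16$). On $A_{M,\sigma}$ the density is at least $\lambda$, and since $r \leq \rho \leq \epsilon \tau/(72d)$, Lemma \ref{lemma:ball-volumes} yields
\begin{equation*}
P(B(z_i, r)) \;\geq\; \lambda \, v_d r^d (1 - \epsilon/6) \;\geq\; \frac{k}{n} + \frac{C_\delta}{n}\sqrt{k\mu},
\end{equation*}
the last inequality being exactly the separation lemma's defining condition on $r$. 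Since $B(z_i, r) \in \Ballfamily_{n, \Net}$, Lemma \ref{lemma:uniform-convergence} then gives $P_n(B(z_i, r)) \geq k/n$, yielding a sample point $X^{(i)} \in B(z_i, r)$ (and setting $X^{(0)} := p$, $X^{(m)} := q$ at the endpoints).

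To finish I will verify that $p = X^{(0)}, X^{(1)}, \ldots, X^{(m)} = q$ traces a walk in $G_{r,R}$. Each $X^{(i)}$ is kept because $X^{(i)} \in B(z_i, r) \subseteq A_{s+r} \subseteq A_{\sigma-r}$ (using $s + 2r \leq 3\rho < \sigma$), and all sample points in $A_{\sigma-r}$ are retained by part~(1) of Lemma \ref{lemma:separation}. Consecutive hops are bounded by the triangle inequality along $X^{(i)} \to z_i \to y_i \to y_{i+1} \to z_{i+1} \to X^{(i+1)}$:
\begin{equation*}
\|X^{(i)} - X^{(i+1)}\| \;\leq\; 2r + 2s + s' \;\leq\; 4\rho \;=\; R,
\end{equation*}
provided $s$ and $s'$ are taken as suitable constant fractions of $\rho$ (for instance $s = s' = \rho/2$). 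A union bound over the good events of Lemmas \ref{lemma:uniform-convergence} and \ref{lemma:separation} then delivers the claim with probability at least $1-\delta$.

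The hardest part will be calibrating the three length scales $r, s, s'$ simultaneously so that (i) every ball $B(z_i, r)$ stays inside the $\sigma$-tube around $A$ where $f \geq \lambda$, (ii) the curvature-induced volume distortion in Lemma \ref{lemma:ball-volumes} stays within the $(1 \pm \epsilon/6)$ window needed for the separation thresholds to apply at net points as well as at sample points, and (iii) each chain hop stays below $R$. The definition $\rho = \min(3\sigma/16,\, \epsilon\tau/(72d),\, \tau/16)$ and the generous connection radius $R = 4\rho$ are calibrated exactly so that this balancing act is feasible.
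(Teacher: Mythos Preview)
Your proposal is correct and follows essentially the same argument as the paper: discretize a path in $A$, snap each point to a nearby net point, use uniform convergence to deposit a sample point in a small ball around each net point, then verify these sample points survive cleaning and are pairwise within $R$. The only differences are cosmetic parameter choices---the paper uses an $R/4$-net with balls of radius $R/4=\rho$ and invokes the weaker implication $P_n(B)>0$ of Lemma~\ref{lemma:uniform-convergence}, whereas you use radius-$r$ balls and the $P_n(B)\geq k/n$ implication, which lets you recycle the separation inequality $v_d r^d(1-\epsilon/6)\lambda \geq k/n + \tfrac{C_\delta}{n}\sqrt{k\mu}$ verbatim; also, the good events of Lemmas~\ref{lemma:uniform-convergence} and~\ref{lemma:separation} are the same event, so no union bound is needed.
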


\begin{proof}
Let us show that any two points in $A \cap \boldX$ are
connected in $G_{r,R}$. Consider $y, y' \in A \cap \boldX$.
Since $A$ is connected, there is a path $P$
between $y, y'$ lying entirely inside $A$, i.e., a
continuous map $P: [0,1] \to A$ such that $P(0) = y$
and $P(1) = y'$. We can find
a sequence of points $y_0, \ldots, y_t \in P$ such that
$y_0 = y$, $y_t = y'$, and the geodesic
distance on $M$ (and hence the Euclidean distance) between
$y_{i-1}$ and $y_i$ is at most $\eta$, for an arbitrarily small
constant $\eta$.

Let $\Net$ be minimal $R/4$-net of $M$. 
There exist
$z_i \in \Net$ such that $\| y_i - z_i \| \leq R/4$.
Since $y_i \in A$, we have $z_i \in A_{M, R/4}$,
and hence the ball $\Ball_M (z_i, R/4)$ lies
completely inside $A_{M, R/2} \subseteq A_{M, \sigma - r}$.
In particular, the density inside the ball
is at least $\lambda$ everywhere, and hence the mass
inside it is at least
$$ v_d (R/4)^d (1 - \epsilon/6) \lambda \geq \frac{C_\delta \mu}{n}.$$
Observe that $R \geq 4r$ and so this condition is satisfied as
a consequence of satisfying Equation \ref{eqn:set-r}.
Thus Lemma \ref{lemma:uniform-convergence} guarantees that
the ball $\Ball_M(z_i, R/4)$ contains at least one sample point,
say $x_i$. (Without loss of generality, we may assume
$x_0 = y$ and $x_t = y'$.) Since the ball lies completely in
$A_{M, \sigma-r}$, the sample point $x_i$ is not removed
in the cleaning step (Lemma \ref{lemma:separation}).

Finally, we bound $d(x_{i-1}, x_i)$ by
considering the sequence of points
$(x_{i-1}, z_{i-1}, y_{i-1}, y_i, z_i, x_i)$.
The pair $(y_{i-1},y_{i})$ are at most $s$
apart and the other successive pairs at most $R/4$
apart, hence
$\dist(x_{i-1}, x_i) \leq 4 (R/4) + \eta = R + \eta$.
The claim follows by letting $\eta \to 0$.
\end{proof}

\section{A lower bound instance for the class of RSL algorithms}
\label{sec::lb}
Recall that the sample complexity in Theorem \ref{thm:main} scales as
$$n = O\left(\frac{d}{\lambda \epsilon^2 v_d \rho^d} \log \frac{d}{\lambda \epsilon^2 v_d \rho^d}\right)$$
where $\rho = C \min \left( \sigma, \epsilon \tau/d, \tau \right)$.
For full dimensional densities, \cite{chaudhuri10} showed the information
theoretic lower bound
$$n = \Omega \left(\frac{1}{\lambda \epsilon^2 v_D \sigma^D} \log \frac{1}{\lambda \epsilon^2 v_D \sigma^D}\right).$$
Their construction can be straightforwardly modified to a $d$-dimensional instance on a smooth manifold. Ignoring constants
that depend on $d$,
these upper and lower bounds can still differ by a factor of $1/(\epsilon\tau)^d$, for $\epsilon \tau \ll \sigma$. In this
section we provide an informal sketch of a hard instance for the class of RSL algorithms (see Definition \ref{def::rsl}) that suggests 
a sample complexity lower bound of $n \geq 1/(\epsilon\tau)^{\Omega(d)}$.


We first describe our lower bound instance. The manifold $M$
consists of two disjoint components, $C$ and $C'$.
The component $C$ in turn contains three parts,
which we call `top', `middle', and
`bottom' respectively. The middle part, denoted $M_2$,
is the portion of the standard $d$-dimensional unit sphere
$\Sphere^d(0, 1)$ between the planes $x_1 = +\sqrt{1 -4 \tau^2}$ and
$x_1 = - \sqrt{1 - 4 \tau^2}$. The top part, denoted $M_1$, is the upper hemisphere
of radius $2 \tau$ centered at $(+\sqrt{1 - 4\tau^2}, 0, 0, \ldots, 0)$.
The bottom part, denoted $M_3$, is a symmetric hemisphere
centered at $(-\sqrt{1 - 4\tau^2}, 0, 0, \ldots, 0)$.
Thus $C$ is obtained by gluing a portion of the unit 
sphere with two (small) hemispherical caps. $C$ as described 
does not have a condition
number at most $1/\tau$ because of the ``corners'' at the intersection
of $M_2$ and $M_1 \cup M_3$. This can be fixed without
affecting the essence of the construction by
smoothing this intersection by rolling a
ball of radius $\tau$ around it (a similar construction is made rigorous in
Theorem 6 of \cite{Genovese.JMLR}).
Finally, the component $C'$
is a sphere 
far away from $C$ whose function ensure that $f$ integrates to $1$.

Let $P$ be the distribution on $M$ whose density over $C$ is 
$\lambda$ if $|x_1| > 1/2$, and $\lambda(1-\epsilon)$
if $|x_1| \leq 1/2$, where $\lambda$ is chosen small
enough such that $\lambda \vol_d(C) \leq1$. The density
over $C'$ is chosen such that the total mass of the manifold is $1$. Now 
$M_1$ and $M_3$ are $(\sigma, \epsilon)$ separated at level $\lambda$
for $\sigma = \Omega(1)$. The separator set $S$ is the equator 
of $M_2$ in the plane $x_1 = 0$. 


We now provide some intuition for why RSL algorithms will require
$n \geq 1/(\epsilon\tau)^{\Omega(d)}$ to succeed on this instance.
We focus our discussion on RSL algorithms with $k > 2$, i.e. on
algorithms that do in fact use a \emph{cleaning} step, ignoring
the single linkage algorithm which is known to be inconsistent
for full dimensional densities.

Intuitively, because of the curvature of the described instance, 
the mass of
a sufficiently large Euclidean ball in the separator set is \emph{larger}
than the mass of a corresponding ball in the true clusters.
This means that any algorithm that uses large balls cannot reliably
clean the sample and this restricts the size of the balls that can be used.
Now if points in the regions of high density are to survive then
there must be $k$ sample points in the \emph{small} ball around any
point in the true clusters and this gives us a lower bound on the 
necessary sample size.

The RSL algorithms work by counting the
number of sample points inside the balls $B(x, r)$ centered at
the sample points $x$, for some radius 
$r$. In order for the algorithm to reliably 
resolve $(\sigma,\epsilon)$ 
clusters, it should distinguish
 points in the separator set $S \subset M_2$ from those in the
 level $\lambda$ clusters $M_1 \cup M_3$. A necessary 
 condition for this is 
 that the mass of a ball 
$B(x, r)$ for $x \in S_{\sigma - r}$ should be strictly smaller
than the mass inside $B(y, r)$ for $y \in M_1 \cup M_3$. In 
Section \ref{app::capvol}, we show that this   condition restricts the radius $r$ to be at most $O(\tau \sqrt{\epsilon/d})$. 

Now, consider any sample point $x_0$ in $M_1 \cup M_3$ (such an $x$ exists
with high probability). Since $x_0$ should not be
removed during the cleaning step, the ball
$B(x_0, r)$ must contain some other sample point
(indeed, it must contain at least $k-1$ more
sample points). By a union bound, this happens with
probability at most
$$(n-1) v_d r^d \lambda \leq O(d^{-d/2} n \tau^d \epsilon^{d/2} \lambda).$$
If we want the algorithm to succeed with probability at least 1/2 (say) then
$$n \geq \Omega \left(\frac{d^{d/2}}{\tau^d \lambda \epsilon^{d/2}} \right).$$ 


\section{A modified algorithm for the known manifold case}
In this section we consider the case when the manifold is \emph{known}. In particular, we
assume that we have an oracle that given as input a point $x \in M$ and a number
$V$ returns us a radius $r_x$ such that $\vol_d (\Ball_M(x, r_x)) 
= V$. We call the ball $\Ball (x, r_x)$ the $V$-ball around $x$, and the oracle
a $V$-ball oracle. 
 
Given access to the $V$-ball oracle we show that a modified \emph{spatially adaptive}
RSL algorithm achieves the rate
$$ n \geq O\left( \frac{1}{\lambda v_d \rho^d \epsilon^2} \log \frac{1}{\lambda v_d \rho^d \epsilon^2} \right) $$
where $$\rho \defeq \min \left\{ \frac{\sigma}{10}, \frac{\tau}{16} \right\}$$
In particular, $\rho$ no longer depends on $\epsilon \tau$ and for the case of $\tau$ fixed 
(ignoring constants depending on $d$) the algorithm
achieves the near minimax optimal rates of \cite{chaudhuri10}, in the manifold setting
with $d$ replacing $D$.

 

The modified algorithm is in Figure \ref{fig::CDmod} and it
uses two parameters, $k$ and $V$, to be specified shortly. 

\begin{figure}
\fbox{\parbox{6in}{
\begin{enumerate}
\item For each $X_i$, $r_k(X_i) := \inf\{r : B(X_i,r)$ contains $k$ data points$\}$.
\item As $r$ grows from 0 to $\infty$:
\begin{enumerate}
\item Construct a graph $G_{r,R}$ with nodes
$\{X_i: r_k(X_i) \leq r_{X_i} \}$, where $r_{X_i}$ is the
$V$-ball radius of $X_i$ for $V = v_d r^d$, and edges $(X_i, X_j)$ if $\|X_i - X_j\| \leq R$.
\item Let $\mathbb{C}(r)$ be the connected components of $G_{r,R}$.
\end{enumerate}
\item Denote $\hat{\mathcal{C}} = \{\mathbb{C}(r): r \in [0,\infty)\}$ and return $\hat{\mathcal{C}}$.
\end{enumerate}
\caption{Spatially Adaptive Robust Single Linkage Algorithm}
\label{fig::CDmod}}}
\end{figure}


We begin with a preliminary lemma which is a straightforward consequence 
of Lemma \ref{lemma:ball-volumes}.
\begin{lemma} If $V = v_d r^d$, then $r_l \leq r_x \leq r_u$, where 
\[
r_l := r \left( 1- \frac{6r}{\tau} \right) \text{ and }
r_u := r \left( 1 + \frac{6r}{\tau} \right) 
.\]
\end{lemma}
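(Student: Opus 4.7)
The plan is to prove both sides of the bound by direct evaluation of Lemma~\ref{lemma:ball-volumes} at the candidate radii $r_l$ and $r_u$. Since $t \mapsto \vol_d(B_M(x,t))$ is continuous and strictly increasing for $t < \tau/2$, the defining equation $\vol_d(B_M(x,r_x)) = V = v_d r^d$ has a unique solution in this range. Hence it suffices to show
\[
\vol_d(B_M(x,r_l)) \leq v_d r^d \leq \vol_d(B_M(x,r_u)).
\]
Throughout the argument I would use the regime $r \leq \rho \leq \tau/16$ under which the algorithm operates, so that $r_l$ and $r_u$ both lie safely below $\tau/2$ and Lemma~\ref{lemma:ball-volumes} is applicable.

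For the upper inclusion $r_x \leq r_u$, I would apply the lower bound of Lemma~\ref{lemma:ball-volumes} at radius $r_u$:
\[
\vol_d(B_M(x,r_u)) \;\geq\; \left(1 - \frac{r_u^2}{4\tau^2}\right)^{d/2} v_d\, r_u^d .
\]
Taking $d$-th roots, what is needed reduces to the one-dimensional inequality $r_u^2(1 - r_u^2/(4\tau^2)) \geq r^2$. Substituting $r_u = r(1 + 6r/\tau)$ gives $r_u^2 - r^2 = 12 r^3/\tau + 36 r^4/\tau^2$, and the error term $r_u^4/(4\tau^2)$ is comfortably dominated by $12 r^3/\tau$ once $r/\tau$ is small (easily for $r \leq \tau/16$). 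The crucial feature is that the exponent $d$ disappears on taking $d$-th roots, leaving a dimension-free inequality.

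For the lower inclusion $r_x \geq r_l$, I would apply the upper bound of Lemma~\ref{lemma:ball-volumes} at radius $r_l$:
\[
\vol_d(B_M(x,r_l)) \;\leq\; v_d \left(\frac{\tau}{\tau - 2 r_{1,l}}\right)^{d} r_{1,l}^d, \qquad r_{1,l} := \tau - \tau \sqrt{1 - 2 r_l/\tau}.
\]
Again taking $d$-th roots reduces the claim to the scalar inequality $\tau r_{1,l}/(\tau - 2 r_{1,l}) \leq r$. Using the identity $1 - \sqrt{1-u} = u/(1 + \sqrt{1-u})$ together with elementary Taylor bounds on $\sqrt{1-u}$ with $u = 2 r_l/\tau$, both numerator and denominator can be estimated as $r_{1,l} = r_l + O(r_l^2/\tau)$ and $\tau - 2 r_{1,l} = \tau - 2 r_l + O(r_l^2/\tau)$, after which substituting $r_l = r(1 - 6 r/\tau)$ and comparing coefficients in $\alpha := r/\tau$ yields the inequality with room to spare for $\alpha \leq 1/16$.

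The only real obstacle is the bookkeeping of the polynomial inequalities in $\alpha = r/\tau$; there is no conceptual difficulty. The choice of the constant $6$ in the definitions of $r_l$ and $r_u$ is precisely what is required to absorb the quadratic curvature corrections coming from Lemma~\ref{lemma:ball-volumes} in the regime $r \leq \tau/16$.
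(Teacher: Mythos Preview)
Your approach is correct and is exactly the natural way to unpack the paper's one-line ``straightforward consequence of Lemma~\ref{lemma:ball-volumes}.'' The paper gives no further details, so there is nothing to compare against beyond the observation that your monotonicity-and-evaluate-at-the-endpoints argument is the obvious route.

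One simplification worth noting: in the proof of the ball-volume lemma (Section~9.1 of the paper, Theorem~\ref{lem:ballvol}), the raw upper bound is already reduced to the cleaner form
\[
\vol_d(B_M(x,s)) \;\leq\; v_d\, s^d \Bigl(1 + \tfrac{6s}{\tau}\Bigr)^{d}
\quad\text{for } s \leq \tau/8.
\]
If you use this in place of the expression involving $r_1$, the lower inclusion $r_x \geq r_l$ becomes a one-line check: with $a := 6r/\tau$ and $r_l = r(1-a)$,
\[
r_l\Bigl(1 + \tfrac{6 r_l}{\tau}\Bigr) = r(1-a)\bigl(1 + a(1-a)\bigr) = r\bigl(1 - 2a^2 + a^3\bigr) \leq r,
\]
so $\vol_d(B_M(x,r_l)) \leq v_d r_l^d(1+6r_l/\tau)^d \leq v_d r^d = V$ and you are done. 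This avoids the Taylor bookkeeping with $r_{1,l}$ entirely and makes the choice of the constant $6$ transparent.
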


\begin{theorem}
\label{thm:vball}
There are universal constants $C_1$ and $C_2$ such that
the following holds.
For any $\delta > 0$, $0 < \epsilon < 1/2$, run the
algorithm in Figure \ref{fig::CDmod}
on a sample $\boldX$ drawn from $f$,
where the
parameters are set according to the equations
\begin{equation*}
R = 4 r_u =  r \left( 1 + \frac{6r}{\tau} \right)~~~\mathrm{and}~~~k = C_1 \log^2 (1/\delta) (\mu/\epsilon^2).
\end{equation*}
for $r$ defined by
\begin{equation*}
v_d r^d \lambda = \frac{k}{n} + \frac{C_2 \log (1/\delta)}{n} \sqrt{k\mu}.
\end{equation*}
Then with probability at
least $1-\delta$, $\hat{\mathcal{C}}$ is $(\sigma,\epsilon)$ consistent.
In particular, the clusters containing
$A[\boldX]$ and $A^{\prime}[\boldX]$, where $A$ and $A^\prime$ are $(\sigma,\epsilon)$ separated, 
are internally connected and mutually disconnected
in $\mathbb{C}(r)$ 
provided
$$\lambda \geq \frac{2}{v_d \rho^d} \frac{k}{n}.$$
\end{theorem}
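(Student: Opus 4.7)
The plan is to mirror the proof of Theorem~\ref{thm:main}, exploiting the fact that a $V$-ball has \emph{exact} $M$-volume $V$ regardless of position. This removes the multiplicative $(1 \pm \epsilon/6)$ factors of Lemma~\ref{lemma:ball-volumes} from the separation/removal computation, which is precisely what lets $\rho$ drop the $\epsilon\tau/d$ term and recover a near-optimal rate.

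\emph{Setup and separation.} Condition on the uniform convergence event of Lemma~\ref{lemma:uniform-convergence} with a minimal $R/4$-net $\Net$. The hypothesis $\lambda \geq 2k/(n v_d \rho^d)$ together with the definition of $r$ gives $r \leq \rho$, and the preliminary lemma then ensures $r_u = r(1 + 6r/\tau)$ is a small constant multiple of $r$ and still comfortably below $\sigma$ and $\tau/4$, by the choices $\rho \leq \sigma/10$ and $\rho \leq \tau/16$. For any $X_i \in A \cup A'$, the $V$-ball $B(X_i, r_{X_i})$ lies inside $A_{M,\sigma} \cup A'_{M,\sigma}$ and has mass \emph{exactly} $V \lambda = v_d r^d \lambda = k/n + (C_\delta/n)\sqrt{k\mu}$ at minimum, so uniform convergence yields $\geq k$ sample points and $X_i$ is kept. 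For any $X_i \in S_{M, \sigma - r_u}$, the $V$-ball lies inside $S_{M,\sigma}$ and has mass at most $V \lambda (1 - \epsilon)$; the choice $k = C_1 \log^2(1/\delta)(\mu/\epsilon^2)$ with $C_1$ sufficiently large makes this $\leq k/n - (C_\delta/n)\sqrt{k\mu}$, so $X_i$ is removed. Graph disconnection then follows exactly as in Lemma~\ref{lemma:separation}: any edge spans geodesic distance at most $2R = 8 r_u$ by Proposition~6.3 of \cite{Niyogi2006}, while any surviving path on $M$ from $A$ to $A'$ has geodesic length at least $2(\sigma - r_u)$, and $8r_u < 2(\sigma - r_u)$ is guaranteed by $\rho \leq \sigma/10$.

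\emph{Connectedness.} The argument of Lemma~\ref{lemma:connectedness} transfers almost verbatim. For $y, y' \in A \cap \boldX$, cover a connecting path in $A$ by points $y_0, \ldots, y_t$ at pairwise geodesic distance $\eta$; approximate each $y_i$ by a net point $z_i \in \Net$ with $\|y_i - z_i\| \leq R/4$, so $B_M(z_i, R/4) \subset A_{M, \sigma - r_u}$. Here I use only the \emph{weak} lower bound of Lemma~\ref{lemma:ball-volumes}, which does not require $R/4 \leq \epsilon \tau / 72 d$: since $R/4 \geq r$, we have $\vol_d(B_M(z_i, R/4)) \lambda \geq (1 - (R/4)^2/(4\tau^2))^{d/2} v_d r^d \lambda \geq C_\delta \mu/n$ for $k \geq \mu$. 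Lemma~\ref{lemma:uniform-convergence} thus produces a sample point $x_i$ in each such ball, which survives cleaning, and consecutive $x_i$'s are within Euclidean distance $R + \eta$; letting $\eta \to 0$ connects $y$ to $y'$ in $G_{r,R}$.

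\emph{Main obstacle.} The substantive work is bookkeeping the constant-chasing inequalities to verify that $\rho = \min(\sigma/10, \tau/16)$ simultaneously forces $r_u < \sigma/5$ (for disconnection), $r_u \leq \tau/4$ (so Proposition~6.3 of \cite{Niyogi2006} yields the factor-two geodesic bound), and $R/4 \geq r$ with the attenuation factor in the weak volume bound bounded away from zero (for connectedness). Once these inequalities are checked, the proof is essentially a streamlined version of the proof of Theorem~\ref{thm:main} with the curvature distortion factors absent from the separation step, accounting for the improved rate.
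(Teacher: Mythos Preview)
Your proposal is correct and follows essentially the same route as the paper: both reduce to a separation-and-connectedness lemma with the $(1\pm\epsilon/6)$ volume-distortion factors removed, thanks to the exact-volume property of $V$-balls, and both verify the requisite inequalities on $r_u$ and $R$ from $\rho=\min(\sigma/10,\tau/16)$.

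The one substantive difference is in the connectedness step. The paper observes that $R/4=r_u\geq r_x$ for every $x\in M$, so the Euclidean $R/4$-ball around any net point $z_i$ already \emph{contains} the $V$-ball $B(z_i,r_{z_i})$, whose $M$-volume is exactly $V=v_dr^d$; the mass bound $V\lambda\geq C_\delta\mu/n$ then follows directly from the defining equation for $r$. You instead invoke the weak lower bound of Lemma~\ref{lemma:ball-volumes} on $\vol_d(B_M(z_i,R/4))$, picking up the attenuation factor $(1-(R/4)^2/(4\tau^2))^{d/2}$. That factor is indeed bounded away from zero for each fixed $d$ (since $R/4\leq\tau/8$), but it is not bounded below by a universal constant as $d$ grows, so your route forces $C_1$ to absorb a $d$-dependent term. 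The paper's reuse of the $V$-ball exact-volume trick in connectedness sidesteps this and keeps $C_1$ genuinely universal, which is the cleaner way to close the argument.
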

\begin{proof}
The theorem is a straightforward consequence of the following lemma.
\begin{lemma} [Separation and Connectedness] For the 
parameter choices prescribed in the theorem, provided we satisfy the
following
\begin{eqnarray*}
5 r_u \leq  \sigma~~~~&\mathrm{and}&~~~R \leq  \tau/2 \\ 
V \lambda &\geq & \frac{k}{n} + \frac{C_\delta}{n} \sqrt{k \mu} \\
V \lambda (1-\epsilon) &\leq & \frac{k}{n} - \frac{C_\delta}{n} \sqrt{k \mu}.
\end{eqnarray*}
the following properties hold w.p. 
at least $1- \delta$: 
\begin{enumerate}
\item All points in $A_{\sigma-r_u}$ and $A'_{\sigma - r_u}$ are kept, and all 
    points in $S_{\sigma - r_u}$ are removed.
\item The two point sets $A [\boldX]$ and $A' [\boldX]$ are disconnected 
    in the graph $G_{r, R}$.
\item $A[\boldX]$ and $A^\prime[\boldX]$ are internally connected.
\end{enumerate}
\end{lemma}
\begin{proof}
The proof is similar to that of Theorem \ref{thm:main} and we only highlight the differences.
\begin{enumerate} 
\item The $V$-ball around any point $x$ in the manifold has volume {\em 
exactly} $V$ by definition, and hence part (1) is true under the good event 
described in Lemma \ref{lemma:uniform-convergence}.
 In particular notice that
using $V$-balls removes the necessity for estimating the ball volumes.
\item We show part (2) by contradiction. Assume that the graph 
connects a pair 
of points from $A$ and $A'$. Then the connection step guarantees that every 
edge of the path from $A$ to $A'$ is of Euclidean distance $\leq R \leq 
\tau/2$, and hence geodesic distance $\leq 2R$. Therefore, by part 
(1), there must be an edge of (geodesic) length $2(\sigma - r_u)$. This gives us 
a contradiction, provided $2R \leq 2(\sigma - r_u)$. 
\item For part (3) note that $R = 4 r_u \geq 4 r_x$, and hence an 
$R/4$-ball around any net point in $A_{M, R/4}$ contains at least one sample 
point. The rest of the proof is unchanged. 
\end{enumerate}
\end{proof}
As in the proof of Theorem \ref{thm:main}, we 
set the parameters according to $k = C_{\delta}^2 (\mu / \epsilon^2)$, and 
\[
v_d r^d \lambda = \frac{k}{n} + \frac{C_{\delta}}{n} \sqrt{k \mu}.
\]
By our assumption on $\rho$ and $\lambda$, we can see that $r \leq \rho$, and that 
\[
r_u = r \left( 1 + \frac{6r}{\tau} \right) \leq 
\rho \left( 1 + \frac{6\rho}{\tau} \right) \leq 2 \rho. 
\]
Now, setting $R = 4r_u$, we find that the requirements  
$R \leq \tau/2$ and $R + r \leq \sigma$ are automatically satisfied. 
Similarly, 
the final requirement 
\[
v_d r^d \lambda (1-\epsilon) \leq \frac{k}{n} - \frac{C_\delta}{n} \sqrt{k \mu}
\]
is also satisfied because of our choices of $r$ and $k$.
\end{proof}
\section{Cluster tree recovery in the presence of noise}
\label{section::noise}
So far we have considered the problem of recovering the
cluster tree given samples from a density supported
\emph{on} a lower dimensional manifold. In this section
we extend these results to the more general
situation when we have \emph{noisy} samples concentrated
\emph{near} a lower dimensional manifold. Indeed it can
be argued that the manifold + noise model is a natural and general
model for high-dimensional data.

In the noisy setting, it is clear that we can infer the cluster
tree of the \emph{noisy} density in a straightforward way. 
A stronger requirement would
be consistency with respect to the underlying \emph{latent} 
sample.
Following the literature on manifold estimation
(\cite{Balakrishnan2011,Genovese.JMLR}) we consider
two main
noise models. For both of them, we specify a distribution $Q$
for the noisy sample.

{\bf 1. Clutter Noise:}
We observe data $Y_1,\ldots, Y_n$ from the mixture $$Q := (1-\pi) U + \pi P$$
where $0 < \pi \leq 1$ and $U$ is a uniform distribution on
$\calX$.
 
Denote the samples drawn from $P$ in this mixture 
$$\boldX = \{X_1,\ldots,X_m\}.$$
The points drawn from $U$ are called background clutter.
In this case, we can show:
\begin{theorem}
\label{thm::clutter}
There are universal constants $C_1$ and $C_2$ such that
the following holds.
For any $\delta > 0$, $0 < \epsilon < 1/2$, run the
algorithm in Figure \ref{fig::CD}
on a sample $\{Y_1, \ldots, Y_n\}$,
with
parameters
\begin{equation*}
R := 4 \rho~~~~~k := C_1 \log^2 (1/\delta) (\mu/\epsilon^2).
\end{equation*}
Then with probability at
least $1-\delta$, $\hat{\mathcal{C}}$ is $(\sigma,\epsilon)$ consistent.
In particular, the clusters containing
$A[\boldX]$ and $A^{\prime}[\boldX]$ are internally connected and mutually disconnected
in $\mathbb{C}(r)$ for $r$ defined by
\begin{equation*}
\pi v_d r^d \lambda = \frac{1}{1 - \epsilon/6} \left( \frac{k}{n} + \frac{C_2 \log (1/\delta)}{n} \sqrt{k\mu}\right)
\end{equation*}
provided
$$\lambda \geq \max \left\{ \frac{2}{v_d \rho^d} \frac{k}{n},
\frac{2 v_D^{d/D} (1 - \pi)^{d/D}}{v_d \epsilon^{d/D} \pi}  \left( \frac{k}{n} \right)^{1 - d/D}
 \right\}$$
 where $\rho$ is now slightly modified (in constants), i.e., 
 $\rho := \min \left(\frac{\sigma}{7},  \frac{\epsilon \tau}{72 d}, \frac{\tau}{24} \right)$.
\end{theorem}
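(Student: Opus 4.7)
The plan is to adapt the proof of Theorem \ref{thm:main} by carrying a mixture decomposition $Q(B) = (1-\pi) U(B) + \pi P(B)$ through every step. First, I would apply Lemma \ref{lemma:uniform-convergence} to the new sample $Y_1,\dots,Y_n$ and the distribution $Q$; the lemma is distribution-agnostic, so the same bounds hold for $Q$ over the ball family $\Ballfamily_{n,\Net}$ (where $\Net$ is now a minimal $R/4$-net of $M$). Since $A[\boldX] \subseteq \boldX \subseteq \boldY$, all the points we care about for the separation and connectedness statements are manifold points, and the graph is built on the full noisy sample $\boldY$.

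Next I would prove a separation lemma analogous to Lemma \ref{lemma:separation}. For a manifold point $x \in A_{M,\sigma-r} \cup A'_{M,\sigma-r}$, Lemma \ref{lemma:ball-volumes} gives
\begin{equation*}
Q(B(x,r)) \;\geq\; \pi\, P(B_M(x,r)) \;\geq\; \pi\lambda v_d r^d (1-\epsilon/6),
\end{equation*}
and the parameter choice for $r$ in the statement forces this to exceed $k/n + (C_\delta/n)\sqrt{k\mu}$, so every such point is kept. For $x \in S_{M,\sigma-r}$, both pieces contribute:
\begin{equation*}
Q(B(x,r)) \;\leq\; \pi\lambda(1-\epsilon) v_d r^d (1+\epsilon/6) \;+\; (1-\pi) v_D r^D.
\end{equation*}
To force this below $k/n - (C_\delta/n)\sqrt{k\mu}$ I need the signal gap $\pi\lambda v_d r^d\bigl[(1-\epsilon/6)-(1-\epsilon)(1+\epsilon/6)\bigr] = \Theta(\pi\lambda v_d r^d \epsilon)$ to dominate the clutter mass $(1-\pi) v_D r^D$. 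Substituting $v_d r^d \lambda \asymp k/(n\pi)$ from the $r$-equation and solving $(1-\pi) v_D r^D \lesssim \epsilon \pi \lambda v_d r^d$ for $\lambda$ produces exactly the second lower bound
$\lambda \gtrsim v_D^{d/D}(1-\pi)^{d/D}(k/n)^{1-d/D}/(v_d \epsilon^{d/D}\pi)$.
This is the step where the new hypothesis is forced.

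For disconnection I reuse the geodesic-distance bound from Proposition 6.3 of \cite{Niyogi2006} (valid since $R \leq \tau/2$): any edge of $G_{r,R}$ between a point near $A$ and a point near $A'$ would have to jump a geodesic distance of $2(\sigma-r)$, but $2R < 2(\sigma-r)$ thanks to the updated definition of $\rho$ (the slightly changed constants $\sigma/7$, $\tau/24$ absorb the minor geometric slack needed here). Since all separator points in $S_{M,\sigma-r}$ are cleaned in step 1, no such edge can exist, proving $A[\boldX]$ and $A'[\boldX]$ lie in distinct components.

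For connectedness, I would mimic Lemma \ref{lemma:connectedness}: cover a path in $A$ by an $R/4$-net $\Net$ on $M$, and for each net point $z_i\in A_{M,R/4}$ note that the $R/4$-ball around $z_i$ sits inside $A_{M,\sigma-r}$ and carries $Q$-mass at least $\pi\lambda v_d (R/4)^d(1-\epsilon/6)$, which exceeds $C_\delta \mu/n$ by the choice of $R=4\rho \geq 4r$ together with the $r$-equation; uniform convergence then produces a surviving sample point $x_i$ near each $z_i$, and consecutive $x_i$'s are within Euclidean distance $R$ as before. The main obstacle here is purely bookkeeping — keeping the mixture decomposition, the curvature corrections from Lemma \ref{lemma:ball-volumes}, and the clutter-versus-signal balance simultaneously within the uniform convergence slack — but no new technical idea is needed beyond the clutter mass bound $(1-\pi) v_D r^D$ that motivates the additional lower bound on $\lambda$.
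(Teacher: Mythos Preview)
Your overall strategy matches the paper's, but there is a real gap in the disconnection step. You only clean points in $S_{M,\sigma-r}$, i.e.\ points \emph{on} the manifold; you never argue that clutter points lying \emph{off} $M$ are removed. The graph $G_{r,R}$ is built on the full noisy sample $\boldY$, and a surviving clutter point in $\calX\setminus M$ could serve as a stepping stone linking $A[\boldX]$ to $A'[\boldX]$ along a path that never enters $S_{M,\sigma-r}$ at all. Your geodesic argument (``any edge \dots would have to jump a geodesic distance of $2(\sigma-r)$'') silently assumes every surviving vertex lies on $M$, which is not true in the clutter model.

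The paper closes this gap in two moves. First, it shows that every sample point outside the tube $M_r$ is removed: for such $x$ the ball $B(x,r)$ misses $M$ entirely, so $Q(B(x,r))\le (1-\pi)v_D r^D$, which is already below the cleaning threshold by the same clutter bound you derived. Second, since surviving points are only guaranteed to lie in $M_r$ (not on $M$), the disconnection argument projects each surviving $y_i$ to its nearest point $x_i\in M$; then $\|x_{i-1}-x_i\|\le R+2r$, and the geodesic bound yields a contradiction provided $2(R+2r)\le 2(\sigma-r)$, i.e.\ $R+3r\le\sigma$. This extra $2r$ from the projection is precisely why $\rho$ must now satisfy $\rho\le\sigma/7$ (so that $4\rho+3\rho\le\sigma$) rather than $3\sigma/16$ --- it is not ``minor geometric slack'' but the exact cost of handling off-manifold survivors. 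Once you add the tube-removal step and the projection, the rest of your argument goes through as written.
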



{\bf 2. Additive Noise}:
The  data are of the form $Y_i = X_i + \eta_i$
where $X_1, \ldots, X_n \sim P$
,and $\eta_1, \ldots, \eta_n$ are a sample from \emph{any}
bounded noise distribution $\Phi$, with $\eta_i \in B(0,\theta)$.
Note that $Q$ is the convolution of $P$ and $\Phi$, $Q = P \star \Phi$.

\begin{theorem}
\label{thm::additive-noise}
There are universal constants $C_1$ and $C_2$ such that
the following holds.
For any $\delta > 0$, $0 < \epsilon < 1/2$, run the
algorithm in Figure \ref{fig::CD}
on the sample $\{Y_1, \ldots, Y_n\}$ 
with
parameters 
\begin{equation*}
R := 5 \rho~~~~~k := C_1 \log^2 (1/\delta) (\mu/\epsilon^2).
\end{equation*}
Then with probability at
least $1- \delta$, 
$\hat{\mathcal{C}}$ is $(\sigma,\epsilon)$ consistent for 
$\theta \leq \rho \epsilon / 24 d$.
In particular, the clusters containing
$\{Y_i: X_i \in A\}$ and $\{Y_i: X_i \in A^\prime\}$ are internally connected and mutually disconnected
in $\mathbb{C}(r)$ for $r$ defined by
%
$$v_d r^d (1 - \epsilon/12) (1 - \epsilon/6) \lambda =
\frac{k}{n} + \frac{C_\delta}{n} \sqrt{k\mu}$$ if $$\lambda \geq \frac{2}{v_d \rho^d} \frac{k}{n}$$
and
$\theta \leq \rho \epsilon / 24 d$, where 
$$\rho := \min \left(\frac{\sigma}{7}, \frac{\tau}{24}, \frac{\epsilon \tau}{144d}\right).$$ 
\end{theorem}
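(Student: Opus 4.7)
The plan is to mimic the Separation--Connectedness proof of Theorem \ref{thm:main}, with an extra layer of bookkeeping to propagate the small Euclidean noise $\theta$ through the ball-mass estimates. As a preliminary step I would re-run Lemma \ref{lemma:uniform-convergence} with the samples $Y_1,\dots,Y_n$ drawn from $Q = P \star \Phi$, supported in the $\theta$-tube $M_\theta$, and with the net $\Net$ taken as a minimal $s$-net of $M_\theta$ for $s$ of the same order as in the noiseless case. Because $\theta \leq \rho \epsilon/(24d)$ is small and $\mathrm{vol}(M)$ is bounded, $\log|\Net|$ contributes to $\mu$ at the same order, so the three tail inequalities of that lemma carry over with $P$ replaced by $Q$ and $\boldX$ by $\{Y_i\}$.

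The core of the proof is the mass comparison. Writing
\[
Q(B(y,r)) = \int_M \Phi(B(y-z,r))\, f(z)\, d\mathrm{vol}(z)
\]
and using that $\Phi$ is supported in $B(0,\theta)$ gives the sandwich
\[
P(B(y, r-\theta)\cap M) \;\leq\; Q(B(y,r)) \;\leq\; P(B(y, r+\theta)\cap M).
\]
Combining this with $\|Y_i - X_i\| \leq \theta$ and Lemma \ref{lemma:ball-volumes} applied on $M$ yields
\[
v_d (r-2\theta)^d (1-\epsilon/6) \inf_{B(X_i,r-2\theta)\cap M} f \;\leq\; Q(B(Y_i,r)) \;\leq\; v_d (r+2\theta)^d (1+\epsilon/6) \sup_{B(X_i, r+2\theta)\cap M} f.
\]
The condition $\theta \leq \rho\epsilon/(24d)$ is precisely what converts $(r \pm 2\theta)^d/r^d$ into an extra $(1 \pm \epsilon/12)$ factor, which is why the defining equation for $r$ in the theorem carries the additional $(1-\epsilon/12)$ alongside the curvature factor $(1-\epsilon/6)$.

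With the mass sandwich in hand, the separation argument mirrors Lemma \ref{lemma:separation}: if $X_i \in A_{M,\sigma-r-\theta}$ then the lower bound on $Q(B(Y_i,r))$ meets $k/n + (C_\delta/n)\sqrt{k\mu}$ and $Y_i$ survives, while if $X_i$ sits in the inflation of $S$ the upper bound falls below $k/n - (C_\delta/n)\sqrt{k\mu}$ and $Y_i$ is removed. To rule out edges between $\{Y_i : X_i \in A\}$ and $\{Y_i : X_i \in A'\}$, I would lift any purported Euclidean edge of length $\leq R$ back to the latent samples: Proposition 6.3 of \cite{Niyogi2006} bounds the resulting geodesic jump on $M$ by $2R + 2\theta$, so any graph path between the two clusters would force a geodesic crossing of the separator of total length at least $2(\sigma - r - \theta)$, impossible when $R = 5\rho$ and $\rho \leq \sigma/7$.

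The connectedness half follows the template of Lemma \ref{lemma:connectedness}: take an $R/4$-net of $M$, walk along a continuous path in $A \subset M$ between two latent samples, and use the lower mass bound to guarantee that every net ball $B(z_i, R/4)$ captures some $Y_j$ whose $X_j$ remains in $A_{M,\sigma-r-\theta}$ and therefore survives cleaning. The triangle inequality over $(Y_{i-1}, X_{i-1}, z_{i-1}, z_i, X_i, Y_i)$ picks up two extra $\theta$'s and a vanishing net hop, giving $\|Y_{i-1}-Y_i\| \leq R/2 + 2\theta + o(1) \leq R$ by the enlarged constant $R = 5\rho$. The main obstacle throughout is aligning the competing multiplicative error budgets: the curvature factors $(1\pm\epsilon/6)$ from Lemma \ref{lemma:ball-volumes}, the noise factors $(1\pm\epsilon/12)$ from the $(r\pm 2\theta)^d$ expansions, the $(1-\epsilon)$ density gap between cluster and separator, and the deviation term $(C_\delta/n)\sqrt{k\mu}$ must simultaneously leave strict slack, and the tightened constants in $\rho \leq \min(\sigma/7, \tau/24, \epsilon\tau/(144d))$ together with $R = 5\rho$ are calibrated exactly for this.
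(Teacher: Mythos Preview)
Your approach is correct and lands on exactly the same $(r\pm 2\theta)^d$ conditions and parameter choices as the paper, but you route the argument through a slightly different place. The paper never introduces $Q$ or a net of $M_\theta$: it applies Lemma~\ref{lemma:uniform-convergence} directly to the \emph{latent} pair $(P,\boldX)$, and then relates observed counts to latent counts deterministically via the triangle inequality, i.e.\ $\#\{Y_j\in B(Y_i,r)\}\ge \#\{X_j\in B_M(X_i,r-2\theta)\}$ and the reverse with $r+2\theta$. You instead apply uniform convergence to the observed pair $(Q,\boldY)$ and push the noise into a measure sandwich $P(B(Y_i,r-\theta)\cap M)\le Q(B(Y_i,r))\le P(B(Y_i,r+\theta)\cap M)$ before shifting the center back to $X_i$. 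The paper's route is a little cleaner (no need to re-establish uniform convergence on the tube), while yours would extend more naturally to noise models where one only knows the law $\Phi$ and cannot couple each $Y_i$ to a specific $X_i$.

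One bookkeeping slip to fix in your connectedness sketch: with an $R/4$-net the correct chain is $(Y_{j_{i-1}},z_{i-1},y_{i-1},y_i,z_i,Y_{j_i})$, giving $\|Y_{j_{i-1}}-Y_{j_i}\|\le 4\cdot(R/4)+\eta=R+\eta$, not $R/2+2\theta+o(1)$; the latent $X$'s do not enter the distance bound (they are only needed to certify survival of the cleaning step). If you prefer to pass through the $X$'s as in your chain $(Y,X,z,z,X,Y)$, you pick up an honest $R+2\theta+\eta$, and then you must shrink the net radius to $(R-2\theta)/4$ as the paper does in Lemma~\ref{lemma:additive-connectedness} to absorb the extra $2\theta$. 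Either fix is immediate; the enlarged constant $R=5\rho$ is not what rescues you here.
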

The proofs for both Theorems \ref{thm::clutter} and \ref{thm::additive-noise} appear
in Section \ref{sec::noisyproofs}.
Notice that in each case we receive samples from a \emph{full}
$D$-dimensional distribution but are still able to achieve
rates independent of $D$ because these distributions are
concentrated around the lower dimensional $M$.
For the clutter noise case we produce a tree that is consistent
for samples drawn from $P$ (which are \emph{exactly} on $M$),
while in the additive noise case we produce a tree on the
observed $Y_i$s which is $(\sigma,\epsilon)$ consistent 
for the \emph{latent} $X_i$s (for $\theta$ small enough).
It is worth noting that in the case of clutter noise we can still
consistently recover the \emph{entire} cluster tree. Intuitively,
this is because the $k$-NN distances for points on $M$ are much
smaller than for clutter points that are far away from $M$. As a
result the clutter noise only affects a vanishingly low level set
of the cluster tree.
In the case of additive
noise with small variance, it
is possible to recover well-separated clusters at
ambient dimension independent rates.
It is also possible to recover the cluster tree in the presence of
general additive noise distributions via deconvolution
\cite{Koltchinskii2000,Balakrishnan2011} but we do not pursue this
approach here.

\section{Kernel Density Estimators}
The results of the previous sections have used $k$-nearest neighbors based density estimators.
However, similar (albeit non-algorithmic)
results can be obtained for kernel density estimators.

For the full dimensional cases we consider the usual kernel density estimators
$$
\hat f_h(x) = \frac{1}{nh^D}\sum_{i=1}^n
K\left(\frac{x-X_i}{h} \right).
$$
For the manifold case we consider the following estimator (notice
that unlike the usual kernel density estimate it does not integrate
to 1),
$$
\hat f_h(x) = \frac{1}{nh^d}\sum_{i=1}^n
K\left(\frac{x-X_i}{h} \right).
$$
In each case, $K: \mathbb{R}^D \rightarrow \mathbb{R}$ is a kernel. In each case,
there is an associated population quantity that will be useful. In the full dimensional case
$$ f_h(x) = \frac{1}{h^D} \bbE_{X \sim f} K \left( \frac{x - X}{h} \right)$$
and in the manifold case
$$ f_h(x) = \frac{1}{h^d} \bbE_{X \sim f} K \left( \frac{x - X}{h} \right)$$
As before $\calC(\hat{f}_h)$ denotes the cluster tree
of the kernel density estimate.

\subsection{Assumptions and preliminaries}
%

We will make one of the following assumptions on the kernel:
\begin{assumption}[Bounded support] \hspace*{\fill}
\label{ass::kern}
\begin{itemize}
\item[{\bf [\ref{ass::kern}A]}]  For the case of full-dimensional densities we will
assume the kernel has bounded support and integrates to $1$, i.e.
   $$\{x: K(x) > 0\} \subseteq B(0,1)$$
 and  $$\int_{x \in \bbR^D} K(\|x\|) = 1$$
 Following \cite{Gine2002}, we will further assume that the class of functions
$$\mathcal{F} = \left\{ K \left(\frac{ x - \circ}{h} \right), x \in \mathbb{R}^D, h > 0 \right\}$$
satisfies, for some positive number $A$ and $v$
$$\sup_P \mathcal{N}(\mathcal{F}_h, L_2(P), \epsilon \|F\|_{L_2(P)}) \leq \left( \frac{A}{\epsilon} \right)^v$$
where $\mathcal{N}(T, d, \epsilon)$ denotes the $\epsilon$-covering number of the metric space
$(T, d)$, $F$ is the envelope function of $\mathcal{F}$ and the supremum is taken
over the set of all probability measures on $\bbR^D$. $A$ and $v$ are called the VC characteristics
of the kernel.
\item[{\bf [\ref{ass::kern}B]}] For the case of densities supported on lower-dimensional
manifolds we will assume a particular form for the kernel
    $$K(x) = \frac{\bbI (x \leq 1)}{v_d}$$
    Observe that this kernel also satisfies the VC assumption above.
\end{itemize}
\end{assumption}
The first assumption is quite mild and can be further relaxed to include kernels with an
appropriate tail decay, albeit at the cost of more complicated proofs. The second assumption
allows us to avoid dealing with integrals over the manifold but can also be similarly relaxed.

\begin{assumption}[Bandwidth regularity: BR(m)]
\label{ass::bwreg}
For some $c > 0$,
\begin{eqnarray*}
h_n \searrow 0, ~~~\frac{nh_n^m}{|\log h_n|} \rightarrow \infty ~~~ \frac{|\log h_n|}{\log \log n} \rightarrow \infty ~~~ \mathrm{and}~~~ h_n^m \leq c h_{2n}^m
\end{eqnarray*}
\end{assumption}

We will first state two preliminary results showing the uniform
consistency of the kernel density estimate.

The first Lemma appears in a similar form in \cite{Rinaldo2010} (Proposition 9) and
is a modification of a result of \cite{Gine2002} (Corollary 2.2). The proof is omitted.
\begin{lemma} [Full dimensional density]
\label{lem::ggfull}
Given $n$ samples from a distribution which has a bounded density $f$ with respect to the
Lebesgue measure on $\bbR^D$
\begin{enumerate}
\item For $n \geq n_0$, where $n_0$ is a constant
depending only on the VC characteristics of $K, \|K\|_\infty, \|K\|_2$ and $f_{\max}$,
and fixed $h \leq h_0$ depending only on $\|K\|_\infty$ and $f_{\max}$
there is a constant $C$
depending on $K$ such that
$$P \left(\|\hat{f}_h - f_h\|_\infty \geq C^\prime \cdot C \sqrt{\frac{f_{\max} \log(1/h)}{nh^D}} \right) \leq \left( \frac{1}{h} \right)^{C^\prime}$$
for any large enough constant $C^\prime$ depending on $K$ and $f_{\max}$ of our choice.
\item For any sequence $h_n \leq h_0$ as before,
satisfying Assumption \ref{ass::bwreg}, BR($D$), for all $n \geq n_0$
as before
$$P  \left(\|\hat{f}_{h_n} - f_{h_n}\|_\infty\geq C^\prime \cdot C \sqrt{\frac{f_{\max} \log(1/h_n)}{nh_n^D}}
\right) \leq \left( \frac{1}{h} \right)^{C^\prime}$$
\end{enumerate}
\end{lemma}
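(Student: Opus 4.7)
The plan is to recognize this as a standard uniform deviation statement for an empirical process indexed by a VC-type kernel class, and follow the Gin\'e--Guillou strategy: express $\hat f_h - f_h$ as a normalized empirical process, bound its expected supremum via a chaining/VC argument, and then apply Talagrand's concentration inequality. Concretely, set $\calF_h := \{K((x - \cdot)/h) : x \in \bbR^D\}$ so that
\[
\|\hat f_h - f_h\|_\infty = h^{-D} \sup_{g \in \calF_h} |(P_n - P) g|.
\]
The class $\calF_h$ is uniformly bounded by $\|K\|_\infty$ and has the VC-type covering bound from Assumption \ref{ass::kern}A with characteristics $(A, v)$, so its envelope is constant and the variance satisfies the key estimate $\sigma_h^2 := \sup_{g \in \calF_h} \bfE_P g^2 \leq \|K\|_2^2\, f_{\max}\, h^D$, via a change of variables that exploits the bounded support of $K$.

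Next I would feed these two inputs into Talagrand's inequality (as stated in Proposition 2.1 of Gin\'e--Guillou). The expected supremum is controlled by a Dudley-type integral which, for a VC class with envelope $U = \|K\|_\infty$ and variance $\sigma_h^2 \asymp h^D$, simplifies to
\[
\bfE \sup_{g \in \calF_h} |(P_n - P) g| \;\lesssim\; \sqrt{\frac{v\, \sigma_h^2 \log(A U / \sigma_h)}{n}} + \frac{v\, U \log(A U / \sigma_h)}{n}.
\]
Under the bandwidth condition $nh^D / |\log h| \to \infty$ the first term dominates, and $\log(AU/\sigma_h) \asymp \log(1/h)$, so the expectation is of order $h^{-D}\sqrt{f_{\max}\, h^D \log(1/h)/n}$, matching the target rate. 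Talagrand's inequality then upgrades this to a deviation bound: choosing the deviation to be a factor $C'$ times the expected supremum produces a tail of the form $\exp(-c C' \log(1/h)) = (1/h)^{-c C'}$, which gives the claimed probability $(1/h)^{C'}$ after adjusting constants. This proves part (1), with $n_0$ and $h_0$ arising from the need for the leading term to dominate the Bernstein/variance crossover in Talagrand's inequality.

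For part (2), the plan is to pass from fixed $h$ to the sequence $h_n$ via a Borel--Cantelli / dyadic monotonization argument using Assumption \ref{ass::bwreg}. Applying part (1) at each $h_n$, the tail probability is $(1/h_n)^{C'}$; the condition $|\log h_n|/\log \log n \to \infty$ ensures that, for $C'$ large enough, these tails are summable in $n$, so the bound holds simultaneously with the required probability. The doubling property $h_n^D \leq c\, h_{2n}^D$ is what allows the fluctuation between sample sizes $n$ and $2n$ to be absorbed into the universal constant $C$, completing the reduction.

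The main obstacle I anticipate is the bookkeeping of constants inside Talagrand's inequality to actually match the form $C' \cdot C \sqrt{f_{\max} \log(1/h)/(nh^D)}$ with tail probability $(1/h)^{C'}$ for an \emph{arbitrary} large $C'$ of our choosing; this requires tracking how $C'$ enters both the deviation scale and the exponent in the subgaussian tail, and verifying that the second ``Bernstein" term in Talagrand's bound is indeed of lower order under the stated bandwidth regime. Once that calibration is done, part (2) follows by the summability argument sketched above with no further analytic work.
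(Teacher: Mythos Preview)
Your proposal is correct and follows exactly the Gin\'e--Guillou strategy (empirical process representation, variance bound $\sigma_h^2 \lesssim f_{\max} h^D$, VC chaining for the expected supremum, then Talagrand's inequality) that the paper explicitly cites; the paper itself omits the proof entirely, simply pointing to \cite{Rinaldo2010} and \cite{Gine2002}. One small over-reach: as stated, part (2) is still a fixed-$n$ probability bound (with $h$ replaced by $h_n$), so it follows immediately from part (1) applied at each $n$ without any Borel--Cantelli or dyadic monotonization; the summability argument you sketch would be needed only for an almost-sure version, which is not what is claimed here.
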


For the ball kernel of Assumption \ref{ass::kern} a similar result holds for densities supported
on a lower dimensional manifold.
\begin{lemma} [Manifold case]
\label{lem::ggmanifold} 
Given $n$ samples from a distribution supported on a smooth Riemannian manifold $M$ with
condition number at most $1/\tau$ with bounded density $f$ with respect to the
uniform measure on $M$
\begin{enumerate}
\item For $n \geq n_0$, where $n_0$ is a constant
depending only on the VC characteristics of $K, \|K\|_\infty, \|K\|_2$ and $\|f\|_\infty$,
and fixed $h \leq \min(\frac{\tau}{8}, h_0)$ where $h_0$
depends only on $\|K\|_\infty$ and $\|f\|_\infty$
there is a constant $C_\delta$ depending on $\delta$ and $n_0$ such that
$$P \left(\|\hat{f}_h - f_h\|_\infty \geq 
C^\prime \cdot C \sqrt{\frac{f_{\max} \log(1/h)}{nh^d}} \right)
\leq \left( \frac{1}{h} \right)^{C^\prime}$$
\item For any sequence $h_n \leq \min(\frac{\tau}{8},h_0)$ as before,
satisfying Assumption \ref{ass::bwreg}, BR($d$), for all $n \geq n_0$
as before
$$P  \left(\|\hat{f}_{h_n} - f_{h_n}\|_\infty \geq
C^\prime \cdot C \sqrt{\frac{f_{\max} \log(1/h_n)}{nh_n^d}} \right)
\leq \left( \frac{1}{h} \right)^{C^\prime}$$
\end{enumerate}
\end{lemma}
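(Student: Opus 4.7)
The plan is to follow the template of Lemma \ref{lem::ggfull} (the full-dimensional Giné--Guillou/Rinaldo bound) essentially verbatim, substituting one manifold-specific variance estimate that comes directly from Lemma \ref{lemma:ball-volumes}. With the ball kernel $K(x) = \mathbb{I}(\|x\|\le 1)/v_d$, the estimator reduces to
\[
\hat f_h(x) - f_h(x) \;=\; \frac{1}{v_d h^d}\bigl( P_n(B(x,h)) - P(B(x,h)) \bigr),
\]
so $\|\hat f_h - f_h\|_\infty$ is $1/(v_d h^d)$ times the sup-norm of the centered empirical process indexed by the class $\mathcal{F}_h = \{\phi_x : x \in \mathbb{R}^D\}$, where $\phi_x(y) := \mathbb{I}(\|x-y\|\le h)$. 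The class of Euclidean balls in $\mathbb{R}^D$ is a bounded-VC class, so $\mathcal{F}_h$ satisfies the covering-number hypothesis of the Talagrand/Bousquet inequality in exactly the form used by \cite{Gine2002}; the VC constants (and hence $n_0, h_0$) may depend on $D$, but the rate itself will not.

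Next I would compute the two inputs to Talagrand's inequality. The sup-bound $\|\phi_x\|_\infty = 1$ is immediate. The variance is where the manifold geometry enters: for any $x \in \mathbb{R}^D$ with $B(x,h)\cap M \neq \emptyset$, pick $x_0 \in M$ minimizing $\|x - x_0\|$; then $B(x,h)\cap M \subset B(x_0, 2h) \cap M$, and since $2h \le \tau/4 < \tau/2$ Lemma \ref{lemma:ball-volumes} yields $\vol_d(B(x,h)\cap M) \le v_d (2h)^d(1 + O(h/\tau))$. Hence
\[
\mathbb{E}_P[\phi_x^2] \;=\; P(B(x,h)) \;\le\; f_{\max}\,\vol_d(B(x,h)\cap M) \;\le\; C \, f_{\max}\, h^d,
\]
a factor of $h^{D-d}$ sharper than what one would get from the ambient dimension alone. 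Plugging this variance $\sigma^2 \lesssim f_{\max} h^d$ and the envelope $U = 1$ into the Talagrand-type bound used in Proposition 9 of \cite{Rinaldo2010} gives, after rescaling by $1/(v_d h^d)$,
\[
\|\hat f_h - f_h\|_\infty \;\le\; C'\,C\,\sqrt{\frac{f_{\max}\log(1/h)}{n h^d}}
\]
on an event of probability at least $1 - h^{C'}$, for all $n\ge n_0$. This proves part (1).

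For part (2), the standard blocking/Borel--Cantelli argument of Einmahl--Mason (used also by \cite{Gine2002}) converts the fixed-bandwidth exponential tail into a uniform statement along the sequence $\{h_n\}$. The hypotheses of Assumption BR($d$) --- $nh_n^d / |\log h_n| \to \infty$, $|\log h_n|/\log\log n \to \infty$, and $h_n^d \le c\, h_{2n}^d$ --- are tailored exactly for this step with the dimension parameter $d$ in place of $D$, so the transfer is mechanical once part (1) is in hand.

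The main obstacle is to make sure the variance estimate above applies uniformly over all ambient centers $x \in \mathbb{R}^D$, not just $x \in M$; I handle this by projecting $x$ onto its nearest manifold point $x_0$ (which exists uniquely for $x$ within a tubular neighborhood of radius $\tau$, and trivially yields $P(B(x,h))=0$ for $x$ farther than $h$ from $M$) and invoking Lemma \ref{lemma:ball-volumes} at $x_0$ with radius $2h$. Everything else is a routine adaptation of the full-dimensional proof with $D$ replaced by $d$ in every exponent that appears in the rate, the ambient-dimension dependence being absorbed into the constant $n_0$ through the VC characteristics of Euclidean balls.
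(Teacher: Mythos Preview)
Your proposal is correct and follows essentially the same approach as the paper: both reduce to the Gin\'e--Guillou/Rinaldo argument via Talagrand's inequality, with the single manifold-specific modification being the variance bound $\mathrm{Var}_P\,\phi_x \le P(B(x,h)) \le C f_{\max} h^d$ obtained from Lemma~\ref{lemma:ball-volumes}. The paper's proof is terser---it bounds the variance only for centers $x \in M$ using radius $h$ directly---whereas you additionally handle ambient centers $x \in \bbR^D$ by projecting to the nearest manifold point and applying Lemma~\ref{lemma:ball-volumes} at radius $2h$; this is a small refinement rather than a different route, and it explains neatly why the hypothesis $h \le \tau/8$ (rather than $\tau/2$) appears.
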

\begin{proof}
The proof follows along the lines of those in \cite{Rinaldo2010,Gine2002}. The main modification
to achieve $d$ rates involves a more careful calculation of the variance.

To apply Talagrand's inequality in the proof of \cite{Gine2002}
we need to bound $$\sup_{g \in \mathcal{F}} \mathrm{Var}_f g$$
$\mathcal{F}$ is the set of kernel functions with various bandwidths, and centers
anywhere on $M$.

Let us show how to bound $\sup_{g \in \mathcal{F}_h} \mathrm{Var}_f g$ for a single
bandwidth $h$.
%
\begin{eqnarray*}
 \mathrm{Var}_{X \sim p} \left( K\left( \frac{x - X}{h} \right) \right) & = & \mathbb{E}_{X} \left[ K \left( \frac{x - X}{h} \right) - \mathbb{E}_{X} K \left( \frac{x - X}{h} \right) \right]^2 \\
 & \leq & \left[ \mathbb{E}_{X} K^2 \left( \frac{x - X}{h} \right) \right] \\
 & = & \int_{X \in M} K^2 \left( \frac{x - X}{h} \right) f(X) dX \\
 & \leq & \|K\|^2_\infty \int I(X \in B(x,h)) f(X) dX \\
 & \leq & h^d C_d \|K\|^2_\infty \|f\|_\infty
\end{eqnarray*}
The last step follows if $h \leq \frac{\tau}{8}$ by the ball volume Lemma \ref{lemma:ball-volumes}.
Notice that the variance does not depend on $x$ and so the bound holds uniformly over all $x$ on $M$.

Replacing this bound on the variance in the proof of \cite{Gine2002} we obtain the desired result.
\end{proof}


\subsection{Rates of convergence for the cluster tree}
Our first result mirrors the main result of \cite{chaudhuri10}.
\begin{theorem} [Full dimensional cluster tree]
\label{thm::kerntreefull}
There is a constant $C_\delta$ depending on the VC characteristics of the kernel,
$\|K\|_\infty, \|K\|_2, \|f\|_\infty$ and $\delta$
such that the following holds with probability
at least $1 - \delta$,
$\calC(\hat{p}_\sigma)$
is $(\sigma,\epsilon)$ consistent for any pair of clusters $A$, $A^\prime$
at level at least $\lambda$ for
$$n \geq \frac{C_\delta}{ \sigma^D \lambda^2 \epsilon^2} \log \left( \frac{1}{\sigma} \right) $$
\end{theorem}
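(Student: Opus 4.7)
The plan is to reduce $(\sigma,\epsilon)$-consistency of $\mathcal{C}(\hat f_\sigma)$ to a single uniform bound on $\|\hat f_\sigma - f_\sigma\|_\infty$, and then to identify a threshold of the estimated level set at which $A$ and $A'$ fall into distinct connected components while each is contained in a single component. Throughout I take the bandwidth equal to the separation scale, $h=\sigma$, which is the natural choice that converts spatial separation into a density gap.

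The first step is to invoke Lemma \ref{lem::ggfull} at bandwidth $h=\sigma$. With probability at least $1-\delta$ one gets
\[
\alpha \;:=\; \|\hat f_\sigma - f_\sigma\|_\infty \;\leq\; C_\delta\,\sqrt{\log(1/\sigma)/(n\sigma^D)},
\]
where $C_\delta$ absorbs $\|K\|_\infty$, $\|K\|_2$, $f_{\max}$, the VC characteristics of $\mathcal{F}$, and the level $\delta$. The sample-size hypothesis $n\geq C_\delta\log(1/\sigma)/(\sigma^D\lambda^2\epsilon^2)$ is precisely what is needed to force $\alpha<\epsilon\lambda/3$.

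The second step passes $(\sigma,\epsilon)$ separation through the mollification. Since $K\geq 0$ integrates to $1$ and is supported in $B(0,1)$, for any $x\in A\cup A'$ the window $B(x,\sigma)$ lies in $A_\sigma\cup A'_\sigma$, so $f_\sigma(x)\geq \inf_{y\in A_\sigma\cup A'_\sigma} f(y)\geq \lambda$, while for any $x\in S$ the window lies in $S_\sigma$, giving $f_\sigma(x)\leq \sup_{y\in S_\sigma} f(y)<(1-\epsilon)\lambda$. Combining with the uniform KDE bound yields $\hat f_\sigma\geq \lambda-\alpha$ on $A\cup A'$ and $\hat f_\sigma\leq (1-\epsilon)\lambda+\alpha$ on $S$. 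Picking the threshold $\mu:=\lambda-\alpha$, the bound $\alpha<\epsilon\lambda/3$ makes $(1-\epsilon)\lambda+\alpha<\mu$, so $S\subseteq\{\hat f_\sigma<\mu\}$. Any path in $\mathbb{R}^D$ from $A$ to $A'$ must meet $S$ by definition of the separator, so $A$ and $A'$ sit in different connected components of $\{\hat f_\sigma\geq\mu\}$, and hence so do $A\cap\boldX$ and $A'\cap\boldX$; nesting of components under an increase in the threshold then implies that the smallest clusters of $\mathcal{C}(\hat f_\sigma)$ containing the two sample subsets are distinct. Internal connectedness is immediate: $A\subseteq\{\hat f_\sigma\geq\mu\}$ and $A$ is connected, so $A$ (and a fortiori $A\cap\boldX$) lies in a single component, and similarly for $A'$.

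The only delicate ingredient is the first step, namely calibrating the Gine--Guillou tail $(1/\sigma)^{C'}$ against $\delta$ so that the resulting constant $C_\delta$ is independent of $n$ and $\lambda$ and matches the advertised sample complexity; verifying the two inequalities $\alpha<\epsilon\lambda/3$ and $(1-\epsilon)\lambda+\alpha<\mu$ is then just arithmetic. The rest of the argument is the standard topological observation that a continuous function cannot remain above a threshold along a path which passes through a set on which it is strictly below that threshold, together with the fact that the upper level sets of $\hat f_\sigma$ form a nested family as the level varies.
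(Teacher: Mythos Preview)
Your proof is correct and follows essentially the same approach as the paper's: both arguments pick bandwidth $h=\sigma$, use the bounded support and unit mass of $K$ to show $f_\sigma\geq\lambda$ on $A\cup A'$ and $f_\sigma<(1-\epsilon)\lambda$ on $S$, then invoke Lemma~\ref{lem::ggfull} to transfer this gap to $\hat f_\sigma$ at a threshold near $\lambda$. The only cosmetic differences are that the paper takes $\alpha\leq\epsilon\lambda/2$ with threshold $\lambda-\epsilon\lambda/2$ whereas you take $\alpha<\epsilon\lambda/3$ with threshold $\lambda-\alpha$, and you spell out the nesting-of-level-sets argument for consistency a bit more explicitly than the paper does.
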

Notice, in particular that while for the $k$-nearest neighbor based algorithm the choice
of $k$ depends on $\epsilon$ for the kernel density estimate the optimal
choice of bandwidth depends on $\sigma$. Also notice unlike the result of \cite{chaudhuri10}
this result requires the density to be uniformly upper bounded.
\begin{proof}
To prove this theorem it suffices to show that the regions $A$ and $A^\prime$ are internally
connected and mutually separated.

Let us first show that $\sigma$-clusters $A$ and $A^\prime$ (for any $\lambda, \epsilon > 0$) are
connected and separated in $\mathcal{C}( f_\sigma)$.
Consider any point $x \in A \cup A^\prime$,
$$f_\sigma(x) = \int_{y \in B(x,\sigma)} K \left(\frac{y-x}{h} \right) f(y) dy \geq \lambda  \int_{y \in B(x,\sigma)} K \left(\frac{y-x}{h} \right) dy \geq \lambda$$
Similarly, we can see that for any point in the separator $S$, $f_\sigma(x) < \lambda(1-\epsilon)$.
In particular, $\sigma$-clusters $A$ and $A^\prime$ are distinguished in $\mathcal{C}( f_\sigma)$ at level
$\lambda$ as desired.

Now, we use Lemma \ref{lem::ggfull}. Notice for a constant $C_\delta$
$$n \geq \frac{C_\delta}{ \sigma^D \lambda^2 \epsilon^2} \log \left( \frac{1}{\sigma} \right)$$
we have $$\|\hat{f}_\sigma - f_\sigma\|_\infty \leq \frac{\lambda \epsilon}{2}$$ with probability
$1 - \delta$. Let $\mathcal{E}_1$ denote the event $\{\|\hat{f}_\sigma - f_\sigma\|_\infty \leq \frac{\lambda \epsilon}{2}\}$.

Now, let us consider the cluster tree of $\hat{f}_\sigma$ at level $\lambda - \frac{\lambda \epsilon}{2}$.
On $\mathcal{E}_1$, for any point $x \in A \cup A^\prime$ we know $f_\sigma \geq \lambda$ and thus
$\hat{f}_\sigma \geq \lambda - \frac{\lambda\epsilon}{2}$. Similarly for $x \in S$ we have
$\hat{f}_\sigma < \lambda - \frac{\lambda\epsilon}{2}$. These together show that
on $\mathcal{E}_1$ $A$ and $A^\prime$ are distinguished in $\mathcal{C}(\hat{f}_\sigma)$
at level $\lambda - \frac{\lambda \epsilon}{2}$.
This establishes the theorem.
\end{proof}
To establish Hartigan consistency we select a schedule $h_n$ satisfying Assumption \ref{ass::bwreg}.
Under mild conditions connected components of any level set at $\lambda$, are $(\sigma,\epsilon)$ separated
for some $\sigma, \epsilon > 0$ and are distinguished for $n$ large enough.

We can similarly give a manifold version of this result. Define $$\rho = \min \left(\sigma, \frac{\tau}{8}, \frac{\epsilon \tau}{72d} \right)$$
\begin{theorem} [Cluster tree on manifolds]
\label{thm::kerntreemanifold}
There is a constant $C_\delta$ depending on the VC characteristics of the kernel,
$\|K\|_\infty, \|K\|_2, \|f\|_\infty$ and $\delta$
such that the following holds with probability
at least $1 - \delta$, for all $\epsilon \leq 1/2$
$\calC(\hat{p}_\rho)$
is $(\sigma,\epsilon)$ consistent for any pair of clusters $A$, $A^\prime$
at level at least $\lambda$ for
$$n \geq \frac{C_\delta}{ \rho^D \lambda^2 \epsilon^2} \log \left( \frac{1}{\rho} \right) $$
\end{theorem}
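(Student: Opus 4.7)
The plan is to mirror the proof of Theorem~\ref{thm::kerntreefull}, with Lemma~\ref{lemma:ball-volumes} absorbing the curvature-induced distortion when evaluating $f_\rho$ on manifold balls and Lemma~\ref{lem::ggmanifold} replacing Lemma~\ref{lem::ggfull} in the uniform-convergence step. The argument splits into three steps: (i) exhibit a gap in the \emph{population} smoothed density $f_\rho$ between the clusters and the separator $S$; (ii) transfer this gap to $\hat f_\rho$ via uniform convergence; (iii) upgrade the intrinsic separation on $M$ to topological disconnection of the relevant level set in $\bbR^D$.

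For step (i), since $K$ is the ball kernel of Assumption~\ref{ass::kern}B,
\[
f_\rho(x) \;=\; \frac{1}{v_d \rho^d} \int_{B(x,\rho) \cap M} f(y) \, d\vol_d(y).
\]
The choice $\rho \leq \epsilon\tau/(72d)$ lets Lemma~\ref{lemma:ball-volumes} pinch $\vol_d(B(x,\rho)\cap M)$ into $[(1-\epsilon/6)v_d\rho^d,\, (1+\epsilon/6)v_d\rho^d]$ for every $x \in M$, and $\rho \leq \sigma/7$ places $B(x,\rho) \cap M$ inside $A_{M,\sigma} \cup A'_{M,\sigma}$ whenever $x \in A_{M,\sigma-\rho} \cup A'_{M,\sigma-\rho}$ and inside $S_{M,\sigma}$ whenever $x \in S_{M,\sigma-\rho}$. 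The saliency condition then yields $f_\rho(x) \geq (1-\epsilon/6)\lambda$ on the cluster thickenings and $f_\rho(x) \leq (1-\epsilon)(1+\epsilon/6)\lambda \leq (1-2\epsilon/3)\lambda$ on the separator thickening (for $\epsilon \leq 1/2$), so the threshold $\lambda^\star := (1-\epsilon/2)\lambda$ sits strictly inside a gap of width at least $\lambda\epsilon/3$.

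For step (ii), Lemma~\ref{lem::ggmanifold} applied with $h = \rho$ (permitted by $\rho \leq \tau/8$) yields $\|\hat f_\rho - f_\rho\|_\infty \leq C\sqrt{\|f\|_\infty \log(1/\rho)/(n\rho^d)}$ with probability at least $1-\delta$. The claimed sample complexity (after absorbing $\|f\|_\infty$ into $C_\delta$) makes this at most $\lambda\epsilon/8$, so on the resulting event $\hat f_\rho \geq \lambda^\star$ on $A \cup A'$ and $\hat f_\rho < \lambda^\star$ on $S_{M,\sigma-\rho}$.

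The main obstacle is step (iii): the clusters of $\hat f_\rho$ are connected components of $\{\hat f_\rho \geq \lambda^\star\}$ in $\bbR^D$, so an ambient path could in principle join $A$ to $A'$ while skirting $S$ by leaving $M$. The rescue is that the ball kernel confines $\{\hat f_\rho > 0\}$ to the tube $M + B(0,\rho)$ (every support point must lie within $\rho$ of some $X_i \in M$), and $\rho < \tau$ makes the normal projection $\pi_M$ well defined and continuous on this tube. Any continuous path $\gamma$ in the level set from $A$ to $A'$ therefore projects via $\pi_M$ to a continuous path on $M$ from $A$ to $A'$, which by the definition of $(\sigma,\epsilon)$-separation must cross $S$ at some time $t$; then $\gamma(t)$ lies within $\rho$ of $S$, so $B(\gamma(t),\rho) \cap M \subseteq S_{M,2\rho} \subseteq S_{M,\sigma-\rho}$ (using $\rho \leq \sigma/7$). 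A tube-ball variant of Lemma~\ref{lemma:ball-volumes}, bounding $\vol_d(B(\gamma(t),\rho) \cap M)$ by $(1+\epsilon/6)v_d\rho^d$ via the Pythagorean decomposition $\|y-\gamma(t)\|^2 \approx \|y-\pi_M(\gamma(t))\|^2 + \|\gamma(t)-\pi_M(\gamma(t))\|^2$ (whose second-order correction is controlled by $1/\tau$), then forces $f_\rho(\gamma(t)) \leq (1-2\epsilon/3)\lambda$ and hence $\hat f_\rho(\gamma(t)) < \lambda^\star$, a contradiction. Since $A$ and $A'$ are themselves connected subsets of the level set, $A \cap \boldX$ and $A' \cap \boldX$ end up in distinct ambient components, establishing $(\sigma,\epsilon)$-consistency.
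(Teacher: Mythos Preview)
Your steps (i) and (ii) coincide with the paper's proof: the paper bounds $f_\rho$ on $A \cup A'$ and on $S$ via Lemma~\ref{lemma:ball-volumes} exactly as you do, obtains a gap of at least $\lambda\epsilon/2$, and then invokes Lemma~\ref{lem::ggmanifold} in place of Lemma~\ref{lem::ggfull} to transfer the gap to $\hat f_\rho$. The paper's proof stops there.

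Your step (iii) is genuinely additional. The paper never addresses the fact that the level sets of $\hat f_\rho$ live in $\bbR^D$ while the separator $S$ is only guaranteed to disconnect $A$ from $A'$ \emph{along $M$}; it simply asserts that ``$A$ and $A'$ are distinguished in $f_\rho$'' after bounding $f_\rho$ on $S \subset M$. You are right that an ambient path in a super-level set could in principle leave $M$ and bypass $S$, and your projection argument via $\pi_M$ on the tube $M + B(0,\rho)$ (well-defined since $\rho < \tau$) is the correct way to rule this out. So your proof is more complete than the paper's on this point. The one place you sketch rather than prove is the ``tube-ball variant'' of Lemma~\ref{lemma:ball-volumes} for a center $\gamma(t) \notin M$; a cleaner way to close it is simply $B(\gamma(t),\rho)\cap M \subseteq B_M(\pi_M(\gamma(t)),2\rho)$ by the triangle inequality, and then apply the existing lemma at radius $2\rho$, absorbing the factor of $2$ into the definition of $\rho$ (e.g.\ tightening $\rho \leq \epsilon\tau/72d$ to $\rho \leq \epsilon\tau/144d$, which does not affect the rate).
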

\begin{proof}
Let us again consider $f_\rho$. For any point $x \in A \cup A^\prime$,
$$f_\rho(x) = \frac{1}{h^d} \bbE_{X \sim f} K \left( \frac{x - X}{h} \right) = \frac{1}{v_d \rho^d} \int_{X \in B_M(x,h)} dX
\geq \lambda \left(1 - \frac{\epsilon}{6} \right) $$
where the second equality follows from the assumed form of the kernel, and the inequality follows from Lemma \ref{lemma:ball-volumes} under the assumption on $\rho$.
Similarly, for any point in $S$ we have
$$f_\rho(x) < \lambda \left(1 - \epsilon\right)\left(1 + \frac{\epsilon}{6} \right)$$
The gap between these is at least $\lambda \epsilon/2$, and hence $A$ and $^\prime$ are distinguished
in $f_\rho$ at level $\lambda (1 - \epsilon/6)$.

The proof that these clusters are distinguished in $\hat{f}_\rho$ follows from an identical argument to the one in the proof of Theorem \ref{thm::kerntreefull}, replacing the use of Lemma \ref{lem::ggfull} with Lemma \ref{lem::ggmanifold}.
\end{proof}


%
%
%
%

\section{Simulations}
\label{sec::tuning}
\begin{figure}
  \centering
  \includegraphics[width=5.6in, height=2.4in]{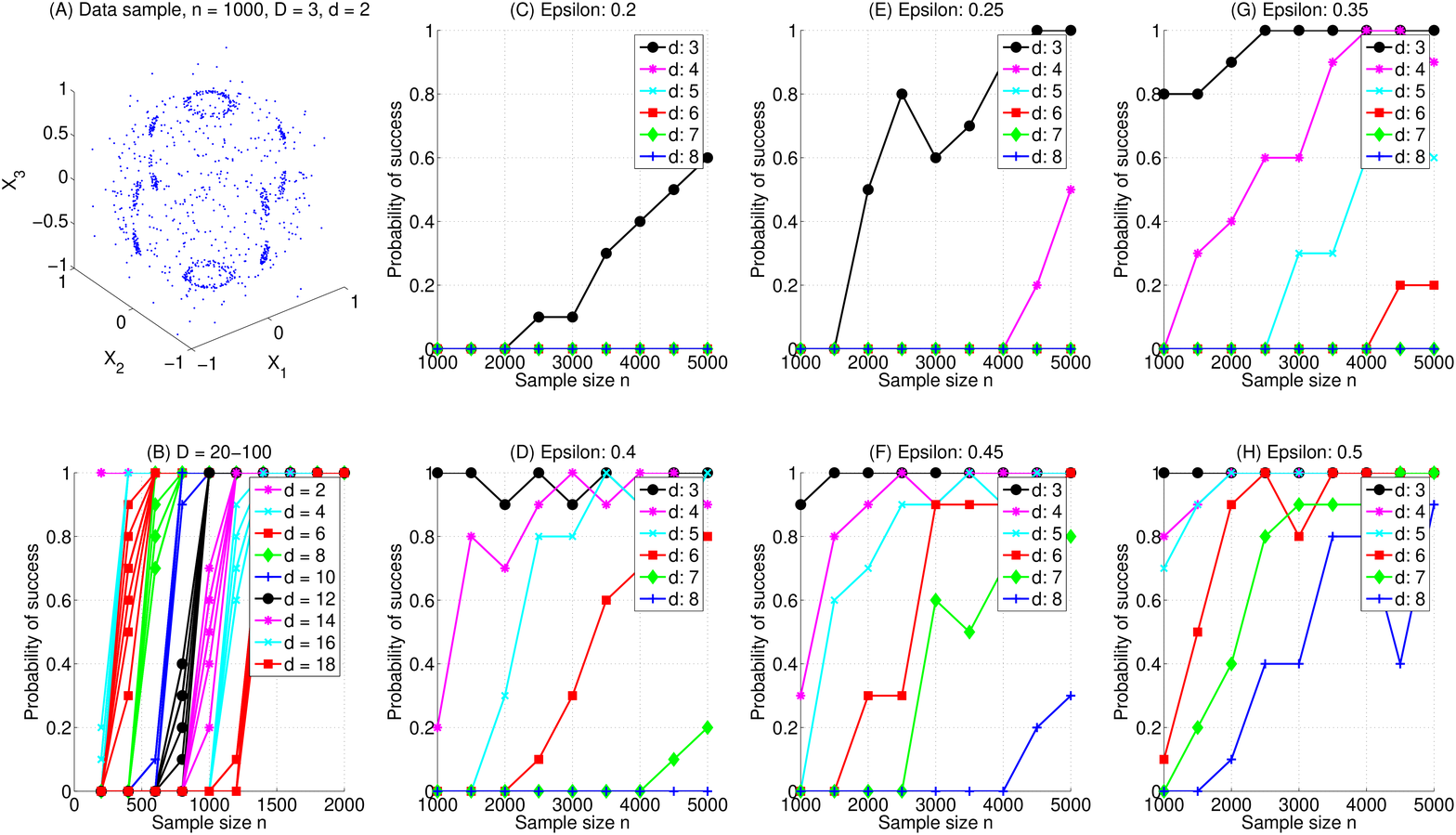}
  \caption{Figures show the average probability of success across 10 trials for different $(n, d, D, \epsilon)$.}
  \label{fig::main}
\end{figure}
Figure \ref{fig::main} depicts the results of simulations we performed to test our 
main theoretical predictions. For Figure \ref{fig::main}(B) we sample data from a mixture distribution
on a unit $d$-sphere. The mixture has 10 salient clusters (with a total mixture weight of $0.7$) 
mixed with uniform samples on the sphere with mixture weight $0.3$. Finally, we mix samples
from this density with $D$-dimensional clutter noise with $\pi = 0.8$. 
A sample is shown in Figure \ref{fig::main}(A) for $d=2, D = 3$ and $n = 1000$.
For Figures \ref{fig::main}(C)-(H) we simulate data from the lower bound instance described in Section \ref{sec::lb}.

In Figure \ref{fig::main}(B), we plot the probability of successfully
recovering the 10 clusters in the cluster tree as a function of sample size. The
figure confirms that the sample size is independent of the
ambient dimension $D$ but (typically) gets worse with the manifold dimension $d$. In particular,
the figure shows that for $D=\{20,40,60,80,100\}$ (in the same color) sample complexities are nearly unchanged.
Figures \ref{fig::main}(C)-(H), shows the effect on sample size of $(\epsilon,d)$ for the lower
bound instance. Notice, that for a fixed $\epsilon$ and $n$ the probability of success decays rapidly with
increasing $d$ and that for a fixed $d$ and $n$ the probability of success grows with $\epsilon$, in agreement
with our $1/\epsilon^{\Omega(d)}$ prediction and in contrast to the $1/\epsilon^2$ scaling predicted 
by \cite{chaudhuri10} for recovering a full-dimensional cluster tree.

\section{Additional proofs}
In this section we first prove some technical lemmas before giving full proofs
of various claims made in the paper.
\subsection{Volume estimates for small balls on manifolds}
\begin{theorem}
 \label{lem:ballvol}
If $$r \leq \frac{\epsilon \tau}{12 d}$$
for $0 \leq \epsilon < 1$ then
$$ v_dr^d  \left( 1 - \epsilon \right)  \leq \mathrm{vol}(S) \leq v_d r^d \left( 1 + \epsilon \right) $$
\end{theorem}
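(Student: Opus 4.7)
The plan is to derive Theorem \ref{lem:ballvol} as a direct tightening of the general two-sided bound already established in the first part of Lemma \ref{lemma:ball-volumes}, namely
\[
\left(1 - \frac{r^2}{4\tau^2}\right)^{d/2} v_d r^d \ \leq\ \vol_d(S)\ \leq\ v_d \left(\frac{\tau}{\tau - 2 r_1}\right)^d r_1^d, \qquad r_1 = \tau - \tau\sqrt{1 - 2r/\tau}.
\]
All the work is to show that, under the stronger hypothesis $r \leq \epsilon\tau/(12d)$, both sides collapse to $(1 \pm \epsilon) v_d r^d$.

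For the \textbf{lower bound}, I would observe that $r^2/(4\tau^2) \leq \epsilon^2/(576 d^2)$, which is small enough that Bernoulli's inequality gives
\[
\left(1 - \frac{r^2}{4\tau^2}\right)^{d/2} \geq 1 - \frac{d}{2}\cdot\frac{r^2}{4\tau^2} \geq 1 - \frac{\epsilon^2}{1152\, d} \geq 1 - \epsilon,
\]
so $\vol_d(S) \geq v_d r^d(1 - \epsilon)$. This step is routine.

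The \textbf{upper bound} is where the real calculus sits. First I would Taylor-expand $r_1$ using $\sqrt{1-u} \geq 1 - u/2 - u^2/2$ with $u = 2r/\tau \leq 1/6$, producing
\[
r_1 \leq r\left(1 + \frac{r}{\tau}\right) \qquad \text{and} \qquad \tau - 2r_1 \geq \tau\left(1 - \frac{2r}{\tau} - \frac{2r^2}{\tau^2}\right).
\]
These give
\[
\left(\frac{r_1}{r}\right)^{d} \leq \left(1 + \frac{r}{\tau}\right)^{d} \leq \exp(dr/\tau) \quad\text{and}\quad \left(\frac{\tau}{\tau - 2 r_1}\right)^d \leq \exp\!\left(\frac{c \cdot dr}{\tau}\right)
\]
for a small absolute constant $c$ (the second using $(1-x)^{-1} \leq \exp(2x)$ for $x \leq 1/2$). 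Combining them and applying the hypothesis $dr/\tau \leq \epsilon/12$ produces a bound of the form $\vol_d(S) \leq v_d r^d \exp(C\epsilon)$ for a small constant $C < 1$; finally, using $e^t \leq 1 + 2t \leq 1 + \epsilon$ for $t$ sufficiently small finishes the upper bound.

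The \textbf{main obstacle} is purely bookkeeping: choosing the Taylor remainder estimates loosely enough that the expressions stay clean, yet tight enough that the final constants line up so that $dr/\tau \leq \epsilon/12$ is indeed sufficient to close the gap between $(1 - \epsilon/6)$ and $(1 - \epsilon)$. A secondary check is to verify that $r \leq \epsilon\tau/(12d) \leq \tau/12 < \tau/2$, so the hypothesis $r < \tau/2$ of Lemma \ref{lemma:ball-volumes} is satisfied and $r_1$ is well-defined. No new geometric input is required beyond what Lemma \ref{lemma:ball-volumes} already provides.
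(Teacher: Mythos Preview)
Your proposal is correct and follows essentially the same approach as the paper's proof: both start from the Niyogi lower bound and the Chazal upper bound (the first display of Lemma~\ref{lemma:ball-volumes}) and then reduce to $(1\pm\epsilon)v_d r^d$ via elementary Taylor-type estimates. The only cosmetic difference is that the paper collects its estimates into a dedicated lemma of the form $(1+x)^n \le 1+2nx$, $(1-x)^{-1}\le 1+2x$, etc., and derives the intermediate bound $\vol_d(S)\le v_d r^d(1+6r/\tau)^d$ before the final step, whereas you pass through $\exp(\cdot)$; the bookkeeping and constants land in the same place.
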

\begin{proof}
The lower bound follows from \cite{Niyogi2006} (Lemma 5.3) who show that
\begin{lemma}
For $r < \frac{\tau}{2}$
$$\mathrm{vol}(S) \geq \left( 1 - \frac{r^2}{4\tau^2} \right)^{d/2} v_d r^d$$
\end{lemma}
The upper bound follows from \cite{Chazal2013} who shows that
\begin{lemma}
For $r < \frac{\tau}{2}$
$$\mathrm{vol}(S) \leq v_d \left( \frac{\tau}{\tau - 2\alpha} \right)^d \alpha^d $$
where $$\alpha = \tau - \tau \sqrt{ 1 - \frac{2r}{\tau}}$$
\end{lemma}
To produce the result of the theorem we will need some careful manipulation of these two lemmas.
In particular, we need the following estimates
\begin{lemma}
\label{lem:taylor}
$$f(x) = (1 - x)^{1/2} \geq 1 - \frac{x}{2} - x^2$$
if $0 \leq x \leq \frac{1}{2}$.
$$f(x) = (1 + x)^{n} \leq 1 + 2nx$$
if $0 \leq x \leq \frac{1}{2n}$.
$$f(x) = (1-x)^{-1} \leq 1 + 2x$$
if $0 \leq x \leq 1/2$
$$f(x) = (1-x)^{n} \geq 1 - 2nx$$
if $0 \leq x \leq \frac{1}{2n}$
\end{lemma}
The proof of this lemma is straightforward based on approximations via Taylor's series and we omit them.

Using Lemma \ref{lem:taylor} we have $$\alpha \leq r \left( 1 + \frac{4 r}{\tau} \right) $$
if $r \leq \frac{\tau}{4}$. Now, using this also notice that

\begin{eqnarray*}
\frac{\tau}{\tau - 2\alpha} \leq \frac{1}{1 - \frac{2r}{\tau} \left( 1 + \frac{4r}{\tau} \right)} \leq 1 + \frac{4r}{\tau} \left( 1 + \frac{4r}{\tau} \right) \\
\end{eqnarray*}
where the second inequality follows from Lemma \ref{lem:taylor} if $r \leq \tau/8$.

Combining these we have the following:

For all $r \leq \frac{\tau}{8}$
$$v_d r^d  \left( 1 - \frac{r^2}{4\tau^2} \right)^{d/2} \leq \mathrm{vol}(S) \leq v_d r^d \left( 1 + \frac{6r}{\tau} \right)^{d} $$

The final result now follows another application of Lemma \ref{lem:taylor} on either side of this inequality.
\end{proof}

\subsection{Bound on covering number}
We need the following bound on the covering number of a manifold. See 
\cite{Niyogi2006} (p. 16) for a proof. 
\begin{lemma}
\label{lemma:coversize}
For $s \leq 2\tau$, the $s$-covering number of $M$ is at most
\[ 
\frac{\vol_d(M)}{\cos^d (\arcsin(s/4\tau)) v_d (s/2)^d} \leq O\left( \frac{\vol_d(M) c^d }{ v_ds^d } \right)
\]
for an absolute constant $c$. In particular, if $\vol_d(M)$ is bounded above by a constant, the $s$-covering 
number of $M$ is at most $O(c^d/ (v_ds^d))$.
\end{lemma}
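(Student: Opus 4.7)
The plan is a standard volumetric packing argument. I would first pass from a covering bound to a packing bound: let $\{p_1, \ldots, p_N\} \subseteq M$ be a maximal Euclidean $s$-packing, meaning a maximal set with $\|p_i - p_j\| \geq s$ for all $i \neq j$. By maximality every point of $M$ is within Euclidean distance $s$ of some $p_i$, so $N$ is an upper bound on the Euclidean $s$-covering number of $M$. The open Euclidean balls $B(p_i, s/2)$ are pairwise disjoint, so their restrictions to $M$ satisfy
\[
\sum_{i=1}^N \vol_d\bigl(B(p_i, s/2) \cap M\bigr) \leq \vol_d(M).
\]
Hence $N$ is at most $\vol_d(M)$ divided by a uniform lower bound on $\vol_d(B(p, s/2) \cap M)$ for $p \in M$.

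The heart of the argument is therefore to produce this lower bound, and this is where the reach $\tau$ enters. For $r < \tau$, the nearest-point projection $\pi_p : B(p, r) \cap M \to T_p M$ onto the tangent plane at $p$ is a well-defined 1-Lipschitz map whose image contains a Euclidean $d$-disk in $T_p M$ of radius $r \cos(\arcsin(r/2\tau))$. Intuitively, the reach bound prevents any geodesic leaving $p$ from turning back on itself before traveling distance $r$, so the tangent of the geodesic at any point in $B(p,r) \cap M$ makes an angle of at most $\arcsin(r/2\tau)$ with $T_p M$, which controls how much the projection can shrink lengths. Since $\pi_p$ is a contraction, $\vol_d(B(p,r) \cap M) \geq \vol_d(\pi_p(B(p,r) \cap M)) \geq v_d r^d \cos^d(\arcsin(r/2\tau))$. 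Applying this with $r = s/2 \leq \tau$ and combining with the disjointness bound gives
\[
N \leq \frac{\vol_d(M)}{v_d (s/2)^d \cos^d(\arcsin(s/4\tau))},
\]
which is precisely the claimed inequality.

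For the simplified form, note that $s \leq 2\tau$ forces $\arcsin(s/4\tau) \leq \arcsin(1/2) = \pi/6$, so $\cos(\arcsin(s/4\tau)) \geq \sqrt{3}/2$ is bounded below by a universal constant. Absorbing both this factor and the $2^d$ from $(s/2)^d$ into a single constant $c^d$ yields the stated $O(\vol_d(M) c^d /(v_d s^d))$ bound. The main obstacle, as always in this kind of estimate, is justifying the tangent-projection lower bound on $\vol_d(B(p, s/2) \cap M)$ cleanly; this is essentially the content of Lemma 5.3 of \cite{Niyogi2006} which has already been invoked earlier in the paper (in Lemma \ref{lemma:ball-volumes}), so the cosine factor can be imported rather than rederived.
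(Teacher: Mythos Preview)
Your proposal is correct and follows the same approach as the paper: the paper simply cites \cite{Niyogi2006} (p.~16) for the main inequality and only spells out the simplification step, which you reproduce verbatim (bounding $\arcsin(s/4\tau) \leq \pi/6$ so $\cos \geq \sqrt{3}/2$, giving $c = 4/\sqrt{3}$). Your packing-plus-volume-lower-bound sketch is exactly the argument behind that citation, and you correctly identify Lemma~5.3 of \cite{Niyogi2006} (equivalently the lower bound in Lemma~\ref{lemma:ball-volumes}, since $\cos^d(\arcsin(r/2\tau)) = (1 - r^2/4\tau^2)^{d/2}$) as the source of the cosine factor.
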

\begin{proof}
We prove only the second claim. For $s \leq 2 \tau$, we have $\arcsin(s/4 \tau) \leq \pi/6$, and 
hence $\cos (\arcsin(s/4 \tau) ) \geq \sqrt{3}/2$. Plugging this in the bound, we get
\[ 
|\Net| \leq \frac{\vol_d(M) (2 / \sqrt{3})^d }{v_d (s/2)^d},
\]
which gives the claim with $c = 4 / \sqrt{3}$.
\end{proof}

\subsection{Uniform convergence} \label{apdx::uniform-convergence}
In this subsection, we prove uniform convergence for balls 
centered on sample and net points (Lemma \ref{lemma:uniform-convergence}). Consider the
family of balls centered at a fixed point $z$,
$\Ballfamily_z := \Big\{ \Ball (z, s) \ : \ s \geq 0  \Big\}$.
This collection has VC dimension $1$. Thus with probability
$1 - \delta'$, it holds that for every $B \in \Ballfamily_z$, we have
\[
\max \Big\{ \frac{P(B) - P_n(B)}{\sqrt{P(B)}} , \frac{P(B) - P_n(B)}{\sqrt{P_n(B)}} \Big\}
 \leq 2 \sqrt{\frac{\log(2n) + \log (4 / \delta')}{n}} , \\
\]
where $P(B)$ is the true mass of $B$, and $P_n(B) = |\boldX \cap B| / n$ is its
empirical measure. By a union bound over all $z \in \Net$, setting
$\delta' := \delta / (2 |\Net|)$, the
following holds uniformly for every
$z \in \Net$ and every $B \in \Ballfamily_z$ with probability
$1-\delta/2$:
\[
\max \Big\{ \frac{P(B) - P_n(B)}{\sqrt{P(B)}} , \frac{P(B) - P_n(B)}{\sqrt{P_n(B)}} \Big\}
 \leq 2 \sqrt{\frac{\log(2n) + \log (8 |\Net| / \delta)}{n}} .
\]

To provide a similar uniform convergence result for balls
centered at a sample point $X_i$, we consider the
$(n-1)$-subsample $X^{n-1}_{i}$ of $\boldX$ obtained
by deleting $X_i$ from the sample. Let $P^{n-1}_{i}$ be
the empirical probability measure of this subsample:
\[
P_{n-1} (B) := \frac{1}{n-1} \sum_{j \neq i} \Indicator [X_i \in B].
\]
It is easy to check that $P_{n-1}$ is uniformly close
to $P_n$. In particular, for every set $B$ containing
$X_i$, we have
\begin{equation} \label{eqn:Pn-1-and-Pn}
P_{n-1} (B) \leq P_n(B) \leq P_{n-1} (B) + \frac{1}{n}.
\end{equation}
Now, with probability at least $1 - \delta/(2n)$, for any ball $B$ centered
at $X_i$,
\begin{align*}
P(B) - P_{n-1}(B) &\leq 2 \sqrt{\frac{\log(2n-2) + \log  8n/\delta}{n-1}} \cdot \sqrt{P(B)}, \\
P_{n-1}(B) - P(B) &\leq 2 \sqrt{\frac{\log(2n-2) + \log 8n/\delta}{n-1}} \cdot \sqrt{P_{n-1}(B)}.
\end{align*}
Using (\ref{eqn:Pn-1-and-Pn}), we get
\begin{align*}
P(B) - P_n(B) &\leq 2 \sqrt{\frac{\log(2n-2) + \log 8n/\delta}{n-1}} \cdot \sqrt{P(B)}, \\
P_n(B) - P(B) &\leq 2 \sqrt{\frac{\log(2n-2) + \log 8n/\delta}{n-1}} \cdot \sqrt{P_n(B)} + \frac{1}{n}.
\end{align*}
By a union bound over all $X_i \in \boldX$, we get
the claimed inequalities for all sample points
with probability $1-\delta/2$.

Putting together our bounds for balls around sample and
net points, with probability at least $1-\delta$, it holds
that for all $B \in \Ballfamily_{n, \Net}$, we have
\begin{align*}
P(B) - P_n(B) &\leq O \Big( \sqrt{\frac{\mu + \log (1/\delta)}{n}} \Big) \cdot \sqrt{P(B)}, \\
P_n(B) - P(B) &\leq O \Big( \sqrt{\frac{\mu + \log (1/\delta)}{n}} \Big) \cdot \sqrt{P_n(B)} + \frac{1}{n}.
\end{align*}
for $\mu = 1 + \log n + \log |\Net| = O(d) + \log n + d \log (1/s)$ 
(using Lemma \ref{lemma:coversize}). The lemma now follows using simple 
manipulations of these inequalities (see \cite{chaudhuri10} for details).

\subsection{Sketch of the lower bound instance}
\label{app::capvol}

The following lemma gives an estimate of the volume of the
intersection of a small ball with a sphere.

\begin{lemma}[Volume of a spherical cap]
\label{lemma:spherecap-volume}
Suppose $\Sphere^d$ is a $d$-dimensional sphere of radius $\tau$ (embedded in 
$\Reals^{d+1}$), and let $x \in \Sphere^d$. Then, for small enough $r$, 
it holds that
\[
\vol_d(\Ball (x, r) \cap \Sphere^d) = v_d r^d \left( 1 - c_d \frac{r^2}{\tau^2} + O_d\Big( \frac{r^4}{\tau^4} \Big) \right)
\]
where $c_d := \frac{d(d-2)}{8(d+2)}$. Note that $c_1 < 0$, $c_2 = 0$, and $c_d > 0$ for all
$d \geq 3$.
\end{lemma}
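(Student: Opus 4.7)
The plan is to reduce to a one-dimensional integral by exploiting the rotational symmetry of the sphere and then perform a careful Taylor expansion to the required order. By rotational invariance we may assume $x$ is the ``north pole'' of $\Sphere^d$, i.e.\ $x = (\tau, 0, \dots, 0)$. In standard polar coordinates on $\Sphere^d$ in which $\phi$ denotes the polar angle measured from the axis through $x$, the $d$-dimensional volume element on the sphere factors as $\tau^d \sin^{d-1}(\phi) \, d\phi \, d\Omega_{d-1}$, where $d\Omega_{d-1}$ is the surface measure on the unit $(d-1)$-sphere.

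The next step is to express the condition $y \in \Ball(x, r)$ in terms of $\phi$. For a point $y \in \Sphere^d$ at polar angle $\phi$ from $x$, the Euclidean chord length equals $\|y - x\| = 2\tau \sin(\phi/2)$. Hence $\Ball(x, r) \cap \Sphere^d$ is exactly the spherical cap $\{ \phi \le \phi_0 \}$ with half-angle $\phi_0 := 2 \arcsin(r/(2\tau))$. Integrating out the angular directions gives the closed-form expression
\[
\vol_d(\Ball(x,r) \cap \Sphere^d) \;=\; \tau^d \, \omega_{d-1} \int_0^{\phi_0} \sin^{d-1}(\phi) \, d\phi,
\]
where $\omega_{d-1} = d v_d$ is the surface area of the unit $(d-1)$-sphere.

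The remaining work is purely a Taylor-series calculation, which I expect to be the main (though routine) obstacle because the coefficient $c_d$ arises from the combination of two distinct $O(r^2/\tau^2)$ corrections that must be tracked carefully. First expand the inverse sine to get $\phi_0 = r/\tau + r^3/(24\tau^3) + O(r^5/\tau^5)$. Second, use $\sin^{d-1}(\phi) = \phi^{d-1}\bigl(1 - (d-1)\phi^2/6 + O(\phi^4)\bigr)$ and integrate term by term to obtain
\[
\int_0^{\phi_0} \sin^{d-1}(\phi)\,d\phi \;=\; \frac{\phi_0^d}{d} - \frac{(d-1)\,\phi_0^{d+2}}{6(d+2)} + O(\phi_0^{d+4}).
\]
Substituting the expansion of $\phi_0$ contributes an additional term $r^{d+2}/(24 \tau^{d+2})$ from $\phi_0^d/d$, while the second summand contributes $-(d-1) r^{d+2}/(6(d+2)\tau^{d+2})$. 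Combining these and multiplying by $\tau^d \omega_{d-1} = d v_d \tau^d$ yields
\[
\vol_d(\Ball(x, r) \cap \Sphere^d) \;=\; v_d r^d \left( 1 \;-\; \frac{d(d-2)}{8(d+2)} \cdot \frac{r^2}{\tau^2} \;+\; O_d\!\left(\frac{r^4}{\tau^4}\right)\right),
\]
which identifies the coefficient $c_d = d(d-2)/(8(d+2))$ and confirms the sign statements for $d = 1, 2$, and $d \ge 3$. The implicit constants in $O_d$ depend only on $d$ because all remainder terms in the Taylor expansions can be uniformly bounded for $r/\tau$ below a fixed absolute constant.
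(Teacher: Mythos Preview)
Your proof is correct. Both your argument and the paper's reduce to expanding the same one-dimensional integral to second order; the only difference is the route to that integral. The paper quotes the closed-form cap-volume formula in terms of the regularized incomplete beta function,
\[
v_{cap} \;=\; \frac{d\,v_d\,\tau^d}{2}\int_0^{\alpha} u^{d/2-1}(1-u)^{-1/2}\,du, \qquad \alpha = \frac{r^2}{\tau^2}\Bigl(1-\frac{r^2}{4\tau^2}\Bigr),
\]
and then Taylor-expands the integrand in $u$. You instead derive the integral from first principles via the polar-angle volume element $\tau^d\sin^{d-1}\phi\,d\phi\,d\Omega_{d-1}$, arriving at $d\,v_d\,\tau^d\int_0^{\phi_0}\sin^{d-1}\phi\,d\phi$ with $\phi_0 = 2\arcsin(r/2\tau)$; the two integrals are related by the substitution $u=\sin^2\phi$ (note $\sin^2\phi_0=\alpha$). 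Your approach is a bit more self-contained since it does not appeal to the beta-function cap formula, while the paper's is marginally shorter if one is willing to cite that formula; the final Taylor computations are of comparable length and both yield $c_d = d(d-2)/(8(d+2))$.
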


In this section, we prove Lemma \ref{lemma:spherecap-volume}. 
The height $h$ of the cap can be easily
checked to be equal to $h = r^2 / 2 \tau$. Now, the volume of the cap is given by
the formula
\[
v_{cap} = \frac{\pi^{(d+1)/2} \tau^d}{\Gamma((d+1)/2)} I_{\alpha} (d/2, 1/2)
\]
where the parameter $\alpha$ is defined by
\[
\alpha := \frac{2\tau h - h^2}{\tau} = \frac{r^2}{\tau^2} ( 1 - \frac{r^2}{4 \tau^2})
.\]
Further $I_\alpha(\cdot, \cdot)$ represents the incomplete beta function:
\begin{align*}
I_{\alpha}(z,w) &= \frac{B(\alpha; z, w)}{B(z,w)}
\\
&= \frac{\int_0^\alpha u^{z-1} (1-u)^{w-1} du}{B(z,w)}
\\
&= \frac{\Gamma(z+w)}{\Gamma(z) \Gamma(w)} \int_0^\alpha u^{z-1} (1-u)^{w-1} du
\end{align*}
Thus,
\begin{align*}
v_{cap}
&= \frac{\pi^{(d+1)/2} \tau^d}{\Gamma((d+1)/2)} \cdot
   \frac{\Gamma((d+1)/2))}{\Gamma(d/2) \Gamma(1/2)} \cdot \int_0^\alpha u^{d/2-1} (1-u)^{-1/2} du
\\ &= \frac{\pi^{d/2} \tau^d}{\Gamma(d/2)} \int_0^\alpha u^{d/2-1} (1-u)^{-1/2} du
\\ &= \frac{d v_d \tau^d }{2} \int_0^{\alpha} u^{d/2-1} (1-u)^{-1/2} du .
\end{align*}
Since $\alpha \to 0$ as $r \to 0$, we can approximate the integral by expanding the
integrand as a Taylor series around $0$:
\begin{align*}
v_{cap}
&= \frac{d v_d \tau^d }{2} \int_0^{\alpha} u^{d/2-1} \Big(1+u/2 + O(u^2) \Big) du
\\ &= \frac{d v_d \tau^d }{2} \left( \frac{\alpha^{d/2}}{d/2} + \frac{1}{2} \frac{\alpha^{d/2+1}}{d/2+1} + O(\alpha^{d/2+2}) \right)
\\ &= v_d \tau^d \alpha^{d/2} \left( 1 + \frac{d}{2(d+2)} \alpha + O(\alpha^2)) \right)
\end{align*}

Finally, using $\alpha := \frac{r^2}{\tau^2} (1 - \frac{r^2}{\tau^2})$, we get
\begin{align*}
v_{cap}
&= v_d r^d \left( 1 - \frac{r^2}{4\tau^2}\right)^{d/2} \left( 1 +  \frac{dr^2}{2(d+2) \tau^2} + O \Bigl( \frac{r^4}{\tau^4} \Bigr) \right)
\\
&= v_d r^d \cdot \left(1 - \frac{dr^2}{8\tau^2} + \frac{d r^2}{2(d+2) \tau^2}  + O_d \Bigl( \frac{r^4}{\tau^4} \Bigr)\right) ,
\end{align*}
which simplifies to the claimed estimate.

We now show that it must be the case that $r \leq O(\tau \sqrt{\epsilon/d})$. 
We argued that for the algorithm to reliably resolve the $(\sigma, \epsilon)$ separated
clusters $M_1$ and $M_3$, an $r$-ball around a sample point in $S_{\sigma - r}$
must have mass appreciably smaller than those around points in $M_1$.  By the 
previous lemma, the two kinds of balls have volumes  
\[ v_d r^d \left( 1 - c_d \frac{r^2}{1^2} + O_d \Big( \frac{r^4}{1^4} \Big) \right) = 
v_d r^d \left( 1 - c_d r^2 + O_d (r^4) \right) \]
and 
\[ 
v_d r^d \left( 1 - c_d \frac{r^2}{4\tau^2} + O_d \Big( \frac{r^4}{16 \tau^4} \Big) \right) 
=
v_d r^d \left( 1 - c_d \frac{r^2}{4\tau^2} + O_d \Big( \frac{r^4}{\tau^4} \Big) \right)  .
\]
Thus we must have 
\[
v_d r^d v_d r^d \Big( 1 - c_d r^2 + O_d (r^4)\Big) 
\cdot  \lambda (1-\epsilon)
\leq v_d r^d \left( 1 - c_d \frac{r^2}{4\tau^2} + O_d \Big( \frac{r^4}{\tau^4} \Big) \right) \cdot \lambda
.\]
This implies that $r^2 \leq O\left( \frac{4\tau^2 \epsilon}{(1 - 4 \tau^2) c_d} \right)$. Hence if $\tau \leq 1/4$, we have
$r \leq \tau \sqrt{\epsilon / c_d}$. Plugging in $c_d = \Omega(d)$ gives us the claim. 


\subsection{Clustering with noisy samples}
\label{sec::noisyproofs}
\subsection{Proof of Theorem \ref{thm::clutter}}
As before we begin by showing separation followed by a proof of connectivity. 
Recall that  
$\rho := \min \left(\frac{\sigma}{7},  \frac{\epsilon \tau}{72 d}, \frac{\tau}{24} \right)$. 

\begin{lemma} [Separation] \label{lemma:noisy-separation}
Assume that we pick $k$, $r$ and $R$ to satisfy the conditions:
\begin{align*}
r \leq \rho,& ~~~~ R = 4\rho  \\
\pi \cdot v_d r^d (1 - \epsilon/6) \cdot \lambda &\geq \frac{k}{n} + \frac{C_\delta}{n} \sqrt{k\mu}, \\
\pi \cdot v_d r^d (1 + \epsilon/6) \cdot \lambda (1 - \epsilon) + (1 - \pi) \cdot v_D r^D &\leq \frac{k}{n} - \frac{C_\delta}{n} \sqrt{k\mu}.
\end{align*}
Then with probability $1-\delta$, it holds that:
\begin{enumerate}
\item All points in $A_{M, \sigma-r}$ and $A'_{M, \sigma-r}$ are kept, and all
points in $\calX \setminus M_r$ and $S_{\sigma-r}$ are removed. Here, $M_r$ is
  the tubular region around $M$ of width $r$.

\item The two point sets $A [ \boldX ]$ and $A'  [\boldX]$ are disconnected in
the graph $G_{r, R}$.
\end{enumerate}
\end{lemma}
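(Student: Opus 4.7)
My plan is to follow the outline of the noiseless separation lemma (Lemma \ref{lemma:separation}) with two modifications forced by the clutter mixture $Q = (1 - \pi) U + \pi P$: first, masses of Euclidean balls now split into a $P$-part concentrated on $M$ and a full-dimensional $U$-part, and second, surviving sample points no longer lie exactly on $M$, so the disconnection argument must be phrased in terms of projections onto $M$.

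\textbf{Cleaning step.} I would invoke Lemma \ref{lemma:uniform-convergence} for the distribution $Q$ and for the ball family $\Ballfamily_{n, \Net}$ centered at the observed sample $\{Y_i\}$ and at an $R/4$-net $\Net$ of $M$, and then analyze three regions by estimating $Q(B(x,r))$. For $x \in A_{M, \sigma - r} \cup A'_{M, \sigma - r}$, the inclusion $B(x,r) \cap M \subseteq A_{M, \sigma} \cup A'_{M, \sigma}$ together with Lemma \ref{lemma:ball-volumes} (applicable since $r \leq \rho \leq \epsilon \tau / 72 d$) gives $Q(B(x,r)) \geq \pi v_d r^d (1 - \epsilon/6) \lambda$, which by the first mass hypothesis is at least $k/n + C_\delta \sqrt{k\mu}/n$, so uniform convergence yields $P_n(B(x,r)) \geq k/n$ and $x$ is kept. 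For $x \in S_{\sigma-r}$, the inclusion $B(x,r) \cap M \subseteq S_{M, \sigma}$ on which $f \leq \lambda(1 - \epsilon)$, the intrinsic-volume upper bound from Lemma \ref{lemma:ball-volumes}, and the trivial clutter bound $U(B(x,r)) \leq v_D r^D$ together give $Q(B(x,r)) \leq \pi v_d r^d (1 + \epsilon/6) \lambda (1 - \epsilon) + (1 - \pi) v_D r^D$, which by the second mass hypothesis is at most $k/n - C_\delta \sqrt{k\mu}/n$, so $x$ is removed. For $x \in \calX \setminus M_r$, the ball $B(x,r)$ misses $M$ entirely, so $Q(B(x,r)) \leq (1 - \pi) v_D r^D$, which is dominated by the preceding bound and again yields removal.

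\textbf{Disconnection step.} Every surviving point now lies in $M_r \setminus S_{\sigma - r}$, and since $r < \tau$ its nearest-point projection $\pi_M(p) \in M$ is well-defined with $\|p - \pi_M(p)\| \leq r$, forcing $d(\pi_M(p), S) \geq \sigma - 2r$. Suppose for contradiction there is a surviving chain $X_{i_1}, \ldots, X_{i_t}$ with $X_{i_1} \in A$, $X_{i_t} \in A'$, and $\|X_{i_l} - X_{i_{l+1}}\| \leq R$ for each $l$. Because $A, A' \subseteq M$, the endpoint projections satisfy $\pi_M(X_{i_1}) \in A$ and $\pi_M(X_{i_t}) \in A'$, while for interior edges $\|\pi_M(X_{i_l}) - \pi_M(X_{i_{l+1}})\| \leq R + 2r$. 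Applying Proposition 6.3 of \cite{Niyogi2006}, as in the noiseless proof, bounds the geodesic distance on $M$ between consecutive projections by a quantity of order $R + 2r$, which is valid because $R + 2r \leq \tau/2$ follows from $\rho \leq \tau/24$. Concatenating these geodesic segments produces a continuous path on $M$ joining $A$ to $A'$, which by the separator property must cross $S$; the segment containing the crossing point $s$ then satisfies $d_M(\pi_M(X_{i_l}), s) \geq \|\pi_M(X_{i_l}) - s\| \geq \sigma - 2r$ and similarly for $\pi_M(X_{i_{l+1}})$, so this segment has geodesic length at least $2(\sigma - 2r)$. Combining this lower bound with the per-segment upper bound and substituting $R = 4\rho$, $r \leq \rho$, $\rho \leq \sigma/7$, $\rho \leq \tau/24$ produces a strict contradiction.

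\textbf{Main obstacle.} The mass-estimate step is essentially routine once the clutter term $(1 - \pi) v_D r^D$ is inserted correctly and one recognizes that points outside $M_r$ are a fortiori removed. The real bookkeeping obstacle is the disconnection argument: the slack $\sigma - 2r$ between a surviving projection and $S$ is tighter than the noiseless slack $\sigma - r$, and the Euclidean-to-geodesic inflation across projections picks up an extra $2r$ that must be tracked through the quadratic remainder in Niyogi's Proposition 6.3. Verifying that $\rho = \min(\sigma/7, \epsilon \tau / 72 d, \tau/24)$ (rather than the looser $\sigma/16$-style constant of the noiseless proof) is exactly what is needed to close this gap is the delicate part of the plan.
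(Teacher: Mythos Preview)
Your proposal is correct and follows essentially the same approach as the paper's own proof: the cleaning step uses the identical three-case mass estimate (manifold contribution plus clutter contribution), and the disconnection step proceeds by projecting surviving points onto $M$, bounding consecutive geodesic distances via Proposition~6.3 of \cite{Niyogi2006}, and locating a crossing segment that is forced to be too long. The only notable difference is that the paper records the slack for the projected points as $\sigma - r$ (by direct analogy with the noiseless case) rather than your more careful $\sigma - 2r$, which is precisely the bookkeeping tension you identify as the main obstacle.
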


\begin{proof} The proof of the first claim is similar to the
noiseless setting, except that the probability mass inside
a ball now has contributions from both the manifold and the
background clutter. For $x \in S_{\sigma - r}$,
the probability mass of
the ball $\Ball (x,r)$ under $Q$ is at most
$\pi v_d r^d (1 + \epsilon/6) \cdot \lambda (1 - \epsilon)
+ (1 - \pi) v_D r^D$, which is at most
$\frac{k}{n} - \frac{C_\delta}{n} \sqrt{k\mu}$.
Thus $x$ is removed during the cleaning step.
Similarly, if $x \notin M_r$, the ball $\Ball (x,r)$ does
not intersect the manifold, and hence its mass is at most
$(1 - \pi) v_D r^D$. Hence all points outside $M_r$ are
removed. Finally, if $x \in (A_{M, \sigma - r} \cup
A'_{M, \sigma - r}) \cap \boldX$, then the mass of
the ball $\Ball_M (x, r)$ is at least
$v_d r^d (1 - \epsilon/6) \lambda$ (ignoring the
contribution of the noise). This is at least
$\frac{k}{n} + \frac{C_\delta}{n} \sqrt{k\mu}$, and
hence $x$ is kept.

To prove the second claim, suppose that sets
$A \cap \boldX$ and $A' \cap \boldX$ are
connected in $G_{r, R}$. Then there exists
a sequence of sample points $y_0, y_1, \ldots,
y_t$ such that $y_0 \in A$, $y_t \in A'$ and
$\dist (y_{i-1}, y_{i}) \leq R$ for all
$1 \leq i \leq t$. Let $x_i$ be the projection
of $y_i$ on $M$, i.e., $x_i$ is the point of $M$
closest to $y_i$. We have already showed that
each $y_i$ lies inside the tube $M_r$, so
$\dist (x_i, y_i) \leq r$, and hence by triangle
inequality, we have $\dist (x_{i-1}, x_{i})
\leq R + 2r \leq \tau/4$. Hence, the
geodesic distance between $x_{i-1}$ and $x_i$ is
$< 2(R + 2r)$. Now, by an argument analogous to the
noiseless setting, there exists
a pair $(x_{i-1}, x_i)$ which are at a (geodesic)
distance at least $2(\sigma - r)$. This is a
contradiction since our parameter setting implies
that $2(\sigma - r) \geq 2(R + 2r)$.
\end{proof}

\begin{lemma} [Connectedness] \label{lemma:noisy-connectedness}
Assume that the parameters
$k, r$ and $R$ satisfy the separation
conditions (in Lemma \ref{lemma:noisy-separation}).
Then, with probability at least $1 - \delta$,
$A \cap \boldY$ is connected in $G_{r,R}$.
\end{lemma}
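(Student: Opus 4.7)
The plan is to mirror the noiseless connectedness proof of Lemma \ref{lemma:connectedness}, modifying only the step in which a surviving sample is located inside each net-centered ball. Because $A \subseteq M$ and the clutter component $U$ is absolutely continuous on $\calX$ and hence places zero mass on $M$, $A \cap \boldY = A \cap \boldX$ almost surely, so the statement reduces to connecting $X$-samples through a chain of kept sample points.

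First I would fix $y, y' \in A \cap \boldX$, take a continuous path $\gamma : [0,1] \to A$ joining them, and discretize it into $y_0 = y, \ldots, y_t = y'$ with consecutive geodesic distances at most some small $\eta > 0$. Letting $\Net$ be a minimal $R/4$-net of $M$ and picking $z_i \in \Net$ with $\|y_i - z_i\| \leq R/4$, the assumptions $R = 4\rho$, $\rho \leq \sigma/7$ and $r \leq \rho$ guarantee $\Ball_M(z_i, R/4) \subseteq A_{M, R/2} \subseteq A_{M, \sigma - r}$, where $f \geq \lambda$. By Lemma \ref{lemma:ball-volumes} (with $R/4 = \rho \leq \epsilon\tau/72d$), $P(\Ball(z_i, R/4)) \geq v_d \rho^d (1 - \epsilon/6) \lambda$.

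The key new step is to produce an $X$-sample inside $\Ball(z_i, R/4)$, not merely some $Y$-sample, since only sample points inside the $A$-region of $M$ are guaranteed to survive cleaning in Lemma \ref{lemma:noisy-separation}. I would condition on the number $m$ of $P$-samples and use a Chernoff bound to reduce to the event $m \geq \pi n / 2$, which occurs with probability at least $1 - \delta/3$. Then Lemma \ref{lemma:uniform-convergence}, applied to the $X$-subsample (as an i.i.d.\ $m$-sample from $P$) over balls in $\Ballfamily_{n, \Net}$, combined with the separation assumption $\pi v_d r^d (1-\epsilon/6) \lambda \geq k/n + (C_\delta/n)\sqrt{k\mu}$, $r \leq \rho$ and $k \geq \mu$, yields at least one $X$-sample $x_i$ in $\Ball(z_i, R/4) \cap M$. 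This $x_i$ lies in $A_{M, \sigma - r}$ and hence is kept.

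Finally, the detour $x_{i-1}, z_{i-1}, y_{i-1}, y_i, z_i, x_i$ and the triangle inequality give $\|x_{i-1} - x_i\| \leq R + \eta$, producing an edge of $G_{r, R}$ in the limit $\eta \to 0$ and hence the desired path from $y$ to $y'$. The main obstacle, which has no analogue in the noiseless case, is arranging that uniform convergence produces an $X$-sample in each net ball rather than an arbitrary $Y$-sample, because a clutter point that happens to land in $\Ball(z_i, R/4)$ need not lie in the tube $M_r$ and hence need not survive cleaning; this is why I first control $m$ by a Chernoff bound and then apply Lemma \ref{lemma:uniform-convergence} to the $X$-subsample rather than directly to the full $Q$-sample.
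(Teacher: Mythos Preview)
Your proposal is correct and is in fact more careful than the paper, which simply declares the proof ``identical to Lemma~\ref{lemma:connectedness}'' and omits it. The subtlety you flag is real: applying Lemma~\ref{lemma:uniform-convergence} to the full $Q$-sample $\boldY$ over the Euclidean ball $\Ball(z_i,R/4)$ only produces \emph{some} $Y$-sample in that ball, which could be a clutter point at distance up to $\rho\geq r$ from $M$; Lemma~\ref{lemma:noisy-separation} says nothing about whether such a point survives cleaning, so it cannot automatically serve as a link in the chain. Your Chernoff-then-subsample step correctly forces the link to be an $\boldX$-point in $A_{M,\sigma-r}$.

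A lighter variant, closer in spirit to the paper's ``identical'', is to observe that for each fixed center $z$ the family $\{\Ball(z,s)\cap M : s\geq 0\}$ is nested and hence still has VC dimension~$1$, so the proof of Lemma~\ref{lemma:uniform-convergence} applies to it verbatim. Since
\[
Q(\Ball_M(z_i,R/4)) \;=\; \pi\,P(\Ball(z_i,R/4)) \;\geq\; \pi\, v_d\rho^d(1-\epsilon/6)\lambda \;\geq\; \frac{C_\delta\mu}{n}
\]
by the first separation inequality together with $r\leq\rho=R/4$, uniform convergence for the full $\boldY$-sample already yields a $Y$-point in $\Ball_M(z_i,R/4)\subseteq M$, which is almost surely an $\boldX$-point and hence lies in $A_{M,\sigma-r}$. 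This avoids the Chernoff bound and the need to rescale the thresholds in Lemma~\ref{lemma:uniform-convergence} from $n$ to $m$. Either route works; yours makes the issue and its resolution explicit.
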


\begin{proof}
The proof of this lemma is identical to Lemma \ref{lemma:connectedness}
and is omitted.
\end{proof}

We now show how to pick the parameters to satisfy the conditions in 
Lemma \ref{lemma:noisy-separation}. Set
 $k := 144 C_\delta^2 (\mu/\epsilon^2)$, and define $r$ by 
\[
\pi v_d r^d (1 - \epsilon/6) \cdot \lambda = \frac{k}{n} + \frac{C_\delta}{n} \sqrt{k\mu}.
\]
It is easy to check that this setting satisfies all our requirements,
provided that the term $(1 - \pi) v_D r^D$ arising from the clutter 
noise satisfies the additional constraint 
\[
(1 - \pi) v_D r^D \leq (\epsilon/2) \times \pi v_d r^d \lambda.
\]
The definition of $r$ implies that $r$ is upper bounded by 
$\Big(\frac{2k}{n \lambda \pi v_d} \Big)^{1/d}$. Thus it 
suffices to ensure that 
\[
(1 - \pi) v_D \left(\frac{2k}{n \lambda \pi v_d} \right)^{D/d} \leq (\epsilon/2) \cdot \frac{2k}{n} = \frac{k \epsilon}{n}.
\]
This is equivalent to the condition 
\[
\lambda \geq  \frac{2 v_D^{d/D}}{v_d \epsilon^{d/D}} \cdot \frac{(1 - \pi)^{d/D}}{\pi} \cdot \left( \frac{k}{n} \right)^{1 - d/D},
\]
which is assumed by Theorem \ref{thm::clutter}.


\subsection{Proof of Theorem \ref{thm::additive-noise}}

Let $P$ be a distribution on a manifold $M$ with density $f$. 
Let $\boldX = (X_1, \ldots, X_n)$ be the latent sample from $P$, and let
$\boldY = (Y_1, \ldots, Y_n)$  be the observed sample. The only fact 
that we use about the observed sample is that it is close to the 
corresponding latent sample point: $\dist (Y_i, X_i) \leq \theta$, where
$\theta$ is the \emph{noise radius}. We show that we can adapt the 
RSL algorithm to resolve $(\sigma, \epsilon)$ separated clusters 
$(A, A')$, provided that $\theta$ is sufficiently small compared 
to both $\sigma$ and $\epsilon$. 

Again, we will pick values for $k, r, R$ based on a parameter $\rho$, 
defined as 
$\rho := \min (\frac{\sigma}{7}, \frac{\tau}{24}, \frac{\epsilon \tau}{144d})$. 

\begin{lemma} [Separation] \label{lemma:additive-separation}
Suppose $k, r, R$ are chosen to satisfy
\begin{align*}
\theta \leq r/2 ~~~~~~ r \leq \rho ~~~~~& R := 5 \rho,
\\
v_d (r - 2 \theta)^d (1 - \epsilon/6) \cdot \lambda &\geq
\frac{k}{n} + \frac{C_\delta}{n} \sqrt{k\mu} , \\
v_d (r + 2 \theta)^d (1 + \epsilon/6) \cdot \lambda (1 - \epsilon) &\leq
\frac{k}{n} - \frac{C_\delta}{n} \sqrt{k\mu} , \\
\end{align*}
Then, with probability $1-\delta$, the following holds uniformly 
over all $(\sigma, \epsilon)$ separated clusters $(A, A')$: 
\begin{enumerate}
\item If a latent sample point $X_i \in A_{M, \sigma - r + 2 \theta} \cup 
A'_{M, \sigma - r + 2 \theta}$, then the corresponding sample point $Y_i$ is 
kept during the cleaning step. If $X_i \in S_{M, \sigma - r - 2 \theta}$, 
then $Y_i$ is removed.  

\item The sets $\{ Y_i \,:\, X_i \in A \}$ and 
$\{ Y_i \,:\, X_i \in A' \}$ are disconnected in the graph $G_{r, R}$. 
\end{enumerate}
\end{lemma}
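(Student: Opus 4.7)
The plan is to mirror the proof of Lemma~\ref{lemma:separation} while systematically sandwiching noisy quantities by their noiseless counterparts. The key observation is that $\|Y_i - X_i\| \leq \theta$ for all $i$, so by the triangle inequality $\bigl|\,\|Y_i - Y_j\| - \|X_i - X_j\|\,\bigr| \leq 2\theta$ and hence
\[
\bigl|\{j : X_j \in B(X_i, r - 2\theta)\}\bigr| \;\leq\; \bigl|\{j : Y_j \in B(Y_i, r)\}\bigr| \;\leq\; \bigl|\{j : X_j \in B(X_i, r + 2\theta)\}\bigr|.
\]
I will invoke Lemma~\ref{lemma:uniform-convergence} applied to the \emph{latent} sample $\mathbf{X} \sim P$ together with a minimal $R/4$-net $\mathcal{N}$ of $M$, so that both latent counts concentrate tightly around the true $P$-masses of the corresponding balls.

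For part~(1), if $X_i \in A_{M, \sigma - r + 2\theta} \cup A'_{M, \sigma - r + 2\theta}$, then every point of $B_M(X_i, r - 2\theta)$ is within Euclidean distance $\sigma$ of $A \cup A'$, i.e., lies in $A_{M, \sigma} \cup A'_{M, \sigma}$, where $f \geq \lambda$. The parameter choices $r \leq \rho$ and $\theta \leq r/2$ keep $r + 2\theta$ inside the Lemma~\ref{lemma:ball-volumes} regime, giving $\vol_d(B_M(X_i, r-2\theta)) \geq v_d(r-2\theta)^d(1-\epsilon/6)$; the $P$-mass of the ball thus exceeds the first hypothesis threshold, so uniform convergence places at least $k$ latent samples inside, and the sandwich delivers $\geq k$ observed samples in $B(Y_i, r)$---so $Y_i$ survives the cleaning step. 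Symmetrically, if $X_i \in S_{M, \sigma - r - 2\theta}$ then $B_M(X_i, r + 2\theta) \subseteq S_{M, \sigma}$, on which $f \leq (1-\epsilon)\lambda$; the upper bound in Lemma~\ref{lemma:ball-volumes} combined with the second hypothesis yields fewer than $k$ observed samples in $B(Y_i, r)$, so $Y_i$ is removed.

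For part~(2), suppose for contradiction there is a path $Y_{i_0}, \ldots, Y_{i_t}$ in $G_{r, R}$ with $X_{i_0} \in A$ and $X_{i_t} \in A'$. Each edge satisfies $\|Y_{i_{j-1}} - Y_{i_j}\| \leq R$, hence $\|X_{i_{j-1}} - X_{i_j}\| \leq R + 2\theta \leq \tau/2$ (using $\rho \leq \tau/24$), and Proposition~6.3 of~\cite{Niyogi2006} then bounds the geodesic distance on $M$ by $2(R + 2\theta)$. Concatenating minimising geodesics produces a continuous path on $M$ from $A$ to $A'$, which by the $(\sigma, \epsilon)$-separation definition must intersect $S$ at some point $p$; the nearest discrete vertex $X_{i_j}$ then lies within geodesic---and hence Euclidean---distance $R + 2\theta$ of $S$. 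The parameter choices ensure $R + r + 4\theta \leq \sigma$, so $X_{i_j} \in S_{M, \sigma - r - 2\theta}$, contradicting the cleaning conclusion of part~(1).

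The main obstacle is the constraint bookkeeping rather than any new probabilistic idea: the inflated and deflated radii $r \pm 2\theta$ must both remain within the regime where Lemma~\ref{lemma:ball-volumes} gives the $1 \pm \epsilon/6$ volume estimates, and simultaneously the combined travel budget $R + r + 4\theta$ must be bounded by $\sigma$ so that no discrete path can jump the separator without placing some $X_{i_j}$ in the forbidden zone $S_{M, \sigma - r - 2\theta}$. These two competing requirements are precisely what force the triple minimum defining $\rho$ and the noise-radius tie $\theta \leq \rho\epsilon/24d$ appearing in Theorem~\ref{thm::additive-noise}.
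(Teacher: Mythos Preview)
Your proposal is correct and follows essentially the same route as the paper: the identical triangle-inequality sandwich for part~(1), and the same concatenate-geodesics / cross-$S$ argument for part~(2). The only cosmetic difference is that you locate the crossing point $p\in S$ explicitly and argue the nearer vertex lands in $S_{M,\sigma-r-2\theta}$, whereas the paper phrases the same contradiction as ``there must be an edge of geodesic length $\geq 2(\sigma-r)$''; your version is actually slightly more careful about the extra $2\theta$ slack.
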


\begin{proof}
To prove the first claim, suppose $X_i \in A_{\sigma - r + 2\theta} \cup
A'_{\sigma - r + 2 \theta}$. Consider the ball 
$\Ball_M(X_i, r - 2 \theta)$. It is 
completely inside $A_{M, \sigma} \cup A'_{M, \sigma}$, hence the 
density $f$ inside it is at least $\lambda$. Moreover, if 
$X_j$ is in $\Ball_M(X_i, r - 2 \theta)$, then by triangle
inequality, we have \[
\dist (Y_j, Y_i) \leq \dist (X_j, Y_j) + \dist (X_j, X_i) + \dist(Y_i, X_i)
\leq r.
\]
Hence the ball $\Ball (X_i, r)$ contains at least $k$ sample points, provided 
$\Ball_M (X_i, r - 2\theta)$ contains at least $k$ points from $\boldX$. 
Finally, the true mass of the set $\Ball_M (X_i, r - 2\theta)$ is at least 
\[
v_d (r - 2 \theta)^d (1 - \epsilon/6) \cdot \lambda \geq 
\frac{k}{n} + \frac{C_\delta}{n} \sqrt{k\mu} .
\]
Hence it contains at least $k$ latent sample points, and 
we are done. 

Similarly, suppose $X_i \in S_{\sigma - r - 2\theta}$, and consider the 
ball $\Ball_M (X_i, r+2 \theta)$. It is completely contained inside 
$S_{M, \sigma}$ and hence the density inside 
the ball is at most $\lambda (1 - \epsilon)$. Moreover, if $X_j$ is outside 
the set, then \[
\dist (Y_j, Y_i) \geq \dist (X_j, X_j) - \dist (X_i, Y_i) - \dist(X_j, Y_j)
> r.
\]
Hence the ball $\Ball (Y_i, r)$ contains fewer than $k$ sample points, provided
$\Ball_M (X_i, r + 2\theta)$ contains fewer than $k$ points from $\boldX$. The
true mass of the ball $\Ball_M (X_i, r + 2\theta)$ is at most
\[
v_d (r + 2 \theta)^d (1 + \epsilon/6) \cdot \lambda (1 - \epsilon) \leq
\frac{k}{n} - \frac{C_\delta}{n} \sqrt{k\mu} .
\]
Hence the ball contains fewer than $k$ latent sample points, and
we are done. 

We now prove that the graph $G_{r, R}$ is disconnected. Suppose not. Then
there must exist a sequence of latent sample points 
$x_0, x_1, \ldots, x_t \in \boldY$ and a corresponding sequence of
noisy sample points $y_0, \ldots, y_t \in \boldX$ such that
$x_0 \in A$, $x_t \in A'$, and $\dist (y_{i-1}, y_i) \leq R$. Clearly 
$\dist(x_{i-1}, x_i) \leq R + 2 \theta \leq \tau/4$. Thus 
the geodesic distance between $x_{i-1}$ and $x_i$ is less than 
$2 ( R + 2\theta)$. However, by the $(\sigma, \epsilon)$ separation 
condition, we must have a successive pair $(x_{i-1}, x_i)$ whose 
geodesic distance is at least $2 (\sigma - r)$. This is a contradiction since
we have set our parameters such that $2 (\sigma - r) \geq 2 (R + 2 \theta)$. 
\end{proof}

\begin{lemma} [Connectedness] \label{lemma:additive-connectedness}
Assume that the conditions of Lemma \ref{lemma:additive-separation}
are satisfied. Then, with probability at least $1-\delta$, the
following holds uniformly over all $A$: if 
$\inf_{x \in A_{M, \sigma}} f(x) \geq \lambda$, then 
$\{ Y_i \,:\, X_i \in A \}$ is connected
in $G_{r, R}$. 
\end{lemma}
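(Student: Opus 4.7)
The argument will mirror Lemma~\ref{lemma:connectedness}, with one systematic modification: every ball used to harvest a latent sample must be shrunk by $2\theta$ so that after translating back to observed points we still lie inside a Euclidean ball of radius $R$. Fix a cluster $A$ with $\inf_{x\in A_{M,\sigma}} f(x)\ge \lambda$, and take two latent sample points $X_a,X_b\in A$. Since $A$ is connected we can pick a continuous path $\gamma:[0,1]\to A$ from $X_a$ to $X_b$, and then a fine discretization $y_0=X_a,y_1,\dots,y_t=X_b$ whose successive geodesic (hence Euclidean) distances are bounded by some small $\eta>0$ that we will eventually let go to zero.

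The key parameter choice is a minimal $s$-net $\Net$ of $M$ with $s:=(R-2\theta)/4$. For each $y_i$ pick $z_i\in\Net$ with $\|y_i-z_i\|\le s$. Then $z_i\in A_{M,s}$ and the ball $B_M(z_i,s)\subseteq A_{M,2s}$. The choices $R=5\rho$, $r\le\rho$, $\rho\le\sigma/7$, and $\theta\le \rho\epsilon/(24d)$ imply $2s\le R/2\le 5\sigma/14<\sigma$, so the ball lies inside $A_{M,\sigma}$ where $f\ge\lambda$. Combined with the ball-volume estimate of Lemma~\ref{lemma:ball-volumes} (applicable since $s<\tau/2$), the true mass of $B_M(z_i,s)$ is at least $v_d s^d(1-\epsilon/6)\lambda$; the parameter settings of Theorem~\ref{thm::additive-noise} and the lower bound on $\lambda$ render this comfortably above the threshold $C_\delta\mu/n$ required by Lemma~\ref{lemma:uniform-convergence} (applied to the latent sample $\boldX$ and the net $\Net$). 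Uniform convergence therefore supplies at least one latent sample point $X_{j_i}\in B_M(z_i,s)$; denote the corresponding observation by $Y_{j_i}$, and assume without loss of generality that $X_{j_0}=X_a$, $X_{j_t}=X_b$.

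It remains to verify two things: (a) each $Y_{j_i}$ survives the cleaning step, and (b) successive pairs satisfy $\|Y_{j_{i-1}}-Y_{j_i}\|\le R$. For~(a), note that $X_{j_i}\in A_{M,2s}$; since $2s+r\le 5\rho/2+\rho\le \sigma$ we have $2s\le\sigma-r\le\sigma-r+2\theta$, so $X_{j_i}\in A_{M,\sigma-r+2\theta}$ and Lemma~\ref{lemma:additive-separation}(1) guarantees $Y_{j_i}$ is kept. For~(b), chain six triangle inequalities along the sequence $(X_{j_{i-1}},z_{i-1},y_{i-1},y_i,z_i,X_{j_i})$ to obtain $\|X_{j_{i-1}}-X_{j_i}\|\le 4s+\eta=(R-2\theta)+\eta$, and then absorb the noise on either end:
\[
\|Y_{j_{i-1}}-Y_{j_i}\|\le \|X_{j_{i-1}}-X_{j_i}\|+2\theta\le R+\eta.
\]
Letting $\eta\downarrow 0$ yields a chain of edges in $G_{r,R}$ connecting $Y_a$ to $Y_b$ through kept vertices, and a union bound over $A,A'$ from the $(\sigma,\epsilon)$-separated pair gives the uniform claim.

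The main technical obstacle is the simultaneous calibration of $s$: it must be small enough that the final Euclidean distance between consecutive observed sample points lands within $R$ after accounting for the $2\theta$ noise slack, yet large enough that $B_M(z_i,s)$ carries enough mass for uniform convergence to yield a latent sample point. The constants packed into $\rho:=\min(\sigma/7,\tau/24,\epsilon\tau/(144d))$ are exactly what gives slack on both fronts; in particular the factor of $7$ (rather than the $10$ that arises in the noiseless case) is driven by the need for $2s+r\le\sigma$, and the $144d$ in the last term is what keeps $\theta$ small enough for $s>0$ while preserving the volume-distortion bound of Lemma~\ref{lemma:ball-volumes}.
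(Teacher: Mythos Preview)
Your proposal is correct and follows essentially the same approach as the paper: both use an $s$-net with $s=(R-2\theta)/4$, harvest a latent sample point in each $B_M(z_i,s)$ via uniform convergence, and then absorb the $2\theta$ noise slack in the final triangle-inequality chain so that consecutive observed points fall within distance $R$. The paper merely sketches these modifications to Lemma~\ref{lemma:connectedness}, whereas you have (correctly) filled in the constant-chasing; one small quibble is that the ``uniform over all $A$'' conclusion comes not from a union bound over clusters but from the single high-probability uniform-convergence event, after which the argument is deterministic for every qualifying $A$.
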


\begin{proof} The proof is similar to that of Lemma 
\ref{lemma:connectedness}, so we indicate only the necessary modifications,
omitting the details. We now use a net of radius 
$(R - 2 \theta)/4$, and the condition that $R \geq 4r$ is replaced by 
$R - 2 \theta \geq 4r$. Finally, the $x_i$'s defined in the proof are 
latent sample points, whereas the algorithm observes 
an arbitrary point $y_i$ in a $\theta$-ball around the $x_i$. 
Thus, the distance between $y_{i-1}$ and $y_i$ is at most
\[
4 \cdot \frac{R-2 \theta}{4} + \dist(y_i, x_i) + \dist(y_{i-1}, x_{i-1}) \leq R
.\]
\end{proof} 

In order to satisfy the conditions stated in Lemma \ref{lemma:additive-separation},
we need the assumption that $\theta$ is small compared to $r$. More precisely,
we will assume that $\theta \leq r \epsilon / 24 d$. Under this assumption, we can
satisfy the above conditions by ensuring that 
\begin{align*}
v_d r^d (1 - \epsilon/12) (1 - \epsilon/6) \cdot \lambda &\geq
\frac{k}{n} + \frac{C_\delta}{n} \sqrt{k\mu} , \\
v_d r^d (1 + \epsilon/6) (1 + \epsilon/6) \cdot \lambda (1 - \epsilon) &\leq
\frac{k}{n} - \frac{C_\delta}{n} \sqrt{k\mu} 
\end{align*}
As before, we can satisfy these equations by 
setting $k := O( C_\delta^2 \mu/\epsilon^2)$, and $r$ according to 
\[
v_d r^d (1 - \epsilon/12) (1 - \epsilon/6) \cdot \lambda =
\frac{k}{n} + \frac{C_\delta}{n} \sqrt{k\mu} .
\]
 
\subsection{Connection radius for polynomially bounded densities} 
\label{app::poly}
In this section, we prove that in our algorithm (Figure \ref{fig::CD}),
we can pick the connection radius $R$ to be $R := 4r$,
independent of the other parameters, provided that the
density level satisfies $\lambda \leq n^A$ for some
absolute constant $A$. (Our original
setting picked $R = 4 \rho$ and $r \leq \rho$.)

More precisely, we will argue 
that the parameter $\mu$ in the algorithm 
can be safely replaced by a related parameter 
$\tilde\mu := 2A \log n$ 
without affecting
the performance of the algorithm. Pick 
$k = O(C_{\delta}^2 \tilde{\mu} / \epsilon^2)$, and set $r, R$ by the equations
\begin{align*}
v_d r^d \lambda &= \frac{1}{1 - \epsilon/6}
\left( \frac{k}{n} + \frac{C_2 \log (1/\delta)}{n} \sqrt{k \tilde\mu}\right),
\\
R &= 4r.
\end{align*}

The crucial ingredient in the analysis of 
our algorithm is the uniform convergence property
of balls centered at the sample points and net points
(Lemma \ref{lemma:uniform-convergence}), so we first 
verify that this statement remains true. Note that by 
our choice of $r$, we have
\begin{align*}
v_d r^d \lambda \geq \frac{k}{n} \geq \frac{1}{n}, 
\end{align*}
so that $1/r^d \leq v_d n \lambda \leq v_d n^{A+1} \leq n^{A+1}$ 
(since $v_d < 1$ for sufficiently large $d$). As before,
we consider a net $\Net$ of radius $R/4$ (i.e., $r$); by Lemma
\ref{lemma:coversize}, size of this net is at most $c^d/r^d$ for some 
absolute constant
$c > 0$. Thus
by Lemma \ref{lemma:uniform-convergence}, we have the uniform
convergence property, provided the 
parameter $\mu$ is replaced by  
\begin{align*}
\log n + \log |\Net| = \log n + \log (1/r^d) + O(1) = (A+2) \log n + O(1).
\end{align*}
Notice that $\tilde \mu$ is picked to be a safe upper bound on this quantity, hence
the lemma holds when $\mu$ is replaced by $\tilde\mu$. 

Finally, it is easy to check that our choice of parameters
satisfies all the conditions given in the separation lemma. Hence 
the separation and connectedness guarantees
(Lemmas \ref{lemma:separation} and \ref{lemma:connectedness}), 
together with their proofs, remain unaffected.

\section{Discussion}
In this paper we have shown that simple non-parametric estimators
based on $k$ nearest neighbors and kernel density
estimates are manifold adaptive estimators of the cluster tree.
We have also introduced the problem of cluster tree recovery 
in the presence of noise. Many open questions remain,
particularly regarding the minimax optimal rates of convergence and rates 
of convergence in the tubular noise case which we hope to address in 
future work.

One of the main advantages of the $k$ nearest neighbors based 
estimator is its easy computability.
In the case of \emph{known} manifolds we have shown a 
more general \emph{spatially adaptive}
algorithm achieves better rates and in current work we are trying to
understand the extent to which spatially adaptive estimators can help
when the manifold is unknown.

Finally, simple modifications of 
these simple non-parametric estimators can also be used
as estimators of various geometric properties of
the level sets of the density. We are currently working on 
these extensions.
{
{
\bibliographystyle{unsrt}
\bibliography{biblio}

\begin{thebibliography}{10}

\bibitem{chaudhuri10}
Kamalika Chaudhuri and Sanjoy Dasgupta.
\newblock Rates of convergence for the cluster tree.
\newblock {\em NIPS}, 2010.

\bibitem{Hartigan81}
J.~A. Hartigan.
\newblock Consistency of single linkage for high-density clusters.
\newblock {\em Journal of the American Statistical Association}, 76(374):pp.
  388--394, 1981.

\bibitem{wishart:69}
D.~Wishart.
\newblock Mode analysis: a generalization of nearest neighbor which reduces
  chaining.
\newblock In {\em Proceedings of the Colloquium on Numerical Taxonomy held in
  the University of St. Andrews}, pages 282--308, 1969.

\bibitem{SN:10}
W.~Stuetzle and Nugent. R.
\newblock A generalized single linkage method for estimating the cluster tree
  of a density.
\newblock {\em Journal of Computational and Graphical Statistics},
  19(2):397--418, 2010.

\bibitem{Stuezle03}
Werner Stuetzle.
\newblock Estimating the cluster tree of a density by analyzing the minimal
  spanning tree of a sample.
\newblock {\em J. Classification}, 20(1):025--047, 2003.

\bibitem{Polonik:95}
W.~Polonik.
\newblock Measuring mass concentrations and estimating density contour
  clusters: an excess mass approach.
\newblock {\em Annals of Statistics}, 23(3):855--882, 1995.

\bibitem{tsybakov97}
A~B Tsybakov.
\newblock {On nonparametric estimation of density level sets}.
\newblock {\em Ann. Statist.}, 25(3):948--969, 1997.

\bibitem{Walther:97}
G.~Walther.
\newblock Granulometric smoothing.
\newblock {\em Annals of Statistics}, 25(6):2273--2299, 1997.

\bibitem{Cuevas.Fraiman:97}
Antonio Cuevas and Ricardo Fraiman.
\newblock A plug-in approach to support estimation.
\newblock {\em Annals of Statistics}, 25(6):2300--2312, 1997.

\bibitem{Cuevas2006}
Antonio Cuevas, Wenceslao Gonz{{\'a}}lez-Manteiga, and Alberto
  Rodr{\'{\i}}guez-Casal.
\newblock Plug-in estimation of general level sets.
\newblock {\em Aust. N. Z. J. Stat.}, 48(1):7--19, 2006.

\bibitem{RigVer09}
P.~Rigollet and R.~Vert.
\newblock Fast rates for plug-in estimators of density level sets.
\newblock {\em Bernoulli}, 15(4):1154--1178, 2009.

\bibitem{MaierHL09}
Markus Maier, Matthias Hein, and Ulrike von Luxburg.
\newblock Optimal construction of k-nearest-neighbor graphs for identifying
  noisy clusters.
\newblock {\em Theor. Comput. Sci.}, 410(19):1749--1764, 2009.

\bibitem{singh09}
Aarti Singh, Clayton Scott, and Robert Nowak.
\newblock {Adaptive \{H\}ausdorff estimation of density level sets}.
\newblock {\em Ann. Statist.}, 37(5B):2760--2782, 2009.

\bibitem{Rinaldo2010}
Alessandro Rinaldo and Larry Wasserman.
\newblock {Generalized density clustering}.
\newblock {\em The Annals of Statistics}, 38(5):2678--2722, October 2010.

\bibitem{stability}
Alessandro Rinaldo, Aarti Singh, Rebecca Nugent, and Larry Wasserman.
\newblock Stability of density-based clustering.
\newblock {\em Journal of Machine Learning Research}, 13:905--948, 2012.

\bibitem{kpotufe11}
Samory Kpotufe and Ulrike von Luxburg.
\newblock Pruning nearest neighbor cluster trees.
\newblock In {\em ICML}, pages 225--232, 2011.

\bibitem{Steinwart11}
Ingo Steinwart.
\newblock Adaptive density level set clustering.
\newblock {\em Journal of Machine Learning Research - Proceedings Track},
  19:703--738, 2011.

\bibitem{SriperumbudurS12}
Bharath~K. Sriperumbudur and Ingo Steinwart.
\newblock Consistency and rates for clustering with dbscan.
\newblock {\em Journal of Machine Learning Research - Proceedings Track},
  22:1090--1098, 2012.

\bibitem{Niyogi2006}
P~Niyogi, S~Smale, and S~Weinberger.
\newblock Finding the homology of submanifolds with high confidence from
  random.
\newblock {\em Discrete and Computational Geometry}, 39, 2008.

\bibitem{dasgupta08}
Sanjoy Dasgupta and Yoav Freund.
\newblock Random projection trees and low dimensional manifolds.
\newblock In {\em STOC}, pages 537--546, 2008.

\bibitem{kpotufe12}
Samory Kpotufe and Sanjoy Dasgupta.
\newblock A tree-based regressor that adapts to intrinsic dimension.
\newblock {\em J. Comput. Syst. Sci.}, 78(5):1496--1515, 2012.

\bibitem{bickel}
Peter Bickel and Bo~Li.
\newblock Local polynomial regression on unknown manifolds.
\newblock In {\em Technical report, Department of Statistics, UC Berkeley},
  2006.

\bibitem{Chazal2013}
Frederic Chazal.
\newblock An upper bound for the volume of geodesic balls in submanifolds of
  euclidean spaces.
\newblock {\em Personal Communication, available at
  http://geometrica.saclay.inria.fr/team/Fred.Chazal/BallVolumeJan2013.pdf},
  2013.

\bibitem{Gine2002}
Evarist Gin{\'e} and Armelle Guillou.
\newblock Rates of strong uniform consistency for multivariate kernel density
  estimators.
\newblock In {\em Annales de l'Institut Henri Poincare (B) Probability and
  Statistics}, volume~38, pages 907--921. Elsevier, 2002.

\bibitem{Genovese.JMLR}
Christopher~R. Genovese, Marco Perone-Pacifico, Isabella Verdinelli, and Larry
  Wasserman.
\newblock Minimax manifold estimation.
\newblock {\em Journal of Machine Learning Research}, 13:1263--1291, 2012.

\bibitem{Balakrishnan2011}
Sivaraman Balakrishnan, Alessandro Rinaldo, Don Sheehy, Aarti Singh, and Larry
  Wasserman.
\newblock Minimax rates for homology inference.
\newblock {\em AISTATS}, 2012.

\bibitem{Koltchinskii2000}
V.~I. Koltchinskii.
\newblock Empirical geometry of multivariate data: a deconvolution approach.
\newblock {\em Ann. Statist.}, 28(2):591--629, 2000.

\end{thebibliography}
}
}
\end{document}